\def\eqref#1{equation~\ref{#1}}
\def\1{\bm{1}}
\def\eps{{\varepsilon}}
\def\indicator{\mathds{1}}
\DeclareMathAlphabet{\mathsfit}{\encodingdefault}{\sfdefault}{m}{sl}
\SetMathAlphabet{\mathsfit}{bold}{\encodingdefault}{\sfdefault}{bx}{n}
\def\gD{{\mathcal{D}}}
\def\gL{{\mathcal{L}}}
\def\gM{{\mathcal{M}}}
\def\gO{{\mathcal{O}}}
\def\gP{{\mathcal{P}}}
\def\gX{{\mathcal{X}}}
\def\gY{{\mathcal{Y}}}
\newcommand{\hty}{\hat{y}}
\newcommand{\sfL}{\mathsf{L}}
\newcommand{\sfS}{\mathsf{S}}
\newcommand{\sfU}{\mathsf{U}}
\newcommand{\Lap}{\mathrm{Lap}} %
\newcommand{\DLap}{\mathrm{DLap}} %
\DeclareMathOperator*{\E}{\mathbb{E}}
\newcommand{\R}{\mathbb{R}}
\DeclareMathOperator*{\argmin}{arg\,min}
\newcommand{\LabelDP}{\texttt{LabelDP}\xspace}
\def\Comments{0}  %
\newtheorem{theorem}{Theorem}
\newtheorem{lemma}[theorem]{Lemma}
\newtheorem{corollary}[theorem]{Corollary}
\newtheorem{assumption}[theorem]{Assumption}
\crefname{assumption}{Assumption}{Assumptions}
\newtheorem{claim}[theorem]{Claim}
\theoremstyle{definition}
\newtheorem{definition}[theorem]{Definition}
\definecolor{Gred}{RGB}{219, 50, 54}
\definecolor{Ggreen}{RGB}{60, 186, 84}
\definecolor{Gblue}{RGB}{72, 133, 237}
\definecolor{Gyellow}{RGB}{247, 178, 16}
\definecolor{ToCgreen}{RGB}{0, 128, 0}
\definecolor{myGold}{RGB}{231,141,20}
\definecolor{myBlue}{rgb}{0.19,0.41,.65}
\definecolor{myPurple}{RGB}{175,0,124}
\newcommand{\mytodo}[1]{\ifnum\Comments=1{#1}\fi}
\newcommand{\tableoftodos}{\ifnum\Comments=1 \listoftodos[Comments/To Do's] \fi}
\newcommand{\gYtilde}{\tilde{\gY}}
\newcommand{\gYhat}{\hat{\gY}}
\newcommand{\supp}{\mathrm{supp}}
\newcommand{\range}{\mathrm{range}}
\title{Regression with Label Differential Privacy
\thanks{Authors in alphabetical order.  
Email: \texttt{\{badih.ghazi, ravi.k53\}@gmail.com,} 
\texttt{\{pritishk, ethanleeman, pasin, avaradar, chiyuan\}@google.com}}
}
\author{
\\
\centerline{
\bf Badih Ghazi \hspace*{0.5cm}
\bf Pritish Kamath \hspace*{0.5cm}
\bf Ravi Kumar \hspace*{0.5cm}
\bf Ethan Leeman \hspace*{0.5cm}
}
\\[2mm]
\centerline{
\bf Pasin Manurangsi \hspace*{0.5cm}
\bf Avinash V Varadarajan \hspace*{0.5cm}
\bf Chiyuan Zhang \hspace*{0.5cm}
}
\\[2mm]
\centerline{Google}
}
\newcommand{\RRonBins}{\mathsf{RR}\text{-}\mathsf{on}\text{-}\mathsf{Bins}}
\newcommand{\zerooneloss}{\ell_{\text{0-1}}}
\newcommand{\logloss}{\ell_{\mathrm{log}}}
\newcommand{\poiloss}{\ell_{\mathrm{Poi}}}
\newcommand{\sqloss}{\ell_{\mathrm{sq}}}
\newcommand{\absloss}{\ell_{\mathrm{abs}}}
\newcommand{\mlap}{\gM^{\mathrm{Lap}}}
\newcommand{\NN}{\mathbb{N}}
\newcommand{\hy}{\hty}
\newcommand{\datran}{\mathsf{LabelRandomizer}}
\newcommand{\wmed}{\mathrm{wmed}}
\newcommand{\nc}{\newcommand}
\nc{\RRtopk}{\texttt{RRTop-$k$}\xspace}
\nc{\RRShort}{\texttt{RR}\xspace}
\newcommand{\ty}{\tilde{y}}
\begin{document}

\maketitle

\begin{abstract}
We study the task of training regression models with the guarantee of \emph{label}  differential privacy (DP). Based on a global prior distribution on label values, which could be obtained privately, we derive a label DP randomization mechanism that is optimal under a given regression loss function. We prove that the optimal mechanism takes the form of a ``randomized response on bins'', and propose an efficient algorithm for finding the optimal bin values. We carry out a thorough experimental evaluation on several datasets demonstrating the efficacy of our algorithm.

\end{abstract}

\section{Introduction}\label{sec:intro}

In recent years, differential privacy \citep[DP,][]{DworkKMMN06,DworkMNS06} has emerged as a popular notion of user privacy in machine learning (ML). On a high level, it guarantees that the output model weights remain statistically indistinguishable when any single training example is arbitrarily modified. Numerous DP training algorithms have been proposed, with open-source libraries tightly integrated in popular ML frameworks such as TensorFlow  Privacy~\citep{tf-privacy} and PyTorch Opacus~\citep{opacus}.

In the context of supervised ML, a training example consists of input features and a target label. While many existing research works focus on protecting both features and labels (e.g., \citet{abadi2016deep}), there are also some important scenarios where the input features are already known to the adversary, and thus protecting the privacy of the features is not needed. A canonical example arises from computational advertising where the features are known to one website (a publisher), whereas the conversion events, i.e., the labels, are known to another website (the advertiser).\footnote{A similar use case is in mobile advertising, where websites are replaced by apps.} Thus, from the first website’s perspective, only the labels can be treated as unknown and private.
This motivates the study of \emph{label DP} algorithms, where the statistical indistinguishability is required only when the label of a single example is modified.\footnote{We note that this label DP setting is particularly timely and relevant for ad attribution and conversion measurement given the deprecation of third-party cookies by several browsers and platforms \citep{safari, mozilla, chromium}.} The study of this model goes back at least to the work of \citet{chaudhuri2011sample}. Recently, several works including \citep{ghazi2021deep,malek2021antipodes} studied label DP deep learning algorithms for classification objectives.

\paragraph{Our Contributions.}
In this work, we study label DP for \emph{regression} tasks. We provide a new algorithm that, given a global prior distribution (which, if unknown, could be estimated privately), derives a label DP mechanism that is optimal under a given objective loss function. We provide an explicit characterization of the optimal mechanism for a broad family of objective functions including the most commonly used regression losses such as the Poisson log loss, the mean square error, and the mean absolute error.

More specifically, we show that the optimal mechanism belongs to a class of \emph{randomized response on bins} (\Cref{alg:rronbins}). We show this by writing the optimization problem as a linear program (LP), and characterizing its optimum.  With this characterization in mind, it suffices for us to compute the optimal mechanism among the class of randomized response on bins. We then provide an efficient algorithm for this task, based on dynamic programming (\Cref{alg:dp-optimal-bins-for-rronbins}).

In practice, a prior distribution on the labels is not always available. This leads to our two-step algorithm (\Cref{alg:rronbins-dataset}) where we first use a portion of the privacy budget to build an approximate histogram of the labels, and then feed this approximate histogram as a prior into the optimization algorithm in the second step; this step would use the remaining privacy budget. We show that as the number of samples grows, this two-step algorithm yields an expected loss (between the privatized label and the raw label) which is arbitrarily close to the expected loss of the optimal local DP mechanism. (We give a quantitative bound on the convergence rate.)

Our two-step algorithm can be naturally deployed in the two-party learning setting where each example is vertically partitioned with one party holding the features and the other party holding the (sensitive) labels. The algorithm is in fact one-way, requiring a single message to be communicated from the labels party to the features party, and we require that this one-way communication satisfies (label) DP. We refer to this setting, which is depicted in \Cref{fig:2p_sl}, as \emph{feature-oblivious label DP}. 

\begin{figure}[t]
\centering
\includegraphics[width=1\textwidth]{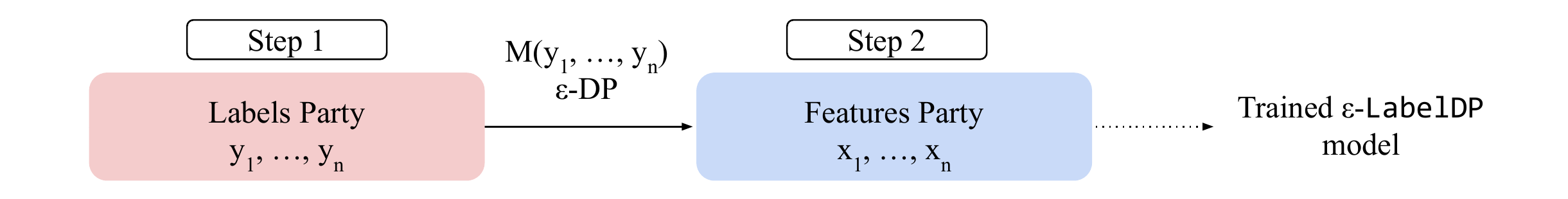}
\caption{Learning with feature-oblivious label DP.}
\label{fig:2p_sl}
\end{figure}

We evaluate our algorithm on three datasets: the 1940 US Census IPUMS dataset, the Criteo Sponsored Search Conversion dataset, and a proprietary app install ads dataset from a commercial mobile app store. We compare our algorithm to several baselines, and demonstrate that it achieves higher utility across all test privacy budgets, with significantly lower test errors for the high privacy regime. For example, for privacy budget $\eps=0.5$, comparing to the best baseline methods, the test MSE for our algorithm is $\sim 1.5\times$ smaller on the Criteo and US Census datasets, and the relative test error is $\sim 5\times$ smaller on the app ads dataset.

\paragraph{Organization.}
In \Cref{sec:prelims}, we recall some basics of DP and learning theory, and define the feature-oblivious label DP setting in which our algorithm can be implemented. Our label DP algorithm for regression objectives is presented in \Cref{sec:learning}. Our experimental evaluation and results are described in \Cref{sec:experiments}. A brief overview of related work appears in \Cref{sec:related_work}. We conclude with some interesting future directions in \Cref{sec:conc_future_dir}. Most proofs are deferred to the Appendix (along with additional experimental details and background material).

\section{Preliminaries}\label{sec:prelims}

We consider the standard setting of supervised learning, where we have a set of examples of the form $(x, y) \in \gX \times \gY$, drawn from some unknown distribution $\gD$ and we wish to learn a predictor $f_{\theta}$ (parameterized by $\theta$) to minimize $\gL(f_{\theta}) := \E_{(x,y)\sim \gD} \ell(f_{\theta}(x), y)$, for some loss function $\ell : \R \times \gY \to \R_{\ge 0}$; we will consider the case where $\gY \subseteq \R$. Some common loss functions include the {\em zero-one loss} $\zerooneloss(\tilde{y}, y) := \indicator[\tilde{y} \ne y]$, the {\em logistic loss} $\logloss(\tilde{y}, y) := \log(1 + e^{-\tilde{y}y})$ for binary classification, and the {\em squared loss} $\sqloss(\tilde{y}, y) := \frac12 (\tilde{y} - y)^2$, the {\em absolute-value loss} $\absloss(\tilde{y}, y) := |\tilde{y} - y|$, the {\em Poisson log loss} $\poiloss(\tilde{y}, y) := \tilde{y} - y \cdot \log(\tilde{y})$ for regression. This paper focuses on the regression setting.

However, we wish to perform this learning with differential privacy (DP). We start by recalling the definition of DP, which can be applied to any notion of \emph{adjacent} pairs of datasets. For an overview of DP, we refer the reader to the book of \citet{dwork2014algorithmic}.

\begin{definition}[DP; \citet{DworkMNS06}]
Let $\eps$ be a positive real number. A randomized algorithm $\mathcal{A}$ taking as input a dataset is said to be \emph{$\eps$-differentially private} (denoted \emph{$\eps$-DP}) if for any two adjacent datasets $X$ and $X'$, and any subset $S$ of outputs of $\mathcal{A}$, we have $\Pr[\mathcal{A}(X) \in S] \le e^{\eps} \cdot \Pr[\mathcal{A}(X') \in S]$.
\end{definition}

In supervised learning applications, the input to a DP algorithm is the training dataset (i.e., a set of labeled examples) and the output is the description of a predictor (e.g., the weights of the trained model).  Typically, two datasets are considered adjacent if they differ on a single training example; this notion protects both the features and the label of any single example. As discussed in \Cref{sec:intro}, in certain situations, protecting the features is either unnecessary or impossible, which motivates the study of label DP that we define next.

\begin{definition}[Label DP; \citet{chaudhuri2011sample}]
An algorithm $\mathcal{A}$ taking as input a training dataset is said to be \emph{$\eps$-label differentially private} (denoted as \emph{$\eps$-\LabelDP}) if it is $\eps$-DP when two input datasets are considered adjacent if they differ on the label of a single training example.
\end{definition}

We next define an important special case of training with label DP, which commonly arises in practice. The setting could be viewed as a special case of \emph{vertical federated learning}, where each training example is divided across different parties at the beginning of the training process; see, e.g., the survey of \cite{kairouz2019advances} and the references therein. We consider the special case where we only allow a single round of communication from the first party who holds all the labels, to the second party, who holds all the features, and who then trains the ML model and outputs it.
We refer to this setting (depicted in \Cref{fig:2p_sl}) as \emph{feature-oblivious label DP}  since the label DP property is guaranteed without having to look at the features (but with the ability to look at the labels from all users); we next give the formal definition.
\begin{definition}[Feature-Oblivious Label DP]\label{2p_labels_to_features}
Consider the two-party setting where the “features party” holds as input a sequence $(x_i)_{i=1}^n$ of all feature vectors (across all the $n$ users), and the “labels party” holds as input a sequence $(y_i)_{i=1}^n$ of the corresponding labels. The labels party sends a single message $M(y_1, \dots , y_n)$ to the features party; this message can be randomized using the internal randomness of the labels party. Based on its input and on this incoming message, the features party then trains an ML model that it outputs. We say that this output is \emph{feature-oblivious $\eps$-\LabelDP} if the message $M(y_1, \dots , y_n)$ is $\eps$-DP with respect to the adjacency relation where a single $y_i$ can differ across the two datasets.
\end{definition}

We stress the practical applicability of the setting described in \Cref{2p_labels_to_features}. The labels party could be an advertiser who observes the purchase data of the users, and wants to enable a publisher (the features party) to train a model predicting the likelihood of a purchase (on the advertiser) being driven by the showing of a certain type of ad (on the publisher). The advertiser would also want to limit the leakage of sensitive information to the publisher. From a practical standpoint, the simplest option for the advertiser is to send a single privatized message to the publisher who can then train a model using the features at its disposal. This is exactly the feature-oblivious label DP setting.

\section{Label DP Learning Algorithm}\label{sec:learning}

A common template for learning with label DP is to: (i) compute noisy labels using a local DP mechanism $\gM$, (ii) use a learning algorithm on the dataset with noisy labels. All of the baseline algorithms that we consider follow this template, through different ways of generating noisy labels, such as, (a) randomized response (for categorical labels), (b) (continuous/discrete) Laplace mechanism, (c) (continuous/discrete) staircase mechanism, etc. (for formal definitions of these mechanisms, see \Cref{sec:dp_mech_defs}). Intuitively, such a learning algorithm will be most effective when the noisy label mostly agrees with the true label.

Suppose the loss function $\ell$ satisfies the triangle inequality, namely that $\ell(\tilde{y}, y) \le \ell(\hty, y) ~+~ \ell(\tilde{y}, \hty)$ for all $y, \hty, \tilde{y}$. Then we have that
\begin{align}
\E_{(x,y)\sim \gD} \ell(f_{\theta}(x), y) &~\le~ \E_{\substack{y \sim P\\ \hty \sim \gM(y)}} \ell(\hty, y) ~+~ \E_{\substack{(x,y) \sim \gD\\ \hty \sim \gM(y)}} \ell(f_{\theta}(x), \hty)\,, \label{eq:triangle}
\end{align}
where, for ease of notation, we use $P$ to denote the marginal on $y$ for $(x, y) \sim \gD$.
The learning algorithm, in part (ii) of the template above, aims to minimize the second term in the RHS of (\ref{eq:triangle}). Thus, it is natural to choose a mechanism $\gM$, in part (i) of the template, to minimize the first term in the RHS of (\ref{eq:triangle}).\footnote{Note that the first term in the RHS of (\ref{eq:triangle}) is a constant, independent of the number of training samples. Therefore, this upper bound is not vanishing in the number of samples, and hence this inequality can be quite loose.} 
\cite{ghazi2021deep} studied this question for the case of 0-1 loss and showed that a randomized response on top $k$ labels with the highest prior masses (for some $k$) is optimal. Their characterization was limited to this \emph{classification} loss. In this work, we develop a characterization for a large class of \emph{regression} loss functions.

\subsection{Randomized Response on Bins: An Optimal Mechanism}\label{subsec:rr-on-bins}

We propose a new mechanism for generating noisy labels that minimizes the first term in the RHS of (\ref{eq:triangle}). Namely, we define {\em randomized response on bins} ($\RRonBins^\Phi_{\eps}$), which is a randomized algorithm parameterized by a scalar $\eps >0$ and a {\em non-decreasing} function $\Phi : \gY \to \gYhat$ that maps the label set $\gY \subseteq \R$ to an output set $\gYhat \subseteq \R$. This algorithm is simple: perform $\eps$-randomized response on $\Phi(y)$, randomizing over $\gYhat$; see \Cref{alg:rronbins} for a formal definition. Any $\Phi$ we consider will be non-decreasing unless otherwise stated, and so we often omit mentioning it explicitly.

An important parameter in $\RRonBins$ is the mapping $\Phi$ and the output set $\gYhat$. We choose $\Phi$ with the goal of minimizing $\E \ell(\hty, y)$. First we show, under some basic assumptions about $\ell$, that for any given distribution $P$ over labels $\gY$, there exists a non-decreasing map $\Phi$ such that $\gM = \RRonBins^{\Phi}_{\eps}$ minimizes $\E_{y \sim P, \hty \sim \gM(y)} \ell(\hty, y)$. Since $\Phi$ is non-decreasing, it follows that $\Phi^{-1}(\hty)$ is an interval\footnote{An \emph{interval} of $\gY$ is a subset of the form $[a, b] \cap \gY$, for some $a, b \in \R$, consisting of consecutive elements on $\gY$ in sorted order.} of $\gY$, for all $\hty \in \gYhat$. Exploiting this property, we give an efficient algorithm for computing the optimal $\Phi$.

To state our results formally, we use $\gL(\gM; P)$ to denote $\E_{y \sim P, \hty \sim \gM(y)} \ell(\hty, y)$, the first term in the RHS of (\ref{eq:triangle}). Our results hold under the following natural assumption; note that all the standard loss functions such as squared loss, absolute-value loss, and Poisson log loss satisfy \Cref{ass:loss-fn}.

\begin{assumption}\label{ass:loss-fn}
	Loss function $\ell: \R \times \R \to \R_{\ge 0}$ is such that
	\begin{itemize}[leftmargin=*, nosep]
            \item For all $y \in \R$, $\ell(\cdot, y)$ is continuous.
		\item For all $y \in \R$, $\ell(\hty, y)$ is decreasing in $\hty$ when $\hty \le y$ and increasing in $\hty$ when $\hty \ge y$.
		\item For all $\hty \in \R$, $\ell(\hty, y)$ is decreasing in $y$ when $y \le \hty$ and increasing in $y$ when $y \ge \hty$.
	\end{itemize}
\end{assumption}

{
\fontsize{10pt}{10pt}\selectfont
\begin{figure}[t]
\begin{minipage}[t]{0.41\textwidth}
\begin{algorithm}[H]
\caption{$\RRonBins^{\Phi}_{\eps}$.}
\label{alg:rronbins}
\begin{algorithmic}
\STATE {\bf Parameters:} $\Phi : \gY \to \gYhat$ (for label set $\gY$ and output set $\gYhat$), privacy parameter $\eps \ge 0$.
\STATE {\bf Input:} A label $y \in \gY$.
\STATE {\bf Output:} $\hty \in \gYhat$.
\STATE
\RETURN a sample $\hty \sim \hat{Y}$, where the random variable $\hat{Y}$ is distributed as
\[
\Pr[\hat{Y}= \hty] = \begin{cases} \frac{e^{\eps}}{e^{\eps} + |\gYhat| - 1} & \text{if } \hty = \Phi(y) \\ \frac{1}{e^{\eps} + |\gYhat| - 1} & \text{otherwise,}\end{cases} 
\]
for each $\hty \in \gYhat$
\end{algorithmic}
\end{algorithm}
\end{minipage}
\hspace*{2pt}
{\vrule width 0.5pt}
\hspace*{1pt}
\begin{minipage}[t]{0.56\textwidth}
\begin{algorithm}[H]
\caption{Compute optimal $\Phi$ for $\RRonBins^{\Phi}_{\eps}$.}
\label{alg:dp-optimal-bins-for-rronbins}
\begin{algorithmic}
\STATE {\bf Input:} Distribution $P$ over $\gY \subseteq \R$,  privacy param. $\eps \ge 0$, loss function $\ell:\R^2 \to \R_{\ge 0}$.
\STATE {\bf Output:} Output set $\gYhat \subseteq \R$ and $\Phi : \gY \to \gYhat$.
\STATE \vspace{-0.9mm}
\STATE $y^1, \dots, y^k \leftarrow$ elements of $\gY$ in increasing order
\STATE Initialize $A[i][j] \gets \infty$ for all $i, j \in \{0, \dots, k\}$
\FOR{$r, i \in \{1, \ldots, k\}$}
    \STATE $L[r][i] \gets \min_{\hty \in \R} \sum_{y \in \gY} p_y \cdot e^{\indicator[y \in [y^r, y^i]] \cdot \eps} \cdot \ell(\hty, y)$
\ENDFOR
\STATE $A[0][0] \leftarrow 0$
\FOR{$i \in \{1, \ldots, k\}$}
    \FOR{$j \in \{1, \ldots, i\}$}
        \STATE $A[i][j] \gets \min_{0 \le r < i} A[r][j-1] + L[r+1][i]$ %
    \ENDFOR
\ENDFOR
\RETURN $\Phi, \gYhat$ correspond. to $\min_{d \in [k]} \frac{1}{d - 1 + e^{\eps}} A[k][d]$
\end{algorithmic}
\end{algorithm}
\end{minipage}
\end{figure}
}
We can now state our main result:

\begin{theorem}\label{thm:rronbins-optimality}
For any loss function $\ell : \R \times \R \to \R_{\ge 0}$ satisfying \Cref{ass:loss-fn}, all finitely supported distributions $P$ over $\gY \subseteq \R$, there is an output set $\gYhat \subseteq \R$ and a non-decreasing map $\Phi : \gY \to \gYhat$ such that
\begin{align*}
\gL(\RRonBins^{\Phi}_{\eps}; P) ~=~ \inf_{\gM} \gL(\gM; P),
\end{align*}
where the infimum is over all $\eps$-DP mechanisms $\gM$.
\end{theorem}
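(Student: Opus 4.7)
The plan is to cast $\inf_\gM \gL(\gM; P)$ as a linear program (LP) in the transition kernel $q(\hty \mid y)$, analyze its vertex (extreme) solutions, and collapse their structure to the $\RRonBins^\Phi_\eps$ form using \Cref{ass:loss-fn}.

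First I would write the LP: minimize $\sum_{y, \hty} p_y\, q(\hty \mid y)\, \ell(\hty, y)$ subject to $q \ge 0$, the row sums $\sum_\hty q(\hty \mid y) = 1$, and local DP constraints $q(\hty \mid y) \le e^\eps q(\hty \mid y')$. Using continuity from \Cref{ass:loss-fn}, plus the fact that outside $[\min \gY, \max \gY]$ the loss $\ell(\hty, y)$ is monotone in $\hty$ uniformly in $y$, one shows that for every subset $S \subseteq \gY$ the function $\hty \mapsto \sum_y p_y\, \ell(\hty, y)\bigl[1 + (e^\eps - 1)\indicator[y \in S]\bigr]$ attains its minimum at some $\hty^*(S) \in [\min \gY, \max \gY]$. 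Restricting $\gYhat$ to the finite set $\{\hty^*(S) : S \subseteq \gY\}$ then preserves the infimum and makes the LP finite-dimensional.

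Next, the DP constraint forces each column $q(\hty \mid \cdot)$ into $[c_\hty, e^\eps c_\hty]$ for some $c_\hty \ge 0$, and at any vertex each entry attains one of the two extremes; so each column is described by a peak set $S_\hty := \{y : q(\hty \mid y) = e^\eps c_\hty\}$ together with $c_\hty$. The objective decomposes as $\sum_\hty c_\hty\bigl[g(\hty) + (e^\eps - 1) h(\hty, S_\hty)\bigr]$ with $g(\hty) := \sum_y p_y \ell(\hty, y)$ and $h(\hty, S) := \sum_{y \in S} p_y \ell(\hty, y)$, and the row-sum constraint rewrites as $\sum_{\hty : y \in S_\hty} c_\hty = T$ for some $T$ independent of $y$. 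I would then reduce such an extreme point to $\RRonBins^\Phi_\eps$ in two steps. \emph{(a)} A Birkhoff--von~Neumann-style decomposition of the ``weighted covering'' polytope cut out by the $T$-constraint expresses any feasible $\{(c_\hty, S_\hty)\}$ as a convex combination of partition-type configurations (the $S_\hty$'s partition $\gY$ and all $c_\hty = 1/(d - 1 + e^\eps)$ for $d$ parts); by linearity at least one such partition achieves the infimum. \emph{(b)} On partitions, I would apply a pairwise exchange: if cells $S_i, S_j$ have optimal outputs $\hty_i^* < \hty_j^*$ but contain $y_i \in S_i,\ y_j \in S_j$ with $y_i > y_j$, swapping them and re-optimizing does not increase the loss, since \Cref{ass:loss-fn} together with $\hty_i^*, \hty_j^* \in [\min \gY, \max \gY]$ makes $\ell(\hty_j^*, y) - \ell(\hty_i^*, y)$ non-increasing in $y$ on the relevant range, yielding the inequality $\ell(\hty_i^*, y_i) + \ell(\hty_j^*, y_j) \ge \ell(\hty_i^*, y_j) + \ell(\hty_j^*, y_i)$. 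Iterating eliminates all crossings and produces an interval partition, which is exactly $\RRonBins^\Phi_\eps$ for the non-decreasing $\Phi$ sending each cell to its optimal output.

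The main obstacle will be the interval-rearrangement step: extracting the single-crossing / anti-Monge inequality above from the relatively weak \Cref{ass:loss-fn} (which asserts only U-shaped monotonicity, not convexity) requires using both monotonicity conditions simultaneously together with the localization of the optimizers $\hty^*$ to $[\min\gY, \max\gY]$. Coordinating this with the partition reduction, which redistributes the column weights $c_\hty$ and could a priori break the pairwise-swap argument, is an additional subtlety; the cleanest route appears to be to perform the partition reduction first at the LP level, purely by convex-combination arguments, and only afterwards run the interval rearrangement on the resulting common-weight partition mechanism.
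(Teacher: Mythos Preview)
Your LP-and-vertex framework matches the paper's, but the two technical reductions you rely on both fail under \Cref{ass:loss-fn} as stated.

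The exchange inequality in step (b) is the cleanest gap. Take $\ell(\hty,y)=\phi(|\hty-y|)$ with $\phi(t)=t^2$ for $t\le 1$ and $\phi(t)=1+\alpha(t-1)$ for $t>1$, any small $\alpha>0$; this satisfies \Cref{ass:loss-fn}. With $\hty_i^*=0<\tfrac12=\hty_j^*$ and $y_i=5>\tfrac12=y_j$ one gets $\ell(\hty_i^*,y_i)+\ell(\hty_j^*,y_j)=1+4\alpha<\tfrac14+1+3.5\alpha=\ell(\hty_i^*,y_j)+\ell(\hty_j^*,y_i)$, so the swap \emph{increases} the loss. U-shaped monotonicity is strictly weaker than the anti-Monge / submodularity condition your pairwise exchange needs, and localizing the $\hty^*$'s to $[\min\gY,\max\gY]$ does not rescue it (all four points above lie in $[0,5]$). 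The paper avoids any such rearrangement inequality: working at an \emph{optimal} vertex, it introduces a signature matrix with entries $\sfU$ (column max), $\sfL$ (column min), $\sfS$ (slack), and shows that a crossed $2\times 2$ pattern $\bigl(\begin{smallmatrix}\sfL&\sfU\\\sfU&\sfL\end{smallmatrix}\bigr)$ is impossible because LP optimality (mass cannot be profitably shifted within a row) forces $\ell(\hty_1,y_1)>\ell(\hty_2,y_1)$ and $\ell(\hty_2,y_2)>\ell(\hty_1,y_2)$, which under a strict form of \Cref{ass:loss-fn} (later relaxed by a limiting argument) pins down $\hty_1<y_1<y_2<\hty_2$ and yields a contradictory cycle of strict inequalities. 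Optimality, not a loss-only four-point inequality, does the work.

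Step (a) has holes as well. Vertices of the LP need \emph{not} have every entry at one of the two column extremes: the polytope also has row-sum equalities and nonnegativity constraints, and the paper explicitly allows $\sfS$ entries at vertices, obtaining only a $k\times k$ \emph{submatrix} free of them via a count of linearly independent tight constraints. Moreover, the ``Birkhoff--von~Neumann-style'' decomposition of the covering polytope $\{c\ge0:\sum_{\hty:\,y\in S_\hty}c_\hty=T\ \text{for all }y\}$ into partition configurations is false in general; for instance, on $\gY=\{1,2,3\}$ with peak sets $\{1,2\},\{2,3\},\{1,3\}$ the unique feasible $c$ has all coordinates equal to $T/2$, which is not a convex combination of partitions with the same output labels. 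The paper instead derives the one-$\sfU$-per-row (hence partition) structure at the optimal vertex from its diagonal-submatrix lemma combined with a direct row-sum comparison, rather than by any convex decomposition.
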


\begin{proof}[Proof Sketch.]
This proof is done in two stages. First, we handle the case where $\gYhat$ is restricted to be a subset of $\gO$, where $\gO$ is a finite subset of $\R$. Under this restriction, the optimal mechanism $\gM$ that minimizes $\gL(\gM; \gP)$ can be computed as the solution to an LP. Since an optimal solution to an LP can always be found at the vertices of the constraint polytope, we study properties of the vertices of the said polytope and show that the minimizer amongst its vertices  necessarily takes the form of $\RRonBins^{\Phi}_{\eps}$. To handle the case of general $\gYhat$, we first note that due to \Cref{ass:loss-fn}, it suffices to consider $\gYhat \subseteq [y_{\min}, y_{\max}]$ (where $y_{\min} = \min_{y \in \gY} y$ and $y_{\max} =\max_{y \in \gY} y$). Next, we consider a sequence of increasingly finer discretizations of $[y_{\min}, y_{\max}]$, and show that $\RRonBins^{\Phi}_{\eps}$ can come arbitrarily close to the optimal $\gL(\gM; P)$ when restricting $\gYhat$ to be a subset of the discretized set. But observe that any $\Phi$ induces a partition of $\gY$ into intervals. Since there are only finitely many partitions of $\gY$ into intervals, it follows that in fact $\RRonBins^{\Phi}_{\eps}$ can exactly achieve the optimal $\gL(\gM; P)$. The full proof is deferred to \Cref{apx:rronbins-optimality}.
\end{proof}

Given \Cref{thm:rronbins-optimality}, it suffices to focus on $\RRonBins^{\Phi}_{\eps}$ mechanisms and optimize over the choice of $\Phi$. We give an efficient dynamic programming-based algorithm for the latter in \Cref{alg:dp-optimal-bins-for-rronbins}.
The main idea is as follows. We can breakdown the representation of $\Phi$ into two parts (i) a partition $\gP_\Phi = \{S_1, S_2, \ldots \}$ of $\gY$ into intervals\footnote{For convenience, we assume that $S_1, S_2, \dots$ are sorted in increasing order.}, such that $\Phi$ is constant over each $S_i$, and (ii) the values $y_i$ that $\Phi(\cdot)$ takes over interval $S_i$. %
Let $y^1, \dots, y^k$ be elements of $\gY$ in increasing order.
Our dynamic program has a state $A[i][j]$, which is (proportional to) the optimal mechanism if we restrict the input to only $\{y^1, \dots, y^i\}$ and consider partitioning into $j$ intervals $S_1, \dots, S_j$. To compute $A[i][j]$, we try all possibilities for $S_j$. Recall that $S_j$ must be an interval, i.e., $S_j = \{y^{r + 1}, \dots, y^i\}$ for some $r < i$. This breaks the problem into two parts: computing optimal $\RRonBins$ for $\{y^1, \dots, y^r\}$ for $j - 1$ partitions and solving for the optimal output label $\hy$ for $S_j$. The answer to the first part is simply $A[r][j - 1]$, whereas the second part can be written as a univariate optimization problem (denoted by $L[r][i]$ in \Cref{alg:dp-optimal-bins-for-rronbins}). %
When $\ell$ is convex (in the first argument), this optimization problem is convex and can thus be solved efficiently. Furthermore, for squared loss, absolute-value loss, and Poisson log loss, this problem can be solved in amortized $O(1)$ time, resulting in a total running time of $O(k^2)$ for the entire dynamic programming algorithm.
The full description is presented in \Cref{alg:dp-optimal-bins-for-rronbins}\footnote{We remark that the corresponding $\Phi, \gYhat$ on the last line can be efficiently computed by recording the minimizer for each $A[i][j]$ and $L[r][i]$, and going backward starting from $i = k$ and $j = \argmin_{d \in [k]} \frac{1}{d - 1 + e^{\eps}} A[k][d]$.}; the complete analyses of its correctness and running time are deferred to \Cref{app:dp-algo}.

\subsection{Estimating the Prior Privately}\label{subsec:estimate-prior}

\begin{figure}
\begin{algorithm}[H]
\caption{Labels Party's Randomizer $\datran_{\eps_1, \eps_2}$.}
\label{alg:rronbins-dataset}
\begin{algorithmic}
\STATE {\bf Parameters:} Privacy parameters $\eps_1, \eps_2 \ge 0$.
\STATE {\bf Input:} Labels $y_1, \dots, y_n \in \gY$.
\STATE {\bf Output:} $\hty_1, \dots, \hty_n \in \gYhat$.
\STATE
\STATE $P' \leftarrow \gM^{\Lap}_{\eps_1}(y_1, \dots, y_n)$
\STATE $\Phi', \gYhat' \leftarrow$ Result of running \Cref{alg:dp-optimal-bins-for-rronbins} with distribution $P'$ and privacy parameter $\eps_2$
\FOR{$i \in [n]$}
\STATE $\hy_i \leftarrow \RRonBins^{\Phi'}_{\eps_2}(y_i)$
\ENDFOR
\RETURN $(\hy_1, \dots, \hy_n)$ 
\end{algorithmic}
\end{algorithm}
\end{figure}

In the previous subsection, we assumed  that the distribution $P$ of labels is known beforehand. This might not be the case in all practical scenarios. Below we present an algorithm that first privately approximates the distribution $P$ and work with this approximation instead. We then analyze how using such an approximate prior affects the performance of the algorithm.

To formalize this, let us denote by $\mlap_{\eps}$ the $\eps$-DP Laplace mechanism for approximating the prior. %
Given $n$ samples drawn from $P$, $\mlap_{\eps}$ constructs a histogram over $\gY$ and adds Laplace noise with scale $2/\eps$ to each entry, followed by clipping (to ensure that entries are non-negative) and normalization. A formal description of $\mlap_{\eps}$ can be found in \Cref{alg:mlap} in the Appendix.

Our mechanism for the unknown prior case---described in \Cref{alg:rronbins-dataset}---is now simple: We split the privacy budget into $\eps_1, \eps_2$, run the $\eps_1$-DP Laplace mechanism to get an approximate prior distribution $P'$ and run the optimal $\RRonBins$ for privacy parameter $\eps_2$ to randomize the labels. It is not hard to see that the algorithm is $(\eps_1 + \eps_2)$-DP. (Full proof deferred to the Appendix.)

\begin{restatable}{theorem}{fullalgodp} \label{lem:full-algo-dp}
$\datran_{\eps_1, \eps_2}$ is $(\eps_1 + \eps_2)$-DP.
\end{restatable}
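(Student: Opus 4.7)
The plan is to view $\datran_{\eps_1, \eps_2}$ as a sequential composition of an $\eps_1$-DP stage that produces $(\Phi', \gYhat')$ and an $\eps_2$-DP stage that produces $(\hy_1, \dots, \hy_n)$ given $(\Phi', \gYhat')$, and then invoke basic composition. Since the output $(\hy_1, \dots, \hy_n)$ is a deterministic projection of the augmented tuple $(\Phi', \gYhat', \hy_1, \dots, \hy_n)$ and DP is closed under post-processing, it suffices to establish $(\eps_1 + \eps_2)$-DP for this augmented mechanism.

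For the first stage, $P' = \gM^{\Lap}_{\eps_1}(y_1, \dots, y_n)$ is $\eps_1$-\LabelDP: replacing a single label changes the histogram by $\ell_1$-distance at most $2$, so adding Laplace noise of scale $2/\eps_1$ to each entry is $\eps_1$-DP, and the subsequent clipping/normalization is post-processing. The pair $(\Phi', \gYhat')$ returned by \Cref{alg:dp-optimal-bins-for-rronbins} is a deterministic function of $P'$ (together with the public parameters $\eps_2$ and $\ell$), hence also $\eps_1$-DP.

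For the second stage, I would condition on any fixed realization $(\Phi', \gYhat') = (\phi, \hat{\gY})$. Given this, the coordinates are randomized independently as $\hy_j \sim \RRonBins^{\phi}_{\eps_2}(y_j)$. For two label vectors that differ only at coordinate $i$, all $\hy_j$ with $j \neq i$ have identical distributions, while for $\hy_i$ the definition of \Cref{alg:rronbins} gives a pointwise likelihood ratio of at most $e^{\eps_2}$ (the ratio of the two probabilities in the branch definition). So the map $(y_1, \dots, y_n) \mapsto (\hy_1, \dots, \hy_n)$ is $\eps_2$-\LabelDP conditional on any fixed $(\phi, \hat{\gY})$. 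Multiplying the two likelihood-ratio bounds and applying basic sequential composition gives the claim.

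The only subtlety worth flagging is that because the same $\Phi'$ is used to randomize every label, $\datran$ is not a pure local randomizer, and so one cannot simply cite local-DP composition per coordinate. The augmented-output trick is the clean way to disentangle this: the data-dependence of $\Phi'$ is charged entirely to the $\eps_1$ budget, while the per-label randomization is charged entirely to the $\eps_2$ budget, and both bounds hold simultaneously along the same neighbor pair.
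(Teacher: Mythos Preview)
Your proposal is correct and follows essentially the same approach as the paper: establish $\eps_1$-DP of $(\Phi',\gYhat')$ via the Laplace mechanism plus post-processing, then argue that for any fixed $(\Phi',\gYhat')$ the per-label $\RRonBins^{\Phi'}_{\eps_2}$ step is $\eps_2$-DP (the paper phrases this as ``parallel composition''), and conclude via basic composition. Your discussion of the augmented-output trick and the shared-$\Phi'$ subtlety is a nice elaboration but does not depart from the paper's argument.
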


Let us now discuss the performance of the above algorithm in comparison to the case where the prior $P$ is known, under the assumption that $y_1, \dots, y_n$ are sampled i.i.d. from $P$. We show in the following theorem that the difference between the expected population loss in the two cases converges to zero as the number of samples $n \to \infty$, provided that $\gY$ is finite:

\begin{restatable}{theorem}{chooseeps}\label{thm:choose-eps}
Let $\ell : \R \times \R \to \R_{\ge 0}$ be any loss function satisfying \Cref{ass:loss-fn}. Furthermore, assume that $\ell(\hy, y) \leq B$ for some parameter $B$ for all $y, \hy \in \gY$.
For any distribution $P$ on $\gY$, $\eps > 0$, and any sufficiently large $n \in \NN$, there is a choice of $\eps_1, \eps_2 > 0$ such that $\eps_1 + \eps_2 = \eps$ and
$$\E_{\substack{y_1, \dots, y_n \sim P \\ P', \Phi', \gYhat'}}[\gL(\RRonBins^{\Phi'}_{\eps_2}; P)] - \inf_{\gM} \gL(\gM; P)  \leq O\left(B \cdot \sqrt{|\gY|/n}\right),$$
where $P', \Phi', \gYhat'$ are as in $\datran_{\eps_1, \eps_2}$ and the infimum is over all $\eps$-DP mechanisms $\gM$.
\end{restatable}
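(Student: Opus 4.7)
I plan to decompose the excess loss $\E[\gL(\RRonBins^{\Phi'}_{\eps_2}; P)] - \OPT_\eps(P)$, where I write $\OPT_\eta(P) := \inf_\gM \gL(\gM; P)$ with the infimum taken over all $\eta$-DP $\gM$, into three contributions: (i) the penalty for evaluating a mechanism tuned to the noisy prior $P'$ against the true prior $P$; (ii) the penalty for using only the $\eps_2$-DP budget instead of the full $\eps$; and (iii) the statistical/perturbation error between $P'$ and $P$.

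For (i), let $\gM^*_{\eps_2}$ achieve $\OPT_{\eps_2}(P)$. Applying \Cref{thm:rronbins-optimality} to $P'$ (which gives that $\RRonBins^{\Phi'}_{\eps_2}$ is $\eps_2$-DP optimal for $P'$), I would chain
\begin{align*}
\gL(\RRonBins^{\Phi'}_{\eps_2}; P)
&\le \gL(\RRonBins^{\Phi'}_{\eps_2}; P') + B \cdot \|P - P'\|_{\mathrm{TV}} \\
&\le \gL(\gM^*_{\eps_2}; P') + B \cdot \|P - P'\|_{\mathrm{TV}} \\
&\le \gL(\gM^*_{\eps_2}; P) + 2B \cdot \|P - P'\|_{\mathrm{TV}} = \OPT_{\eps_2}(P) + 2B \cdot \|P - P'\|_{\mathrm{TV}},
\end{align*}
where the outer inequalities use that for any fixed $\gM$, the map $Q \mapsto \gL(\gM; Q) = \sum_y Q(y)\,\E_{\hty \sim \gM(y)}\ell(\hty,y)$ is linear in $Q$ with $[0, B]$-valued coefficients, hence TV-Lipschitz with constant $B$.

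For (ii), let $\Phi^*$ achieve $\OPT_\eps(P)$, which by \Cref{thm:rronbins-optimality} is a $\RRonBins$ map with some output set $\gYhat^*$. Viewed as a function of the privacy parameter alone, $f(\eta) := \gL(\RRonBins^{\Phi^*}_\eta; P)$ has the closed form $f(\eta) = ((e^\eta - 1) A + T)/(e^\eta + m - 1)$ with $A := \sum_y P(y) \ell(\Phi^*(y), y) \in [0, B]$, $T := \sum_y P(y) \sum_{\hty \in \gYhat^*} \ell(\hty, y) \in [0, mB]$, and $m := |\gYhat^*|$. A short derivative calculation gives $f'(\eta) = e^\eta (Am - T)/(e^\eta + m - 1)^2$, and optimizing the resulting expression in $e^\eta$ shows $|f'(\eta)| = O(B)$ uniformly in $m$. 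Hence $\OPT_{\eps_2}(P) \le f(\eps_2) \le f(\eps) + O(B \eps_1) = \OPT_\eps(P) + O(B \eps_1)$, where $\eps_1 := \eps - \eps_2$.

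For (iii), letting $\hat P$ denote the empirical distribution on $y_1, \dots, y_n$, I would bound $\E\|P - P'\|_1 \le \E\|P - \hat P\|_1 + \E\|\hat P - P'\|_1$. A standard Cauchy--Schwarz argument on per-bin standard deviations gives $\E\|P - \hat P\|_1 \le \sqrt{|\gY|/n}$, while the Laplace step contributes $O(|\gY|/(n\eps_1))$ in expectation (scale-$2/\eps_1$ noise on each of $|\gY|$ counts divided by $n$; clipping and renormalization onto the simplex are $\ell_1$-contractions). Combining (i)--(iii) yields
\[
\E[\gL(\RRonBins^{\Phi'}_{\eps_2}; P)] - \OPT_\eps(P) \;\le\; O\!\left(B \eps_1 + B \sqrt{|\gY|/n} + \frac{B|\gY|}{n \eps_1}\right),
\]
and balancing the first and third terms with $\eps_1 = \Theta(\sqrt{|\gY|/n})$---a valid split $\eps_1 < \eps$ whenever $n = \Omega(|\gY|/\eps^2)$---yields the stated $O(B\sqrt{|\gY|/n})$ bound. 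I expect the main obstacle to be step (ii): the argument depends on showing that shrinking $\eps$ to $\eps_2$ while keeping the $\RRonBins$ partition $\Phi^*$ fixed incurs only $O(B\eps_1)$ extra loss \emph{independent of} the number of bins $m$, which is exactly what the uniform-in-$m$ derivative bound on $f(\eta)$ is designed to deliver.
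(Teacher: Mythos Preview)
Your proposal is correct and follows essentially the same three-step decomposition as the paper: Lipschitzness of $\gL(\gM;\cdot)$ in the prior, an $O(B\eps_1)$ penalty for shrinking the budget from $\eps$ to $\eps_2$ while holding the optimal $\RRonBins$ map $\Phi^*$ fixed, and the $O(\sqrt{|\gY|/n}+|\gY|/(n\eps_1))$ bound on $\E\|P-P'\|_1$, followed by the choice $\eps_1=\sqrt{|\gY|/n}$. The only cosmetic differences are that the paper bounds step~(ii) by directly computing the RR probability gap $\tfrac{2(|\gYhat|-1)(e^{\eps}-e^{\eps_2})}{(e^{\eps}+|\gYhat|-1)(e^{\eps_2}+|\gYhat|-1)}\le 2(1-e^{-\eps_1})\le 2\eps_1$ rather than via your derivative argument, and cites \cite{DiakonikolasHS15} for step~(iii) instead of sketching it.
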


We remark that the above bound is achieved by setting $\eps_1 = \sqrt{|\gY|/n}$; indeed, we need to assume that $n$ is sufficiently large so that $\eps_1 < \eps$.  The high-level idea of the proof is to first use a known utility bound for Laplace mechanism  \cite{DiakonikolasHS15} to show that $P, P'$ are close. Then, we argue that the optimal $\RRonBins$ is robust, in the sense that small changes in the prior (from $P$ to $P'$) and privacy parameter (from $\eps$ to $\eps_2$) do not affect the resulting population loss too much. 

\section{Experimental Evaluation}\label{sec:experiments}

\begin{figure}
    \centering
    \begin{subfigure}[b]{0.4\textwidth}
         \centering
         \includegraphics[width=\textwidth]{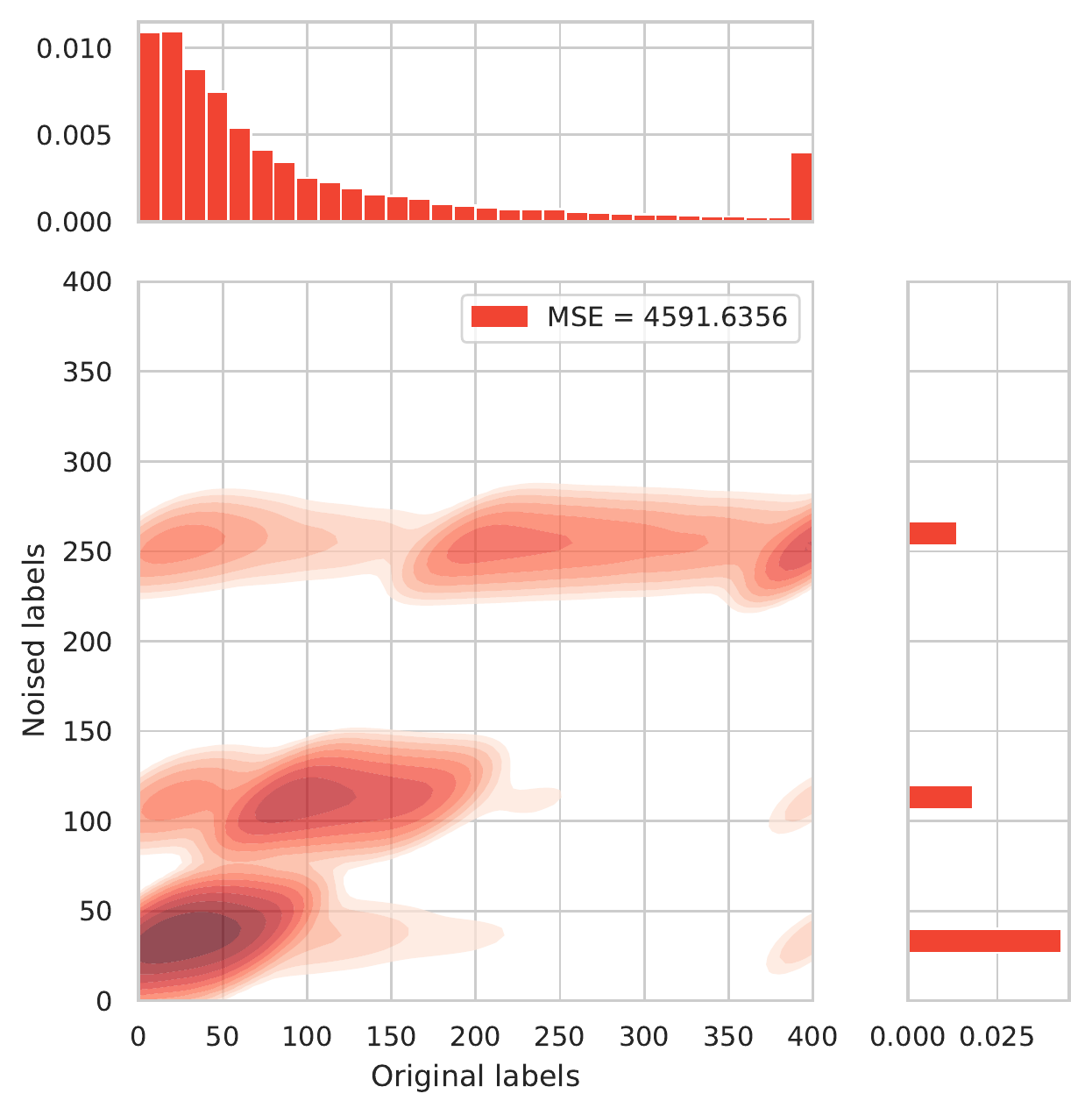}
         \caption{$\RRonBins$}
     \end{subfigure}
     \hspace{2em}
    \begin{subfigure}[b]{0.4\textwidth}
         \centering
         \includegraphics[width=\textwidth]{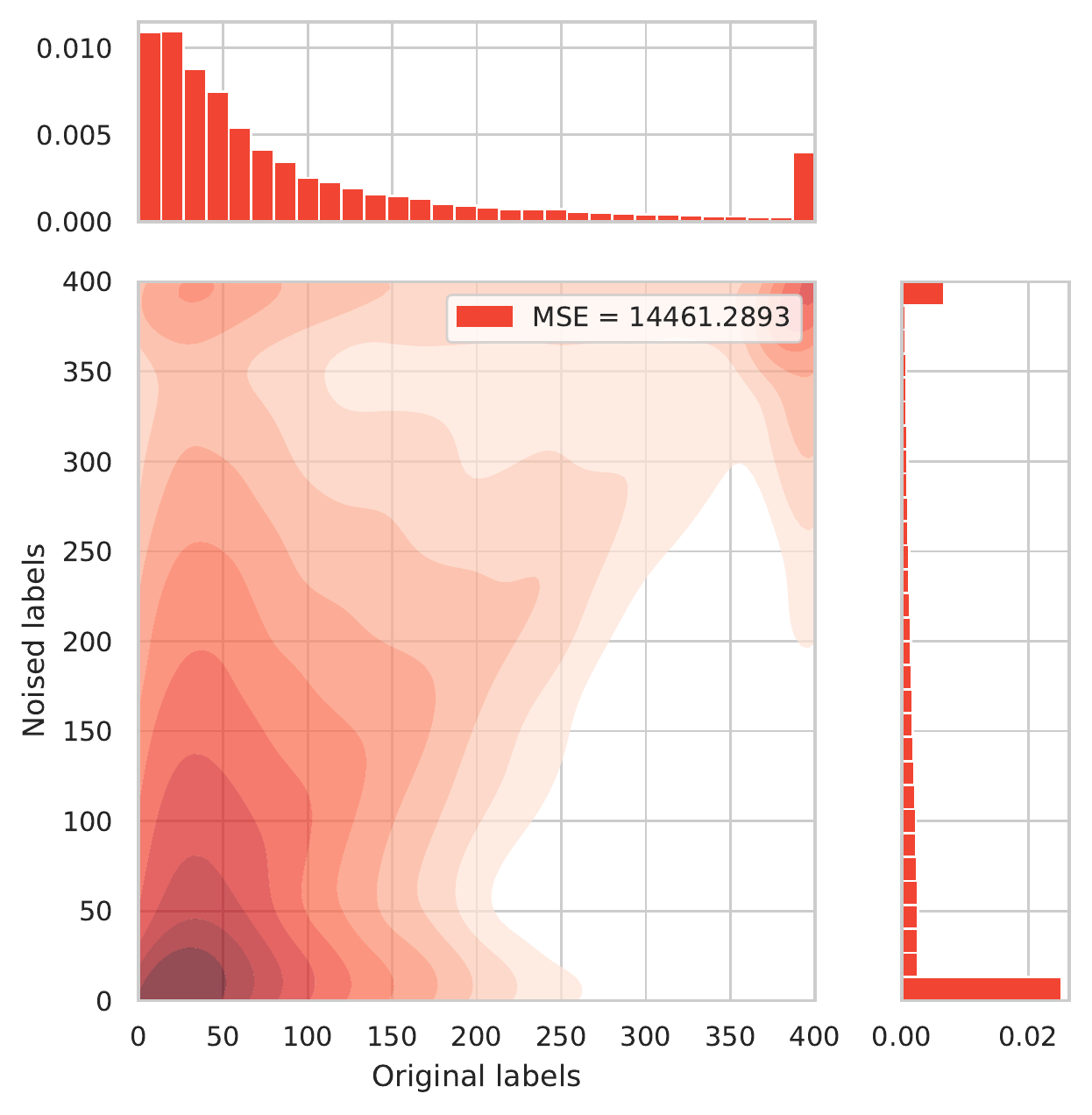}
         \caption{Laplace Mechanism}
     \end{subfigure}
    \caption{Illustration of the training label randomization mechanism under privacy budget $\eps=3$. In this case, $\RRonBins$ maps the labels to 3 bins chosen for optimal MSE via \Cref{alg:dp-optimal-bins-for-rronbins}. The 2D density plot contours are generated in log scale. The legends show the MSE between the original labels and the $\eps$-DP randomized labels.}
    \label{fig:criteo-joint}
\end{figure}

We evaluate the $\RRonBins$ mechanism on three datasets, and compare with the Laplace mechanism~\citep{DworkMNS06}, the staircase mechanism~\citep{geng2014optimal} and the exponential mechanism~\citep{mcsherry2007mechanism}. Note the Laplace mechanism and the staircase mechanism both have a discrete and a continuous variant. For real-valued labels (the Criteo Sponsored Search Conversion dataset), we use the continuous variant, and for integer-valued labels (the US Census dataset and the App Ads Conversion Count dataset), we use the discrete variant. All of these algorithms can be implemented in the feature-oblivious label DP setting of \Cref{fig:2p_sl}. Detailed model and training configurations can be found in \Cref{sec:exp-details}.

\subsection{Criteo Sponsored Search Conversion}

The Criteo Sponsored Search Conversion Log Dataset~\citep{tallis2018reacting} contains logs obtained from Criteo Predictive Search (CPS). Each data point describes an action performed by a user (click on a product related advertisement), with additional information consisting of a conversion (product was bought) within a $30$-day window and that could be attributed to the action. We formulate a (feature-oblivious) label DP problem to predict the revenue (in \texteuro) obtained when a conversion takes place (the \texttt{SalesAmountInEuro} attribute). This dataset represents a sample of $90$ days of Criteo live traffic data, with a total of $15,995,634$ examples. We remove examples where no conversion happened (\texttt{SalesAmountInEuro} is $-1$), resulting in a dataset of $1,732,721$ examples. The conversion value goes up to $62,458.773$\texteuro. We clip the conversion value at $400$\texteuro, which corresponds to the $95$th percentile of the value distribution. 

In \Cref{fig:criteo-joint}, we visualize an example of how $\RRonBins$ randomizes those labels, and compare it with the Laplace mechanism, which is a canonical DP mechanism for scalars. In this case (and for $\eps=3$), $\RRonBins$ chooses $3$ bins at around $50$, $100$, and $250$ and maps the sensitive labels to those bins with \Cref{alg:rronbins}. The joint distribution of the sensitive labels and randomized labels maintains an overall concentration along the diagonal. On the other hand, the joint distribution for the Laplace mechanism is generally spread out.
\Cref{tab:criteo} quantitatively compares the two mechanisms across different privacy budgets. The first block (Mechanism) shows the MSE between the sensitive \emph{training} labels and the private labels generated by the two  mechanisms, respectively. We observe that $\RRonBins$ leads to significantly smaller MSE than the Laplace mechanism for the same label DP guarantee. Furthermore, as shown in the second block of \Cref{tab:criteo} (Generalization), the reduced noise in the training labels leads to lower \emph{test} errors.

\begin{table}[]
\centering\scriptsize
\sisetup{separate-uncertainty}
\begin{tabular}{c : S[table-format = 5.2(2)] S[table-format = 5.2(2)] : S[table-format = 5.2(2)] S[table-format = 5.2(2)]@{\hspace{15pt}}}
\toprule
\multirow[b]{2}{*}{\thead{Privacy\\ Budget}}&\multicolumn{2}{c}{\thead[b]{MSE (Mechanism)}} & \multicolumn{2}{c}{\thead[b]{MSE (Generalization)}}\\
\cmidrule(lr){2-3}\cmidrule(lr){4-5}
& {Laplace Mechanism} & {$\RRonBins$} & {Laplace Mechanism} & {$\RRonBins$} \\
\midrule
0.05 & 60746.98(4631) & 11334.84(907) & 24812.56(13935) & 11339.71(3645)\\
0.1 & 59038.06(5131) & 11325.53(925) & 23933.23(17243) & 11328.04(3634)\\
0.3 & 52756.01(5664) & 11210.48(906) & 20961.83(14947) & 11185.20(3610)\\
0.5 & 47253.12(5712) & 10977.09(885) & 18411.30(11182) & 10901.33(3654)\\
0.8 & 40223.13(4866) & 10435.43(977) & 15428.75(9132) & 10256.37(3739)\\
1.0 & 36226.54(4505) & 9976.86(821) & 13788.51(7571) & 9744.08(3759)\\
1.5 & 28170.93(3945) & 8636.43(704) & 10808.53(5231) & 8406.88(3657)\\
2.0 & 22219.20(2804) & 7260.05(1055) & 8892.80(3292) & 7294.93(3403)\\
3.0 & 14411.77(2026) & 4600.24(1115) & 6770.33(2286) & 5577.50(3175)\\
4.0 & 9851.53(1727) & 2631.36(441) & 5764.32(2895) & 4769.61(2501)\\
6.0 & 5270.57(1030) & 709.74(618) & 4955.21(2675) & 4371.68(2531)\\
8.0 & 3239.22(654) & 176.47(212) & 4668.40(2034) & 4333.12(3194)\\
\hdashline
$\infty$ & 0.00(0) & 0.00(0) & 4322.91(2831) & 4319.86(2927)\\
\bottomrule
\end{tabular}
\caption{MSE on the Criteo dataset. The first column block (Mechanism) measures the 
     error introduced by the DP randomization mechanisms on the training labels. The second column block (Generalization) measures the test error of the models trained on the corresponding private labels.}
\label{tab:criteo}
\end{table}

\begin{figure}[ht]
    \centering
    \begin{subfigure}[b]{0.35\textwidth}
         \centering
         \includegraphics[width=\textwidth]{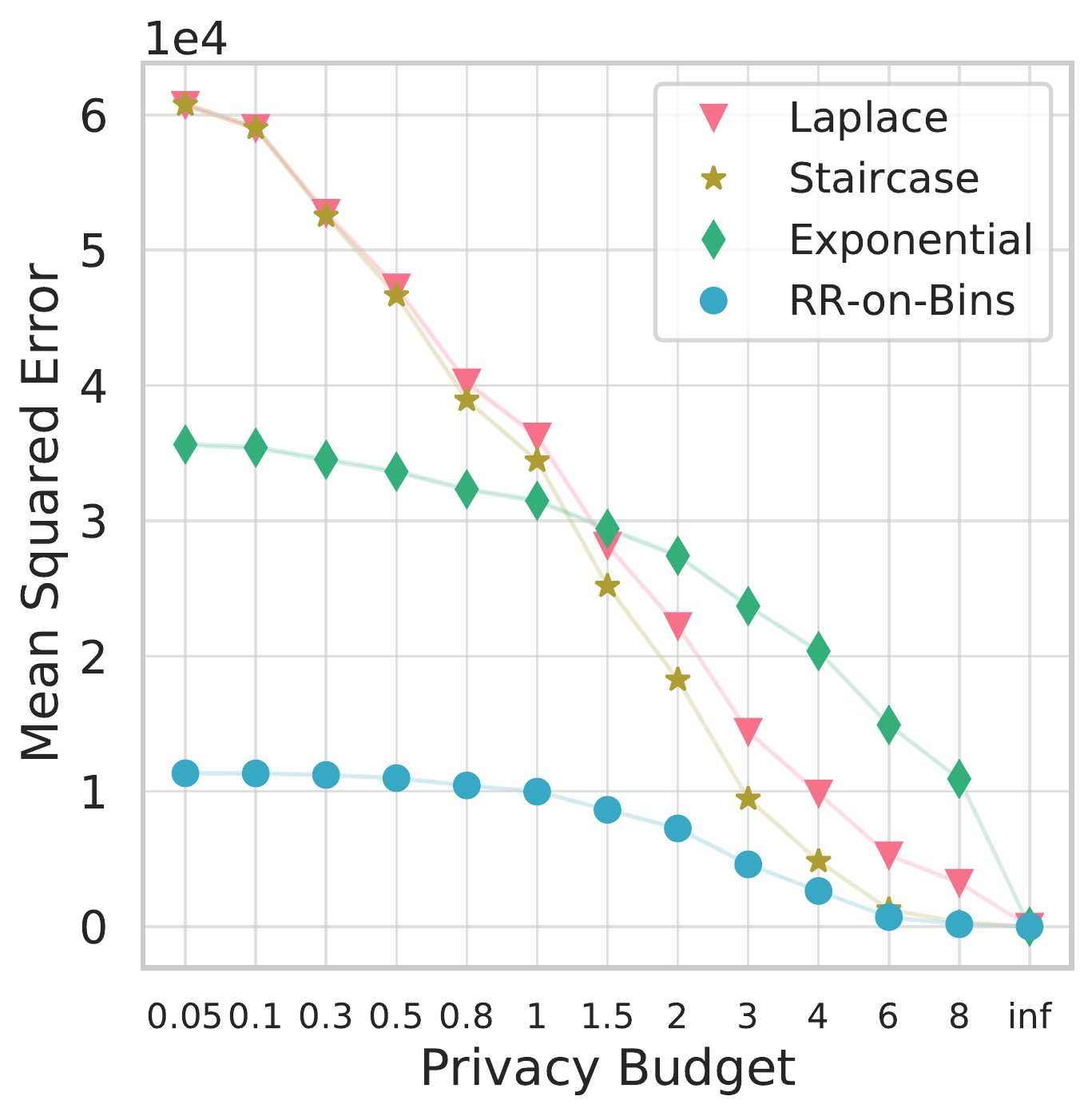}
         \caption{Mechanism}
     \end{subfigure}
     \hspace{1cm}
    \begin{subfigure}[b]{0.35\textwidth}
         \centering
         \includegraphics[width=\textwidth]{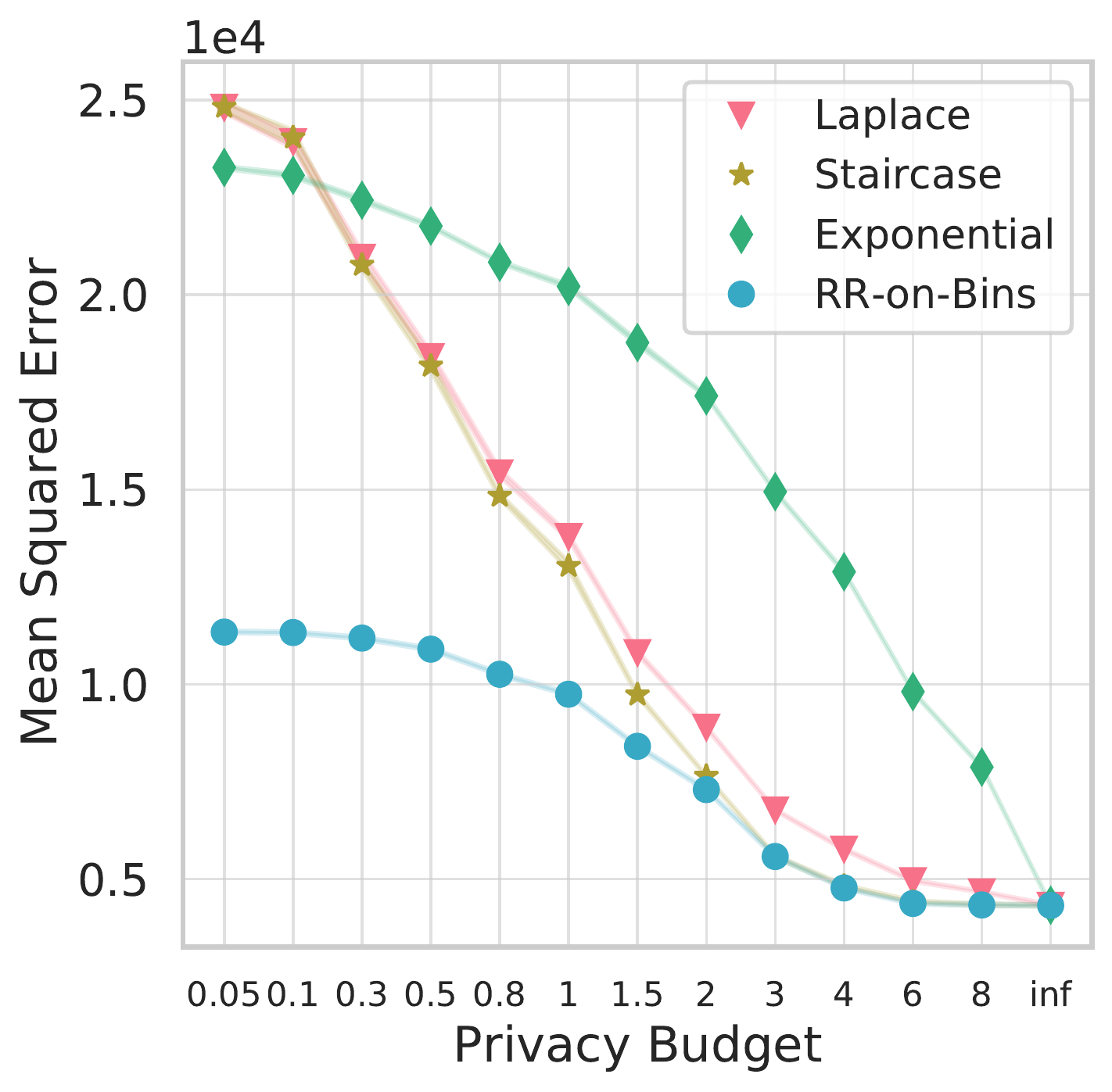}
         \caption{Generalization}
     \end{subfigure}
    \caption{MSE on the Criteo dataset: (a) error introduced by DP randomization mechanisms on the training labels and (b) test error of the models trained on the corresponding private labels.}
    \label{fig:criteo-mse}
\end{figure}

\Cref{fig:criteo-mse} compares with two additional baselines: the exponential mechanism, and the staircase mechanism. For both the ``Mechanism'' errors and ``Generalization'' errors, $\RRonBins$ consistently outperforms other methods for both low- and high-$\eps$ regimes.

\subsection{US Census}
The $1940$ US Census dataset has been made publicly available for research since $2012$, and has $131,903,909$ rows.  This data is commonly used in the DP literature (e.g., \citet{wang2019answering,cao2021data,ghazi2021differentially}). In this paper, we set up a label DP problem by learning to predict the duration for which the respondent worked during the previous year (the \texttt{WKSWORK1} field, measured in number of weeks). \Cref{fig:census-learn} shows that $\RRonBins$ outperforms the baseline mechanisms across a wide range of privacy budgets.

\begin{figure}[ht]
    \centering
    \begin{subfigure}[b]{0.35\textwidth}
         \centering
         \includegraphics[width=\textwidth]{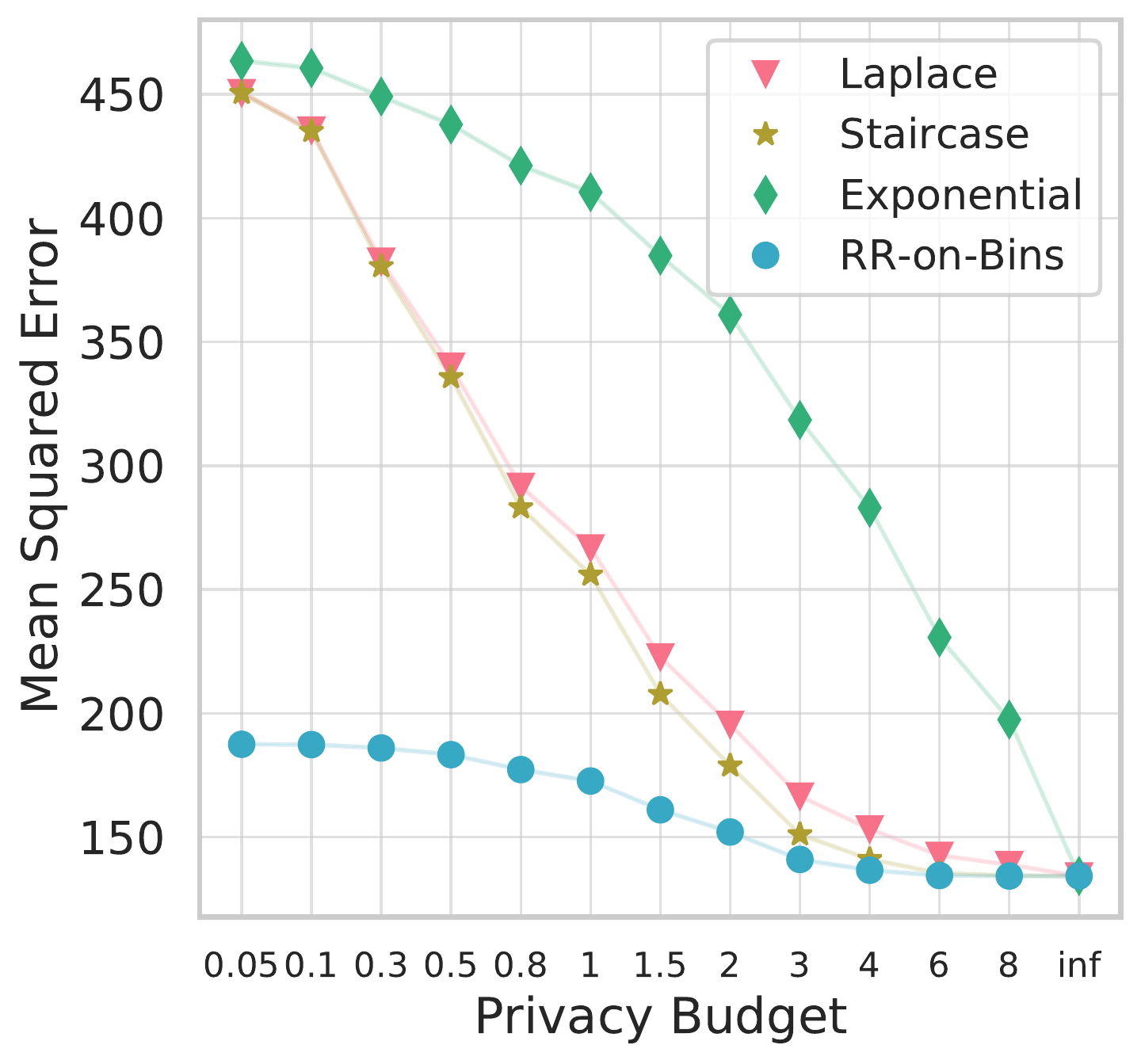}
         \caption{US Census}\label{fig:census-learn}
     \end{subfigure}
     \hspace{1cm}
    \begin{subfigure}[b]{0.35\textwidth}
         \centering
         \includegraphics[width=\textwidth]{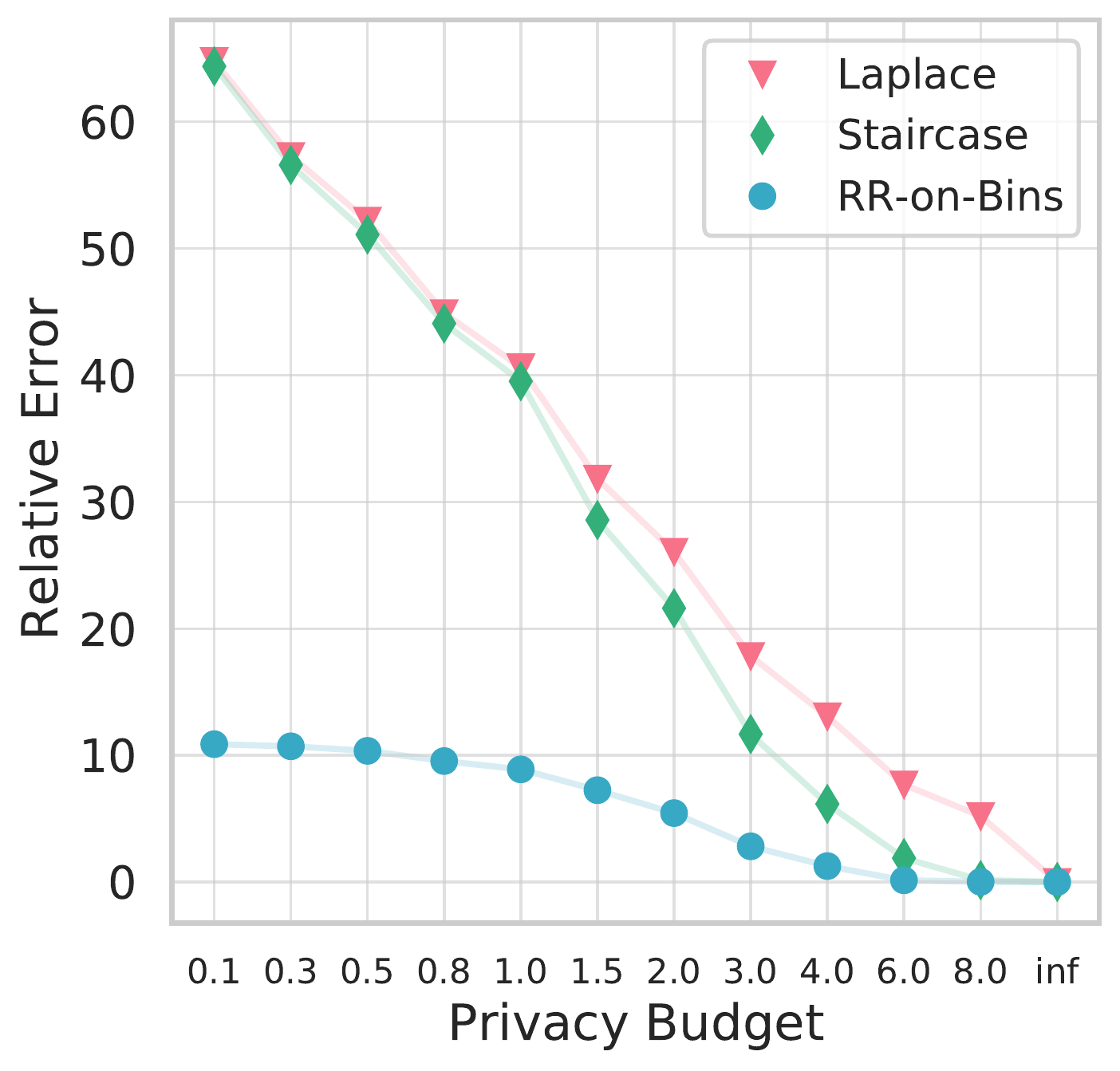}
         \caption{App Ads}\label{fig:pevent-learn}
     \end{subfigure}
    \caption{Test performance across different privacy budgets. (a) MSE on the US Census dataset. (b) Relative performance on the App Ads Conversion Count dataset. The relative error is calculated with respect to the non-private baseline, i.e. $(E-E_{\text{baseline}})/E_{\text{baseline}}$.}
    \label{fig:census-pevent}
\end{figure}

\subsection{App Ads Conversion Count Prediction}

Finally, we evaluate our algorithm on an app install ads prediction dataset from a commercial mobile app store. The examples in this dataset are ad clicks and each label counts post-click events (aka conversions) occurring in the app after the user installs it. For example, if a user installs a ride share app after clicking the corresponding ad, the label could be the total number of rides that the user purchases in a given time window after the installation. We note that ad prediction tasks/datasets of a similar nature have been previously used for evaluation~\citep{badanidiyuru2021handling}.

\Cref{fig:pevent-learn} shows the performance on this dataset. Consistent with the results observed on the other datasets, $\RRonBins$ outperforms other mechanisms across all the privacy budget values.

In summary, by first estimating a prior distribution (privately), $\RRonBins$
significantly reduces label noise compared to other label DP mechanisms, especially in the high-privacy (i.e., intermediate to small $\eps$) regime. This leads to significantly better test performance for the regression networks trained on these less noisy labels. For example, for $\eps=0.5$, comparing to the best baseline methods, the test MSE for $\RRonBins$ is $\sim 1.5\times$ smaller on the Criteo and US Census datasets, and the relative test error is $\sim 5\times$ smaller on the App Ads dataset.

\section{Related Work}\label{sec:related_work}

There has been a large body of work on learning with DP. Various theoretical and practical settings have been studied including empirical risk minimization \citep{chaudhuri2011differentially}, optimization \citep{song2013stochastic}, regression analysis \citep{zhang2012functional}, and deep learning \citep{abadi2016deep}. The vast majority of the prior works studied the setting where both the features and the labels are deemed sensitive and hence ought to be protected.

\citet{chaudhuri2011sample} and  \citet{beimel2013private} studied the sample complexity of learning with label DP for classification tasks. \citet{wang2019sparse} studied label DP for sparse linear regression in the local DP setting. \citet{ghazi2021deep} provided a procedure for training deep neural classifiers with label DP. For the classification loss, they formulate an LP and derive an explicit solution, which they refer to as \RRtopk. Our work could be viewed as an analog of theirs for regression tasks. \citet{malek2021antipodes} used the PATE framework of \citet{papernot2016semi,papernot2018scalable} to propose a new label DP training method. We note that this, as well as related work on using unsupervised and semi-supervised learning to improve label DP algorithms \citep{esfandiari2022label, tang2022machine}, do not apply to the feature-oblivious label DP setting.

A variant of two-party learning (with a ``features party'' and a ``labels party'') was recently studied by \citet{li2021label}, who considered the interactive setting where the two parties can engage in a protocol with an arbitrary number of rounds (with the same application to computational advertising described in \Cref{sec:intro} as a motivation). By contrast, we considered in this paper the one-way communication setting (from \Cref{fig:2p_sl}), which is arguably more practical and easier to deploy.

\cite{KairouzOV16} study optimal local DP algorithms under a given utility function and prior. For a number of utility functions, they show that optimal mechanisms belong to a class of \emph{staircase mechanisms}. Staircase mechanisms are much more general than randomized response on bins. In that regard, our work shows that a particular subset of staircase mechanisms is optimal for regression tasks.  In particular, while we give an efficient dynamic programming algorithm for optimizing over randomized response on bins, we are not aware of any efficient algorithm that can compute an optimal staircase mechanism. (In particular, a straightforward algorithm takes $2^{O(k)}$ time.)

\section{Conclusions and Future Directions}\label{sec:conc_future_dir}

In this work we propose a new label DP mechanism for regression. The resulting training algorithm can be implemented in the feature-oblivious label DP setting. We provide theoretical results shedding light on the operation and guarantees of the algorithm. We also evaluate it on three datasets, demonstrating that it achieves higher accuracy compared to several non-trivial, natural approaches. 

Our work raises many questions for future exploration.  First, in our evaluation, we set the objective function optimized using the LP to be the same as the loss function used during training; different ways of setting the LP objective  in relation to the training loss are worth exploring.  Second, our noising mechanism typically introduces a bias; mitigating it, say by adding unbiasedness constraints into the LP, is an interesting direction.  Third, it might also be useful to investigate de-noising techniques that can be applied as a post-processing step on top of our label DP mechanism; e.g., the method proposed in \citet{tang2022machine} for computer vision tasks, and the ALIBI method~\citep{malek2021antipodes} for classification tasks, which is based on Bayesian inference.

We believe that due to its simplicity and practical appeal, the setting of learning with feature-oblivious label DP merits further study, from both a theoretical and an empirical standpoint.

Finally, we leave the question of obtaining better ``feature-aware" label DP regression algorithms for a future investigation.

\subsubsection*{Acknowledgments}
We thank Peter Kairouz and Sewoong Oh for many useful discussions, and Eu-Jin Goh, Sam Ieong, Christina Ilvento, Andrew Tomkins, and the anonymous reviewers for their comments.

\bibliography{refs}
\bibliographystyle{iclr2023_conference}

\appendix
\section{Optimality of \texorpdfstring{$\RRonBins^{\Phi}_{\eps}$}{RR-on-Bins}}\label{apx:rronbins-optimality}

In this section, we prove \Cref{thm:rronbins-optimality}. We prove this in two stages. First we consider the case where $P$ has finite support and $\gYhat$ is constrained to be a subset of a pre-specified finite set $\gO \subseteq \R$. Next, we consider the case where $\gYhat$ is allowed to be an arbitrary subset of $\R$, and thus, the output distribution of the mechanism on any input $y$ can be an arbitrary probability measure over $\R$. %

\paragraph{\boldmath Stage I: Finitely supported $P$, $\gYhat \subseteq \gO$, $\ell$ strictly increasing/decreasing.} 
\begin{theorem}\label{thm:stage-1}
Let $P$ be a distribution over $\R$ with finite support, $\gO$ a finite subset of $\R$, and $\ell: \R \times \R \to \R_{\ge 0}$ be such that 
	\begin{itemize}[leftmargin=*, nosep]
		\item For all $y \in \R$, $\ell(\hty, y)$ is strictly decreasing in $\hty$ when $\hty \le y$ and strictly increasing in $\hty$ when $\hty \ge y$.
		\item For all $\hty \in \R$, $\ell(\hty, y)$ is strictly decreasing in $y$ when $y \le \hty$ and strictly increasing in $y$ when $y \ge \hty$.
	\end{itemize}

Then for all $\eps > 0$, there exists $\Phi: \gY \to \gO$ such that \begin{align*}
\gL(\RRonBins^{\Phi}_{\eps}; P) ~=~ \inf_{\gM} \gL(\gM; P).
\end{align*}
where $\inf_{\gM}$ is over all $\eps$-DP mechanisms $\gM$ with inputs in $\gY$ and outputs in $\gO$.
\end{theorem}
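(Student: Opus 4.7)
The plan is to formulate the optimization $\inf_{\gM} \gL(\gM; P)$ as a linear program with decision variables $q_{y, \hy} := \Pr[\gM(y) = \hy]$ for $y \in \supp(P)$ and $\hy \in \gO$, with the linear objective $\sum_{y, \hy} p_y \ell(\hy, y) q_{y, \hy}$ and constraints consisting of nonnegativity $q_{y,\hy} \ge 0$, the row-sum equalities $\sum_{\hy} q_{y, \hy} = 1$, and the $\eps$-DP inequalities $q_{y, \hy} \le e^{\eps} q_{y', \hy}$ for all $y, y' \in \supp(P)$ and $\hy \in \gO$. The feasible polytope is non-empty (uniform is feasible) and bounded, so the infimum is attained at a vertex. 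It then suffices to show that some optimal vertex has the form $\RRonBins^{\Phi}_{\eps}$ for a $\Phi : \supp(P) \to \gO$; the reduction proceeds via (i) a structural characterization of vertices and (ii) a strict-monotonicity argument that pins down the form at an optimum.

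For step (i), I would establish a ``staircase'' lemma: at any vertex, setting $r_{\hy} := \min_{y} q_{y, \hy}$, every entry satisfies $q_{y, \hy} \in \{r_{\hy}, e^{\eps} r_{\hy}\}$, and columns with $r_{\hy} = 0$ are identically zero and can be removed from $\gO$. This is a perturbation argument: if some $q_{y_0, \hy_0}$ were strictly between $r_{\hy_0}$ and $e^{\eps} r_{\hy_0}$, all DP inequalities in column $\hy_0$ coupling row $y_0$ to other rows would have strict slack, so one could move $q_{y_0, \hy_0}$ by $\pm\delta$ and rebalance the change in any slack coordinate of row $y_0$, producing two feasible directions out of the point and contradicting extremality. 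A staircase vertex is then encoded by the high sets $S_{\hy} := \{y : q_{y, \hy} = e^{\eps} r_{\hy}\}$ together with positive scalars $\{r_{\hy}\}$, and the row-sum constraint becomes
\[
(e^{\eps} - 1) \sum_{\hy \in H_y} r_{\hy} + \sum_{\hy} r_{\hy} = 1,
\]
where $H_y := \{\hy : y \in S_{\hy}\}$; in particular $R := \sum_{\hy \in H_y} r_{\hy}$ is constant in $y$.

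For step (ii), the goal is to show that at any optimal vertex $|H_y| = 1$ for every $y$, so that $\{S_{\hy}\}$ partitions $\supp(P)$ and $\Phi(y) :=$ the unique $\hy$ with $y \in S_{\hy}$ is well-defined. The case $|H_y| = 0$ is ruled out by the row-sum identity, since it would force $\sum_{\hy} r_{\hy} = 1$ while any other $y'$ with non-empty $H_{y'}$ would then have row sum strictly larger than $1$. For $|H_{y_0}| \ge 2$, I exhibit a feasible improving direction: pick $\hy_2 \in H_{y_0}$ and a low coordinate $\hy_3 \notin H_{y_0}$, decrease $q_{y_0, \hy_2}$ (at the upper bound $e^{\eps} r_{\hy_2}$) by $\delta$, and raise $q_{y_0, \hy_3}$ (at $r_{\hy_3}$) by the same $\delta$; feasibility holds for small $\delta > 0$, and the objective change is $p_{y_0}\, \delta\, (\ell(\hy_3, y_0) - \ell(\hy_2, y_0))$. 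Strict monotonicity of $\ell(\cdot, y_0)$ on each side of $y_0$ lets one choose $\hy_3 \in \gO$ strictly closer to $y_0$ than $\hy_2$ (picking the appropriate $\hy_2 \in H_{y_0}$ depending on which side of $y_0$ the highs lie on), so the change is strictly negative, contradicting optimality. Once $|H_y| = 1$ everywhere, constancy of $R$ forces $r_{\hy} \equiv R = 1/(e^{\eps} + |\range(\Phi)| - 1)$ across $\hy \in \range(\Phi)$, yielding exactly the $\RRonBins^{\Phi}_{\eps}$ distribution with $\gYhat = \range(\Phi)$.

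The main obstacle is the single-high-per-row reduction in step (ii). When two highs $\hy_1, \hy_2 \in H_{y_0}$ lie on the same side of $y_0$, strict monotonicity picks out the further one cleanly and one can always route mass toward a closer $\hy_3$; but when $\hy_1 < y_0 < \hy_2$ straddle $y_0$, one must carefully pick which high to decrease and produce a suitably close $\hy_3$, and if $\gO$ happens to contain no point closer to $y_0$, one must separately argue (by a combinatorial count of tight linearly-independent constraints in the LP) that the two-high configuration is inconsistent with all $r_{\hy} > 0$, collapsing to a degenerate case already excluded. It is the \emph{strict} (not merely weak) monotonicity of $\ell$ that converts the objective change into a strict improvement and delivers the contradiction with vertex optimality.
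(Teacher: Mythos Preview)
Your overall framing---formulating the problem as a linear program in the variables $q_{y,\hy}$ and analyzing optimal vertices---matches the paper's strategy. However, both of your key steps have genuine gaps that the paper resolves differently.

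In step (i), your ``staircase lemma'' asserts that at \emph{any} vertex, every entry lies in $\{r_{\hy}, e^\eps r_{\hy}\}$. Your perturbation argument only rules out \emph{two or more} slack (strictly interior) entries in the same row: if $\hy_0$ is the \emph{only} slack coordinate in row $y_0$, there is no second slack coordinate to rebalance against, and moving a tight $\sfU$ entry up or a tight $\sfL$ entry down violates a DP constraint, so you cannot produce a direction $d$ with both $\pm d$ feasible. The paper does not claim your staircase lemma. Instead it proves (via a careful count of linearly independent tight constraints) that any vertex with $k$ nonzero columns has a $k\times k$ submatrix consisting entirely of $\sfU$/$\sfL$ entries with distinct rows; only later, after invoking optimality, does it conclude that \emph{every} row is $\sfS$-free.

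In step (ii), you acknowledge but do not resolve the obstruction: when $H_{y_0}$ contains outputs straddling $y_0$ and $\gO$ has no active column strictly closer to $y_0$, your improving-direction argument stalls. More importantly, your argument uses only the strict monotonicity of $\ell(\cdot, y)$ in the first argument; the second hypothesis on $\ell(\hy,\cdot)$ never appears. The paper's argument is structurally different and uses both hypotheses: (a) by first-argument monotonicity, the $\sfU$ entries in any row of the submatrix are consecutive; (b) by combining optimality with monotonicity in \emph{both} arguments, a $2\times 2$ anti-diagonal $\begin{smallmatrix}\sfL & \sfU\\ \sfU & \sfL\end{smallmatrix}$ pattern is impossible (this is where the chain $\ell(\hy_1,y_1)>\ell(\hy_2,y_1)>\ell(\hy_2,y_2)>\ell(\hy_1,y_2)>\ell(\hy_1,y_1)$ is derived). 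Together with distinctness of the $k$ rows, these force the submatrix to be exactly diagonal, after which one-$\sfU$-per-row and the $\RRonBins$ structure follow. The forbidden $2\times 2$ pattern is also what yields $\Psi$ non-decreasing---a property your argument does not establish but which the paper needs downstream for the dynamic program.
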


The proof of \Cref{thm:stage-1} is inspired by the proof of Theorem 3.1 of \cite{Ghosh08}. Namely, we describe the problem of minimizing $\gL(\gM; P)$ as a linear program (LP). We arrange the variables of the LP into a matrix and associate any solution to the LP a signature matrix, which represents when constraints of the LP are met with equality. Then we make several observations of the signature matrix for any optimal solution, which eventually leads to the proof of the theorem.

\begin{proof}[Proof of \Cref{thm:stage-1}]

Without loss of generality, we may assume that $\gY = \supp(P)$ and therefore is finite.

When $\gYhat$ is restricted to be a subset of $\gO$, the optimal mechanism $\gM$ which minimizes $\gL(\gM; P) = \E_{y\sim P, \hy \sim \gM(y)} \ell(\hy, y)$, is encoded in the solution of the following LP, with $|\gY| \cdot |\gO|$ variables $M_{y\to\hy} = \Pr[\gM(y) = \hy]$ indexed by $(y, \hty) \in \gY \times \gO$. Namely, the first two constraints enforce that $(M_{y \to \hy})_y$ is a valid probability distribution and the third constraint enforces the $\eps$-DP constraint.

\begin{equation}
\begin{aligned}
    \min_{M} &\quad \sum_{y\in \gY} p_{y} \left(
    \sum_{\hty\in \gO} M_{y\rightarrow \hty} \cdot \ell(\hty, y)
    \right),\\
    \text{subject to} &\quad \forall y\in\gY,\ \hty\in\gO: && M_{y\rightarrow\hty} \geq 0, \label{eq:lp-constraints}\\
    &\quad \forall y\in\gY: && \sum_{\hty\in\gO} M_{y\rightarrow \hty} = 1, \\
    &\quad \forall \hty\in\gO, \forall y,y'\in \gY, y\neq y': && M_{y'\rightarrow \hty} \leq e^{\eps} \cdot M_{y\rightarrow \hty}.
\end{aligned}
\end{equation}

Corresponding to any feasible solution $M := (M_{y \to \hy})_{y,\hy}$, we associate a $|\gY| \times |\gO|$ {\em signature} matrix $S_M$. First, let $p^{\min}_{\hy} = \min_y M_{y \to \hy}$ and let $p^{\max}_{\hy} = \max_y M_{y \to \hy}$. Note that, from the constraints it follows that $p^{\max}_{\hy} \le e^{\eps} \cdot p^{\min}_{\hy}$.

\begin{definition}[Signature matrix]
For any feasible $M$ for the LP in (\ref{eq:lp-constraints}), the \emph{signature} entry $S_M(y, \hy)$ for all $y \in \gY$ and $\hy \in \gO$ is defined as
\[
S_M(y, \hy) = 
\begin{cases}
 0 & \text{if } M_{y \to \hy} = 0 \\
 \sfU & \text{if } M_{y \to \hy} = p^{\max}_{\hy} = e^{\eps} \cdot p^{\min}_{\hy} \\
 \sfL & \text{if } M_{y \to \hy} = p^{\min}_{\hy} = e^{-\eps} \cdot p^{\max}_{\hy} \\
 \sfS & \text{ otherwise.}
\end{cases}
\]
\end{definition}
We visualize $S_M$ as a matrix with rows corresponding to $y$'s and columns corresponding to $\hy$'s, both ordered in increasing order (see \Cref{fig:signature-matrix}).

If $\gM$ is an $\RRonBins^{\Phi}_{\eps}$ mechanism for some $\Phi : \gY \to \gO$, then it is easy to see that the corresponding signature matrix satisfies some simple properties (given in \Cref{clm:rronbins-signature} below). Interestingly, we establish a converse, thereby characterizing the signature of matrices $M$ that correspond to $\RRonBins^{\Phi}_{\eps}$ mechanism for some $\Phi : \gY \to \gO$.

\begin{claim}\label{clm:rronbins-signature}
$M$ corresponds to an $\RRonBins^{\Phi}_{\eps}$ mechanism for some non-decreasing $\Phi : \gY \to \gYhat$ (for $\gYhat \subseteq \gO$) if and only if either
\begin{enumerate}[leftmargin=8mm,nosep]
    \item[(1)]\label{item:rrob1a} One column consists entirely of $\sfS$, while all other columns are entirely $0$; let $\Psi(y) = \hy$, where $\hy$ corresponds to the unique $\sfS$ column, \label{conditions:signature}
\end{enumerate}
or all of the following hold:
\begin{enumerate}[leftmargin=8mm,nosep]
    \item[(2a)]\label{item:rrob2a} Each column in $S_M$ is either entirely $0$, or entirely consisting of $\sfU$'s and $\sfL$'s, with at least one $\sfU$ and one $\sfL$.
    \item[(2b)]\label{item:rrob2b} Each row contains a single $\sfU$ entry, with all other entries being either $\sfL$ or $0$;  for each $y$, we denote this unique column index of $\sfU$ by $\Psi(y)$.
    \item[(2c)]\label{item:rrob2c} For all $y < y'$ it holds that $\Psi(y) \le \Psi(y')$.
\end{enumerate}
In each case, it holds that $\Phi = \Psi$.
\end{claim}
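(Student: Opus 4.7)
The plan is to prove the two directions of the biconditional separately, identifying $\gYhat$ with the image of $\Phi$ throughout so that the non-zero columns of $M$ (among those indexed by $\gO$) are exactly those indexed by $\gYhat$.

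For the implication from mechanism to signature, I would unfold \Cref{alg:rronbins} and compute $S_M$ directly. When $|\gYhat|=1$, the unique non-zero column has every entry equal to $1$, and since $1\neq e^{\eps}\cdot 1$ for $\eps>0$ these entries are labeled $\sfS$, yielding condition (1). When $|\gYhat|\ge 2$, each column $\hty\in\gYhat$ contains the two values $\frac{e^{\eps}}{e^{\eps}+|\gYhat|-1}$ (on rows with $\Phi(y)=\hty$, labeled $\sfU$) and $\frac{1}{e^{\eps}+|\gYhat|-1}$ (otherwise, labeled $\sfL$); both labels appear since $\gYhat$ is the image of $\Phi$ and $|\gYhat|\ge 2$, giving (2a). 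Conditions (2b) and (2c) follow immediately from the definition of $\Phi$ and from its non-decreasingness.

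For the reverse implication, condition (1) is dispatched at once: the row-sums-to-$1$ constraint pins the constant value in the unique $\sfS$-column to $1$, so $M$ is the constant mechanism, which is $\RRonBins^{\Phi}_{\eps}$ with $|\gYhat|=1$. The substantive part is the case (2a)-(2c). Here I would set $\Phi:=\Psi$, take $\gYhat$ to be the set of non-zero columns, and proceed in three steps: (i) verify that $|\gYhat|\ge 2$ so that the subsequent comparison is meaningful; (ii) for any two rows $y_1,y_2$ with $\Psi(y_1)\neq\Psi(y_2)$, equate their row sums and use the $\sfU$/$\sfL$ structure given by (2b) to produce the identity $(1-e^{-\eps})\bigl(p^{\max}_{\Psi(y_1)}-p^{\max}_{\Psi(y_2)}\bigr)=0$, forcing the $p^{\max}_{\hty}$ values to coincide across $\gYhat$; (iii) conclude from the row-sum constraint that this common value is $\frac{e^{\eps}}{e^{\eps}+|\gYhat|-1}$, matching the formula for $\RRonBins^{\Phi}_{\eps}$ exactly. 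Non-decreasingness of $\Phi$ is then precisely (2c).

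The main obstacle I anticipate is step (i), i.e.\ arguing that (2a)-(2c) force at least two non-zero columns so that the comparison in step (ii) is possible. This should yield to a short case analysis: a single non-zero column would, by (2b), consist only of $\sfU$'s, contradicting the ``at least one $\sfL$'' requirement of (2a). Once this geometric observation about $S_M$ is secured, the algebraic identification of $M$ with $\RRonBins^{\Phi}_{\eps}$ in steps (ii) and (iii) is a one-line calculation using $\eps>0$.
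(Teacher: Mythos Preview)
Your proposal is correct and follows essentially the same approach as the paper: both directions unfold the definition of $\RRonBins$ to read off the signature, and for the converse both use the row-sum-to-$1$ constraint to pin down the common per-column probability (the paper phrases this via $p^{\min}_{\hy}$ while you use $p^{\max}_{\hy}$, which is equivalent). Your explicit treatment of the $|\gYhat|\ge 2$ step is a small point the paper leaves implicit, but the argument you give for it is exactly the right one.
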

\begin{proof}[Proof of \Cref{clm:rronbins-signature}]
Suppose $M$ corresponds to $\RRonBins^{\Phi}_\eps$ for some non-decreasing $\Phi$.
If $\Phi$ is constant, then condition (1) holds, since all columns corresponding to $\hy \notin \mathrm{range}(\Phi)$ in $S_M$ are all $0$, whereas, the only remaining column consists of all $\sfS$.
If $\Phi$ is not constant, then $M_{y \to \hy} = e^{\eps{\indicator[\Phi(y) = \hy]}}/(e^{\eps} + |\gYhat| - 1)$, and hence its signature $S_M$ is such that $S_M(y, \hy)$ is $\sfU$ if $\Phi(y) = \hy$, and $\sfL$ if $\hy \in \mathrm{range}(\Phi) \smallsetminus \{\Phi(y)\}$, and $0$ otherwise. It is easy to verify that all three conditions hold: (2a) a column corresponding to $\hy$ is entirely $0$ if and only if $\hy \notin \mathrm{range}(\Phi)$ and entirely consisting of $\sfU$'s and $\sfL$'s otherwise with at least one $\sfU$ corresponding to $y$ such that $\Phi(y) = \hy$ and at least one $\sfL$ corresponding to $y$ such that $\Phi(y) \ne \hy$ (since $\Phi$ is non-constant), (2b) Each row corresponding to $y$ has a unique $\sfU$ corresponding to $\hy = \Psi(y) = \Phi(y)$, and (2c) For $y < y'$, it holds that $\Psi(y) \le \Psi(y')$ since $\Phi$ is non-decreasing.

{\renewcommand{\arraystretch}{1.5}
\begin{figure}
    \centering
    \begin{tikzpicture}
    \node (M) at (0,0) {
        \begin{tabular}{c|cccc}
        & $1.5$ & $2.5$ & $3.5$ & $4.5$\\
        \hline
        $1$ & $0$ & $\nicefrac12$ & $\nicefrac14$ & $\nicefrac14$ \\
        $2$ & $0$ & $\nicefrac12$ & $\nicefrac14$ & $\nicefrac14$ \\
        $3$ & $0$ & $\nicefrac14$ & $\nicefrac14$ & $\nicefrac12$ \\
        $4$ & $0$ & $\nicefrac13$ & $\nicefrac13$ & $\nicefrac13$ \\
        $5$ & $0$ & $\nicefrac13$ & $\nicefrac13$ & $\nicefrac13$ \\
        \hline
        \end{tabular}
    };
    \node at (0,-2.2) {$M$};
    
    \node (SM) at (6,0) {
        \begin{tabular}{c|cccc}
        & $1.5$ & $2.5$ & $3.5$ & $4.5$\\
        \hline
        $1$ & $0$ & $\sfU$ & $\sfS$ & $\sfL$ \\
        $2$ & $0$ & $\sfU$ & $\sfS$ & $\sfL$ \\
        $3$ & $0$ & $\sfL$ & $\sfS$ & $\sfU$ \\
        $4$ & $0$ & $\sfS$ & $\sfS$ & $\sfS$ \\
        $5$ & $0$ & $\sfS$ & $\sfS$ & $\sfS$ \\
        \hline
        \end{tabular}
    };
    \node at (6,-2.2) {$S_M$};
    \end{tikzpicture}
    \caption{Example of a signature matrix for $e^{\eps} = 1/2$ for $\gY = \{1, 2, 3, 4, 5\}$ and $\gO = \{1.5, 2.5, 3.5, 4.5\}$}
    \label{fig:signature-matrix}
\end{figure}
}

To establish the converse, suppose we have that $S_M$ satisfies condition (1). Then we immediately get that $M$ corresponds to $\RRonBins$ for the constant map $\Phi = \Psi$. Next, suppose $S_M$ satisfies all three conditions (2a)--(2c). Immediately, we have that $M$ is given as
\[
M_{y\to\hy} ~=~ \begin{cases}
e^{\eps} \cdot p^{\min}_{\hy} & \text{if } S_M(y, \hy) = \sfU\\
p^{\min}_{\hy} & \text{if } S_M(y, \hy) = \sfL\\
0 & \text{if } S_M(y, \hy) = 0.
\end{cases}
\]
Let $\gYhat$ correspond to the set of non-zero columns of $S_M$. Since each row of $M$ corresponds to a probability distribution, we have for each $\hy \in \gYhat$ that $\sum_{\hy' \in \gYhat} p^{\min}_{\hy'} + (e^{\eps} - 1) p^{\min}_{\hy} = 1$ by considering the row corresponding to any $y \in \Psi^{-1}(\hy)$. Thus, we get that $p^{\min}_{\hy}$ is the same for all values in $\gYhat$, and in particular, $p^{\min}_{\hy} = \frac{1}{e^{\eps} + |\gYhat| - 1}$, and thus, we get that the corresponding $M$ is uniquely given by
\[
M_{y\to\hy} ~=~ \begin{cases}
\frac{e^{\eps}}{{e^{\eps} + |\gYhat|-1}} & \text{if } S_M(y, \hy) = \sfU\\
\frac{1}{{e^{\eps} + |\gYhat|-1}} & \text{if } S_M(y, \hy) = \sfL\\
0 & \text{if } S_M(y, \hy) = 0,
\end{cases}
\]
which clearly corresponds to $\RRonBins^{\Phi}_{\eps}$ for $\Phi(y) = \Psi(y)$. We have that $\Psi$, and hence $\Phi$, is non-decreasing from condition (2c). This completes the proof of \Cref{clm:rronbins-signature}.
\end{proof}

Thus, to show optimality of $\RRonBins^{\Phi}_{\eps}$, it suffices to show that there exists a minimizer $M$ of the LP~(\ref{eq:lp-constraints}), such that $S_M$ satisfies the conditions in \Cref{clm:rronbins-signature}.

It is well known that an optimal solution to any LP can be found at the vertices of the constraint polytope and that for a LP with $n$ variables, vertices must meet $n$ linearly independent constraints. We use this to find a submatrix of $S_M$, which will determine $M$ in its entirety.

If $M$ is a feasible point, then each column of $S_M$ is either entirely $0$, or entirely consisting of $\sfS$, $\sfU$, and $\sfL$. This is necessary since if  $M_{y\to\hy} = 0$, then $0 \le M_{y'\to\hy} \le e^{\eps} M_{y \to \hy} = 0$. Let $k$ denote the number of non-zero columns of $S_M.$

\begin{lemma}\label{lem:submatrix}
Suppose $M$ is a vertex of the LP ~(\ref{eq:lp-constraints}) with $k$ non-zero columns of $S_M$. If $k \geq 2$, then $M$ has a $k \times k$ submatrix $M^{(k)}$ with all distinct rows, consisting of only $\sfU$'s and $\sfL$'s.
\end{lemma}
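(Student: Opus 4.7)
The plan is to take the $k$ columns of the desired submatrix to be exactly the $k$ non-zero columns of $S_M$ (so every entry is automatically non-zero) and to locate $k$ rows whose entries in those columns all lie in $\{\sfU,\sfL\}$ and are pairwise distinct. Throughout, for each non-zero column $\hy$ I write $a_\hy,b_\hy,c_\hy$ for the numbers of $\sfU$, $\sfL$, and $\sfS$ entries respectively, so $a_\hy+b_\hy+c_\hy=|\gY|$.

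The first step is to derive two structural consequences of $M$ being a vertex: for every non-zero column $\hy$, $a_\hy\ge 1$ and $b_\hy\ge 1$, and $\sum_\hy c_\hy\le |\gY|-k$. I would obtain these by counting linearly independent tight constraints. The axis constraints from zero columns contribute rank $|\gY|(|\gO|-k)$, and the row-sum equalities have rank at most $|\gY|$. Tight DP constraints in a non-zero column $\hy$ contribute rank $a_\hy+b_\hy-1=|\gY|-c_\hy-1$ when $a_\hy,b_\hy\ge 1$ (they collapse all $\sfU$ entries to a single value, all $\sfL$ entries to a single value, and link the two by the factor $e^\eps$) and rank $0$ when the column is entirely $\sfS$. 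The vertex condition forces the total tight rank to be $|\gY|\cdot|\gO|$, and a short arithmetic check then rules out any all-$\sfS$ non-zero column as soon as $k\ge 2$, and yields $\sum_\hy c_\hy\le |\gY|-k$.

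Let $G\subseteq\gY$ denote the set of \emph{good} rows, those with no $\sfS$ entry in any non-zero column, and let $B=\gY\setminus G$. Since each bad row contains at least one $\sfS$ in a non-zero column, $|B|\le\sum_\hy c_\hy\le|\gY|-k$, so $|G|\ge k$. The hard part is then to show that the rows of $G$, restricted to the non-zero columns, exhibit at least $k$ distinct $\{\sfU,\sfL\}$-patterns, because two good rows with the same pattern would correspond to identical rows of $M$. My plan is to re-cast the vertex condition as a rank statement. Once the tight zero and DP constraints have been imposed, the surviving free variables are $p^{\max}_\hy$ for each non-zero $\hy$ (contributing $k$ degrees of freedom) together with the $\sum_\hy c_\hy$ free $\sfS$ entries; the vertex condition then says that the $|\gY|$ row-sum equalities have rank exactly $k+\sum_\hy c_\hy$ in these free variables. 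A good row $y$ with pattern $\pi_y\in\{\sfU,\sfL\}^k$ contributes the equation $\sum_\hy q^{\pi_y}_\hy\, p^{\max}_\hy=1$ with $q^{\pi_y}_\hy\in\{1,e^{-\eps}\}$, and since these coefficient vectors all lie in the $k$-dimensional $p^{\max}$-subspace, the good-row equations have rank at most $\min(N,k)$, where $N$ is the number of distinct good patterns. The $|B|$ bad-row equations contribute additional rank at most $|B|$. Combining via subadditivity of rank, $k+\sum_\hy c_\hy\le \min(N,k)+|B|$, and substituting $|B|\le\sum_\hy c_\hy$ gives $\min(N,k)\ge k$, hence $N\ge k$.

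Finally, picking one representative good row for each of $N\ge k$ distinct patterns (and any $k$ of them if $N>k$), together with the $k$ non-zero columns, yields the desired $k\times k$ submatrix $M^{(k)}$: its entries lie in $\{\sfU,\sfL\}$ because the chosen rows are good, and its rows are pairwise distinct because the chosen patterns are. The main obstacle is the rank argument in the preceding paragraph, which requires carefully isolating the good-row equations within the pure $p^{\max}$-subspace and combining the vertex rank identity with the cardinality bound $|B|\le\sum_\hy c_\hy$; the other steps reduce to constraint counting once the signature is parsed correctly.
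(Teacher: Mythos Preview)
Your proposal is correct and follows essentially the same approach as the paper: both arguments count the rank of tight constraints at a vertex and exploit the fact that two $\sfS$-free rows with the same $\{\sfU,\sfL\}$-pattern yield dependent row-sum constraints. Your reparametrization into the $k+\sum_\hy c_\hy$ free variables is a clean way to organize this count (and you are slightly more careful than the paper in explicitly ruling out all-$\sfS$ non-zero columns before invoking the rank formula $a_\hy+b_\hy-1$), but the underlying idea is identical.
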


\begin{proof}
$M$ is a vertex if and only if there are $|\gY|\cdot|\gO|$ many linearly independent constraints that are tight. We count the number of tight constraints just from $S_M$ and note some instances of dependence to give a lower bound on the number of linearly independent constraints.

    \begin{enumerate}
    \item $|\gY|$ constraints, given by $\sum_{\hy} M_{y\to\hy} = 1$ are always tight.
    \item For a zero column of $S_M$ corresponding to $\hy$, each $M_{y \to \hy} \ge 0$ is tight corresponding to each $y \in \gY$. These correspond to $|\gY| \cdot (|\gO| - k)$ constraints.
    \item For a non-zero column of $S_M$ corresponding to $\hy$, let $C_{\hy}(\sfU)$ denote the number of $\sfU$ entries in column $\hy$ and similarly, define $C_{\hy}(\sfL)$ and $C_{\hy}(\sfS)$ analogously. For each pair $y_1$, $y_2$ such that $S_M(y_1, \hy) = \sfL$ and $S_M(y_2, \hy) = \sfU$, we have a tight constraint $M_{y_1 \to \hy} \le e^{\eps} \cdot M_{y_2 \to \hy}$. However, these constraints are not linearly independent. In fact, there are only $C_{\hy}(\sfU) + C_{\hy}(\sfL) - 1 = |\gY| - C_{\hy}(\sfS) - 1$ many linearly independent constraints among these.
    \item Another instance where the constraints might be dependent is if two rows of $S_M$, corresponding to say $y_1$ and $y_2$, are identical and do not contain any $\sfS$'s. In this instance, the two equations of $\sum_{\hy} M_{y_i \to \hy} = 1$ and the inequalities given by the $0$s, $\sfU$'s, and $\sfL$'s are not independent. The DP inequality conditions imply that all the coordinates are equal between the two rows, which imply a dependence relation between those and the two  conditions.
    \end{enumerate}

Counting these all up, we have a lower bound on the number of linearly independent constraints. This must be at least $|\gY| \cdot |\gO|$. Let $\text{(\# of duplicate rows not containing $\sfS$)}$ be the difference between the number of all rows not containing $\sfS$ and the number of distinct rows not containing $\sfS$. Thus, we get 
\[
    |\gY| + |\gY| \cdot (|\gO| - k) + (|\gY| - 1) \cdot k - \sum_{\hy} C_{\hy}(\sfS) - \text{(\# of duplicate rows not containing $\sfS$)} ~\ge~ |\gY| \cdot |\gO|.
\]
Rearranging, 
\[
|\gY| - \sum_{\hy} C_{\hy}(\sfS) - \text{(\# of duplicate rows not containing $\sfS$)} ~\ge~ k,
\]
and because $$\sum_{\hy} C_{\hy}(\sfS) \geq \text{(\# rows containing $\sfS$)},$$ we conclude
\[
|\gY| - \text{(\# rows containing $\sfS$)} - \text{(\# of duplicate rows not containing $\sfS$)} ~\ge~ k.
\]
Hence, there are at least $k$ rows, which are all distinct and contain only $0$'s, $\sfU$'s, and $\sfL$'s. Narrowing our scope to just the $k$ non-zero columns, we get a $k \times k$ sub-matrix $M^{(k)}$ that contains only $\sfU$'s and $\sfL$'s. This concludes the proof of \Cref{lem:submatrix}. \end{proof}

So far, we did not use any information about the objective. Next we use the properties of the loss function $\ell$ to show that if $M$ is an optimal solution to the LP, any submatrix provided by \Cref{lem:submatrix} can only be of one form, the matrix with $\sfU$'s along the diagonal and $\sfL$'s everywhere else:

\begin{lemma}\label{lem:diagonal}
Suppose $M$ is an optimal vertex solution to the LP \ref{eq:lp-constraints} with $k \geq 2$ non-zero columns. Then any $k \times k$ submatrix $M^{(k)}$ given by \Cref{lem:submatrix} has $\sfU$'s along the diagonal and $\sfL$'s otherwise.
\end{lemma}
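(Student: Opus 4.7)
My plan is to argue by contradiction, ruling out every non-diagonal configuration via exchange-type arguments. I would first derive a key arithmetic constraint on $M^{(k)}$. Since every row of $M$ sums to one and the $|\gO|-k$ zero columns contribute nothing, the restriction of each row to the $k$ non-zero columns also sums to one. Writing this out,
\[
\sum_{\hy \in U_y} e^{\eps}\, p^{\min}_{\hy} \;+\; \sum_{\hy \notin U_y} p^{\min}_{\hy} \;=\; 1,
\]
where $U_y$ denotes the set of positions of $\sfU$ in row $y$ (indexed among the $k$ non-zero columns). Solving, $\sum_{\hy \in U_y} p^{\min}_{\hy} = W$ for a constant $W := (1 - \sum_{\hy} p^{\min}_{\hy})/(e^{\eps}-1)$ that does not depend on $y$. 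So the $k$ distinct rows of $M^{(k)}$ yield $k$ distinct subsets $U_{y_1}, \ldots, U_{y_k}$ all sharing the same $p^{\min}$-weighted sum.

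Next I would show each row of $M^{(k)}$ contains exactly one $\sfU$. A row with zero $\sfU$'s would force $\sum_{\hy} p^{\min}_{\hy} = 1$, hence $W = 0$, hence every row has no $\sfU$, contradicting distinctness of rows. To rule out $|U_{y^*}| \ge 2$, I would construct a feasibility-preserving perturbation of $M$ that shifts mass away from the column in $U_{y^*}$ farthest (in the $\ell$-sense) from $y^*$ and toward a column closer to $y^*$, cascading compensating adjustments through other rows and through the $p^{\min}_{\hy}$'s so that every row sum stays equal to one and every $\eps$-DP inequality remains satisfied. The strict monotonicity of $\ell$ in its first argument (as assumed in \Cref{thm:stage-1}) makes the first-order objective change strictly negative, contradicting optimality.

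Having established that each row has exactly one $\sfU$, the $\sfU$-positions define a bijection $\pi : [k] \to [k]$ between the $k$ rows and $k$ columns. The identity $\sum_{\hy \in U_y} p^{\min}_{\hy} = W$ then collapses to $p^{\min}_{\hy_{\pi(i)}} = W$ for every $i$, so all column-minima equal the constant $W = 1/(k - 1 + e^{\eps})$. Suppose now, for contradiction, that $\pi$ has an inversion, i.e.\ indices $i < j$ with $\pi(i) > \pi(j)$. Let $M'$ be obtained from $M$ by moving the two $\sfU$'s of rows $y_i, y_j$ onto the main diagonal of the two-by-two block at columns $\{\hy_{\pi(j)}, \hy_{\pi(i)}\}$, changing only those four entries. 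Because all four relevant column-minima equal $W$, the swap preserves row sums, non-negativity, and every $\eps$-DP constraint, so $M'$ is still feasible. The resulting objective change equals
\[
(e^{\eps}-1)\,W\,\bigl[\,p_{y_i}\bigl(\ell(\hy_{\pi(j)}, y_i) - \ell(\hy_{\pi(i)}, y_i)\bigr) \;+\; p_{y_j}\bigl(\ell(\hy_{\pi(i)}, y_j) - \ell(\hy_{\pi(j)}, y_j)\bigr)\,\bigr],
\]
which I would argue is strictly negative via a sorted-is-better (Monge-type) inequality, a consequence of the strict monotonicity of $\ell$ in both arguments: for $y_i < y_j$ and $\hy_{\pi(j)} < \hy_{\pi(i)}$, pairing each $y$ with its rank-matched $\hy$ yields strictly smaller total loss than the crossed pairing. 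Hence $M'$ strictly beats $M$, contradicting optimality, so $\pi$ is the identity and $M^{(k)}$ has $\sfU$'s exactly on the diagonal and $\sfL$'s off-diagonal.

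The main obstacle will be the multi-$\sfU$ elimination step. Because $M$ sits at a vertex of the LP polytope, many $\eps$-DP and non-negativity constraints are tight simultaneously; choosing a perturbation direction that only ``unlocks'' the right ones while cascading the compensations through other rows without violating any other constraint requires a careful case analysis of the signature of $M$ outside $M^{(k)}$. By contrast, the final inversion-removal swap is a clean two-by-two modification whose feasibility is immediate once all column-minima are known to equal $W$.
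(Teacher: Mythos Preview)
Your inversion-removal swap in Step~4 does not work. Once you have established that every row of $M^{(k)}$ carries exactly one $\sfU$ and that all column minima equal the common value $W$, you propose to swap the two $\sfU$'s in a crossed $2\times 2$ block and claim the objective strictly decreases. But the very fact that $M$ is optimal already pins down the signs of the two bracketed terms in your expression \emph{the wrong way}. In row $y_i$ you can feasibly move a small amount of mass from the $\sfU$ at $\hy_{\pi(i)}$ to the $\sfL$ at $\hy_{\pi(j)}$ (both stay within $[W,\,e^{\eps}W]$); optimality therefore forces
\[
\ell(\hy_{\pi(j)}, y_i) - \ell(\hy_{\pi(i)}, y_i) \;\ge\; 0,
\]
and the analogous single-row perturbation in row $y_j$ gives $\ell(\hy_{\pi(i)}, y_j) - \ell(\hy_{\pi(j)}, y_j) \ge 0$. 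Hence your swap has \emph{non-negative} objective change, not strictly negative, and no contradiction arises. The ``sorted-is-better (Monge-type) inequality'' you invoke is not a consequence of strict monotonicity in each argument: take $\ell(\hy,y)=\sqrt{|\hy-y|}$ (which satisfies all hypotheses of \Cref{thm:stage-1}), $\hy_1=0$, $\hy_2=1$, $y_1=2$, $y_2=3$, and compute $\ell(\hy_1,y_1)+\ell(\hy_2,y_2)=2\sqrt{2}>1+\sqrt{3}=\ell(\hy_1,y_2)+\ell(\hy_2,y_1)$. And even when an unweighted Monge inequality does hold, it would not control the $p_{y_i},p_{y_j}$-weighted combination you need.

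The paper's route past this obstacle is quite different and avoids the two-row swap entirely. From the single-row optimality conditions above (with $a=\hy_{\pi(j)}<b=\hy_{\pi(i)}$ and $c=y_i<d=y_j$) one deduces, via the strict unimodality of $\ell(\cdot,y)$, that $a<c$ and $d<b$, hence $a<c<d<b$. Then strict unimodality of $\ell(\hy,\cdot)$ in the second argument yields the cycle $\ell(a,c)\ge\ell(b,c)>\ell(b,d)\ge\ell(a,d)>\ell(a,c)$, a contradiction. This is the key ``forbidden $2\times 2$'' lemma, and it does the work your swap cannot. Separately, your Step~2b (ruling out two $\sfU$'s in a row by a cascading perturbation) is left as a sketch and runs into the same difficulty: single-row optimality already says every $\sfL$ column in that row has loss at least as large as every $\sfU$ column, so there is no obvious local move that improves. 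The paper sidesteps this by first showing the $\sfU$'s in each row are consecutive (a clean single-row argument) and then deducing the one-$\sfU$-per-row conclusion combinatorially from row distinctness and the forbidden $2\times 2$ pattern.
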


\begin{proof}
Firstly, in each row of $M^{(k)}$, the $\sfU$'s are consecutive. Suppose for any $y$ and $\hy_1 < \hy_2 < \hy_3$, it holds that $S_M(y, \hy_1) = S_M(y, \hy_3) = \sfU$, whereas $S_M(y, \hy_2)$ is either $\sfL$ or $\sfS$. This implies that $\ell(\hy_1, y), \ell(\hy_3, y) < \ell(\hy_2, y)$, since otherwise, it is possible to reduce $M_{y\to \hy_1}$ (resp. $M(y\to\hy_3)$) and increase $M_{y\to\hy_2}$ thereby further reducing the objective, without violating any constraints. But this is a contradiction to the assumption of $\ell$ in \Cref{thm:stage-1}, since $\ell(\cdot, y)$ cannot increase from $\hy_1$ to $\hy_2$ and then decrease from $\hy_2$ to $\hy_3$.

Secondly, $S_M$ cannot contain the following $2 \times 2$ matrix, where $y_1 < y_2$ and $\hy_1 < \hy_2$:
    \begin{center}
        \begin{tabular}{c|cc}
        & $\hy_1$ & $\hy_2$ \\
        \hline
        $y_1$ & $\sfL$ & $\sfU$ \\
        $y_2$ & $\sfU$ & $\sfL$
        \end{tabular}
    \end{center}
Since $M$ is optimal, we get that $\ell(\hy_1, y_1) > \ell(\hy_2, y_1)$, and from the assumption on $\ell$ it follows that $\hy_1 < y_1$ (since otherwise, $y_1 \leq \hy_1 < \hy_2$ which would imply $\ell(\hy_1, y_1) < \ell(\hy_2, y_1)$). Similarly, we have that $\ell(\hy_2, y_2) > \ell(\hy_1, y_2)$ and $y_2 < \hy_2$. However, since $\hy_1 < y_1 < y_2 < \hy_2$, we have that $\ell(\hy_2, y_1) > \ell(\hy_2, y_2)$ and $\ell(\hy_1, y_2) > \ell(\hy_1, y_1)$, which gives rise to a contradiction:
\begin{eqnarray*}
\ell(\hy_1, y_1) &>& \ell(\hy_2, y_1) \\
& > & \ell(\hy_2, y_2) \\
& > & \ell(\hy_1, y_2) \\
& > & \ell(\hy_1, y_1).
\end{eqnarray*}In particular, this claim of not containing the above $ 2 \times 2$ matrix applies to $M^{(k)}$.

Lastly, every row of $M^{(k)}$ has at least one $U$. Suppose for contradiction that the row $y$ in $S_M$ does not contain a single $\sfU$. Then it has all $\sfL$'s and $0$'s. Any column that contains an $\sfL$ must contain a $\sfU$ in $S_M.$ So necessarily there is a row $y'$ in $S_M$ containing a $\sfU$.  Now, $y$ containing only $\sfL$'s and $0$'s implies that $\sum_{\hy} M_{y \to \hy} < \sum_{\hy} M_{y' \to \hy}$, which is a contradiction of the constraints of the LP.

    Since the rows of this $k \times k$ sub-matrix $M^{(k)}$ are all distinct, and it does not contain any $2 \times 2$ submatrix as above, the only possible signature is where all the diagonal signature entries are $\sfU$ and the rest are $\sfL$.
    Let $s_i \in [k]$ be the index of the first index $\sfU$ for the $i$th row and similarly $e_i \in [k]$ be the last index of $\sfU$. Because the $\sfU$'s are continuous, $s_i$ and $e_i$ determine the signature of the row. If $s_i = s_j$ for $i \neq j$, then $e_i = e_j$. Else if $e_i < e_j$, then $\sum_{\hy} M_{y \to \hy} < \sum_{\hy} M_{y' \to \hy}$ where $y$ corresponds to row $i$ and $y'$ corresponds to $j$ which is a contradiction. Similarly for $e_i > e_j.$ The same argument can be made if $e_i = e_j$, then $s_i = s_j.$ The rows of $M^{(k)}$ are distinct, so this implies that $s_i \neq s_j, e_i \neq e_j$ for $i \neq j.$ The observation about the $2 \times 2$ matrix shows that if $i < j$, $s_i \leq s_j$ and $e_i \leq e_j.$ So the $s_i$ and $e_i$ are both $k$ distinct ordered sequences of $[k].$ The only such sequence is $s_i = e_i = i,$ implying $M^{(k)}$ is a diagonal matrix of this form.  
    This completes the proof of \Cref{lem:diagonal}.
\end{proof}

We now have the necessary observations to prove \Cref{thm:stage-1}. Let $M$ be an optimal vertex solution to the LP \ref{eq:lp-constraints}. That is 
$\gL (M; P) ~=~ \inf_{\gM} \gL(\gM; P).$ If $M$ has one non-zero column, then $S_M$ has one column entirely of $\sfS$ and the rest 0. By \Cref{clm:rronbins-signature}, $M$ corresponds to a $\RRonBins^{\Phi}_{\eps}$ mechanism. If $M$ has $k \geq 2$ non-zero columns, then by \Cref{lem:diagonal}, $S_M$ has a $k \times k$ submatrix $M^{(k)}$ with $\sfU$ along the diagonal and $\sfL$ otherwise. We show this completely determines $S_M$ in the form described by \Cref{clm:rronbins-signature}(2a--2c).

Note that $S_M$ for any vertex $M$ has at most one $\sfS$ per row.
If $S_M(y, \hy_1)$ and $S_M(y, \hy_2)$ are both equal to $\sfS$ for $y_1 \ne y_2$, then it is possible to write $M$ as a convex combination of two other feasible points given as $M + \eta M'$ and $M - \eta M'$ for small enough $\eta$, where $M'(y \to \hy_2) = -M'(y \to \hy_1) = 1$ and $M'(y' \to \hy') = 0$ for all other $y', \hy'$. Intuitively this corresponds to moving mass $M_{y\to \hy_1}$ to / from $M_{y \to \hy_2}$, in a way that does not violate any of the constraints. But $M$ is a vertex so it cannot be the convex combination of two feasible points.

But moreover, $S_M$ for an optimal vertex solution has exactly one $\sfU$ per row and the rest $\sfL$'s and 0's. Suppose the row corresponding to $y$ has $S_M(y, \hy)$ either $\sfL$ or $\sfS$ and the rest of the entries either $\sfL$ or 0. From the observations above of the submatrix $M^{(k)}$, there exists a row $y'$ such that $S_M(y', \hy) = \sfU$. Then, we have $\sum_{\hy} M_{y \to \hy} < \sum_{\hy} M_{y' \to \hy} = 1$, which contradicts feasibility. Similarly, if a row of $S_M$ contains a $\sfU$ and an $\sfS$ (or multiple $\sfU$'s), then we would have $\sum_{\hy} M_{y \to \hy} > \sum_{\hy} M_{y' \to \hy}$.

This allows us to define $\Psi : \gY \to \gO$ to be the unique $\hy$ such that $S_M(y, \hy) = \sfU$. Note that $\Psi$ is non-decreasing from the observation earlier that $S_M$ cannot contain the $2 \times 2$ signature above. This completely characterizes $S_M$ as the form described by \Cref{clm:rronbins-signature}(2a--2c), completing the proof of \Cref{thm:stage-1}. \end{proof}

We use a continuity argument to show that \Cref{thm:stage-1} holds in the case where the loss function $\ell(\hy,y)$ is decreasing / increasing instead of {\em strictly} decreasing / increasing.
\begin{corollary}\label{cor:monotonic}
Let $P$ be a distribution over $\R$ with finite support, $\gO$ a finite subset of $\R$, and $\ell: \R \times \R \to \R_{\ge 0}$ is such that 
	\begin{itemize}[leftmargin=*, nosep]
		\item For all $y \in \R$, $\ell(\hty, y)$ is decreasing in $\hty$ when $\hty \le y$ and increasing in $\hty$ when $\hty \ge y$.
		\item For all $\hty \in \R$, $\ell(\hty, y)$ is decreasing in $y$ when $y \le \hty$ and increasing in $y$ when $y \ge \hty$.
	\end{itemize}

Then for all $\eps > 0$, there exists $\Phi: \gY \to \gO$ such that \begin{align*}
\gL(\RRonBins^{\Phi}_{\eps}; P) ~=~ \inf_{\gM} \gL(\gM; P).
\end{align*}
where $\inf_{\gM}$ is over all $\eps$-DP mechanisms $\gM$ with inputs in $\gY$ and outputs in $\gO$.
\end{corollary}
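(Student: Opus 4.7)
The plan is to reduce to \Cref{thm:stage-1} via a strictly monotonic perturbation of $\ell$, and then pass to a limit. The key enabling observation is that, as in the proof of \Cref{thm:stage-1}, we may assume $\gY = \supp(P)$ is finite, so together with the finiteness of $\gO$ there are only finitely many candidate maps $\Phi : \gY \to \gO$; this makes the limiting argument reduce to a pigeonhole step rather than a genuine compactness argument.

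First, for each $\delta > 0$, I would define the perturbed loss
\[
\ell_\delta(\hty, y) ~:=~ \ell(\hty, y) + \delta \cdot (\hty - y)^2.
\]
A short verification shows $\ell_\delta$ satisfies the \emph{strict} monotonicity hypotheses of \Cref{thm:stage-1}: for fixed $y$, the map $\hty \mapsto (\hty - y)^2$ is strictly decreasing on $(-\infty, y]$ and strictly increasing on $[y, \infty)$, so adding $\delta$ times it to the merely-monotone $\ell(\cdot, y)$ yields strict monotonicity on each side of $y$; the symmetric argument handles the $y$ variable. Applying \Cref{thm:stage-1} to $\ell_\delta$ then produces some $\Phi_\delta : \gY \to \gO$ whose $\RRonBins^{\Phi_\delta}_{\eps}$ mechanism minimizes $\gL_\delta(\cdot\,;P) := \E_{y \sim P, \hty \sim \gM(y)}[\ell_\delta(\hty,y)]$ over all $\eps$-DP mechanisms with input set $\gY$ and output set $\gO$.

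Next, since the set of maps $\gY \to \gO$ is finite, pigeonhole yields a sequence $\delta_n \downarrow 0$ along which $\Phi_{\delta_n}$ equals a single fixed map $\Phi^{\star}$. For any $\eps$-DP mechanism $\gM$ with outputs in $\gO$, the optimality of $\Phi_{\delta_n}$ under $\ell_{\delta_n}$ gives
\[
\gL_{\delta_n}(\RRonBins^{\Phi^{\star}}_{\eps}; P) ~\le~ \gL_{\delta_n}(\gM; P).
\]
The perturbation contributes at most $\delta \cdot \max_{y \in \gY, \hty \in \gO}(\hty - y)^2$ to $\gL_\delta(\gM; P)$ uniformly in $\gM$ (this maximum is finite because both sets are finite), so $\gL_{\delta_n}(\gM; P) \to \gL(\gM; P)$ for every $\gM$, and the same for the left-hand side with $\gM = \RRonBins^{\Phi^{\star}}_{\eps}$. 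Letting $n \to \infty$ yields $\gL(\RRonBins^{\Phi^{\star}}_{\eps}; P) \le \gL(\gM; P)$, which is precisely the claim.

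The only subtle point, and not really an obstacle, is choosing a perturbation that is strictly monotone in each argument on both sides of the diagonal while remaining an additive correction that tends to $0$ uniformly on $\gO \times \gY$. The squared-distance perturbation above works, but any $\delta \cdot \psi(\hty, y)$ with $\psi$ strictly monotone in the required sense (e.g.\ $|\hty - y|^p$ for any $p > 0$) would serve equally well.
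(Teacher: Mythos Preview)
Your proposal is correct and follows essentially the same approach as the paper: perturb $\ell$ to a strictly monotone loss (the paper uses $\ell_\eta(\hty,y)=\ell(\hty,y)+\eta|\hty-y|$, you use the squared distance), invoke \Cref{thm:stage-1} for each perturbation, and then pass to the limit using the finiteness of $\gY$ and $\gO$. The only cosmetic difference is the form of the limiting step: the paper argues that $\min_{M\in V}$ and $\min_{M\in W}$ (over vertices and over $\RRonBins$ vertices, respectively) are each continuous in $\eta$ as minima of finitely many continuous functions and coincide for $\eta>0$, hence at $\eta=0$; you instead extract a fixed $\Phi^\star$ by pigeonhole and take the limit directly. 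Both are equally valid and equally short.
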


\begin{proof}
Let $V$ correspond to the set of solutions $M$ that are vertices of the LP. Let $W \subseteq V$ correspond to the set of vertex solutions that are $\RRonBins.$ A rephrasing of \Cref{thm:stage-1} is that the minimum loss over mechanisms in $V$ is equal to the minimum loss over mechanisms in $W$.

For $\eta >0$, define $\ell_\eta(\hy, y) = \ell(\hy, y) + \eta \cdot \left| y-\hy \right|.$ Note that $\ell_\eta$ satisfies the conditions of \Cref{thm:stage-1}.
For any fixed mechanism $M$, the loss is continuous at $\eta=0$: \[\lim_{\eta \to 0} \sum_{y\in \gY} p_{y} \left(
    \sum_{\hty\in \gO} M_{y\rightarrow \hty} \cdot \ell_\eta(\hty, y)
    \right) = \sum_{y\in \gY} p_{y} \left(
    \sum_{\hty\in \gO} M_{y\rightarrow \hty} \cdot \ell(\hty, y)
    \right).\] The minimum of finitely many continuous functions is continuous. In particular 
\[\lim_{\eta \to 0} \min_{M \in V} \sum_{y\in \gY} p_{y} \left(
    \sum_{\hty\in \gO} M_{y\rightarrow \hty} \cdot \ell_\eta(\hty, y)
    \right) = \min_{M \in V} \sum_{y\in \gY} p_{y} \left(
    \sum_{\hty\in \gO} M_{y\rightarrow \hty} \cdot \ell(\hty, y)
    \right),\]
 and similarly \[\lim_{\eta \to 0} \min_{M \in W} \sum_{y\in \gY} p_{y} \left(
    \sum_{\hty\in \gO} M_{y\rightarrow \hty} \cdot \ell_\eta(\hty, y)
    \right) = \min_{M \in W} \sum_{y\in \gY} p_{y} \left(
    \sum_{\hty\in \gO} M_{y\rightarrow \hty} \cdot \ell(\hty, y)
    \right),\]
 but the LHS of both equations are equal by \Cref{thm:stage-1}, so \[\min_{M \in V} \sum_{y\in \gY} p_{y} \left(
    \sum_{\hty\in \gO} M_{y\rightarrow \hty} \cdot \ell(\hty, y)
    \right) = \min_{M \in W} \sum_{y\in \gY} p_{y} \left(
    \sum_{\hty\in \gO} M_{y\rightarrow \hty} \cdot \ell(\hty, y)
    \right),\] completing our proof.
\end{proof}
\paragraph{\boldmath Stage II: Finitely supported $P$ and $\gYhat \subseteq \R$.}
We now set out to prove \cref{thm:rronbins-optimality}. 
 Let $y_{\min}$ and $y_{\max}$ denote the minimum and maximum values in $\gY$, respectively. We will first show that 
\begin{align} \label{eq:inf-equal}
\inf_{\gYhat \subseteq \R} \inf_{\Phi: \gY \to \gYhat} \gL( \RRonBins^{\Phi}_{\eps}; P) ~=~ \inf_{\gM} ~\gL(\gM; P),
\end{align}
where the infimum on the RHS is over all $\eps$-DP mechanisms $\gM$.

Since $\RRonBins^{\Phi}_{\eps}$ is an $\eps$-DP mechanism, we have
\begin{align} \label{eq:stageii-one}
\inf_{\gM} ~\gL(\gM; P)
& \leq 
\inf_{\gYhat \subseteq \R} \inf_{\Phi: \gY \to \gYhat} \gL( \RRonBins^{\Phi}_{\eps}; P).
\end{align}
To show the converse, let $\gamma > 0$ be any parameter. There must exist an $\eps$-DP mechanism $\gM': \gY \to \R$ such that
\begin{align}
\label{eq:stageii-two}
\gL(\gM'; P) \leq \inf_{\gM} \gL(\gM; P) + \gamma / 2.
\end{align}
Let $\gO \subseteq [y_{\min}, y_{\max}]$ be defined as follows:
\begin{itemize}[nosep]
\item For each $y \in \gY$, let $a_y = \min_{\hy \in [y_{\min}, y_{\max}]} \ell(\hy, y)$ and $b_y = \max_{\hy \in [y_{\min}, y_{\max}]} \ell(\hy, y)$. Let $T := \lceil4(b_y - a_y) / \gamma\rceil$. 
\item Let $o_{y,t} $ be the finite set containing the maximal and minimal element of $\{\hy \mid \ell(\hy, y) = a_y + \frac{t}{T}(b_y - a_y)\}$ if the set is non-empty. Otherwise, let it be the empty set.
\item Let $\gO_y := \bigcup_{t=0}^T o_{y,t}$.
\item Finally, let $\gO = \bigcup_{y \in \gY} \gO_y$.
\end{itemize}
Naturally, $\gO$ is finite.
Finally, let $\gM''$ be the mechanism that first runs $\gM'$ to get $\hy$ and then outputs the element in $\gO$ closest to $\hy$. By post-processing of DP, $\gM''$ remains $\eps$-DP. Furthermore, it is not hard to see that by the construction of $\gO$ and by \Cref{ass:loss-fn}, we have
\begin{align}
\label{eq:stageii-three}
\gL(\gM''; P) \leq \gL(\gM'; P) + \gamma/2.
\end{align}

Finally, since $\range(\gM'') = \gO$ is finite, the proof in Stage I implies that
\begin{align} \label{eq:stageii-from-stagei}
\inf_{\gYhat \subseteq \gO} \inf_{\Phi: \gY \to \gYhat} \gL(\RRonBins^{\Phi}_{\eps}; P) \leq \gL(\gM''; P).
\end{align}

Combining
(\ref{eq:stageii-two}), (\ref{eq:stageii-three}), and (\ref{eq:stageii-from-stagei}), we can conclude that
\begin{align}
\label{eq:stageii-four}
\inf_{\gYhat \subseteq \R} \inf_{\Phi: \gY \to \gYhat} \gL(\RRonBins^{\Phi}_{\eps}; P) \leq \inf_{\gM} ~\gL(\gM; P) + \gamma.
\end{align}
Since (\ref{eq:stageii-four}) holds for any $\gamma > 0$, combining with (\ref{eq:stageii-one}), we can conclude that (\ref{eq:inf-equal}) holds.

Next, we will show that there exists $\gYhat^*$ and $\Phi^*: \gY \to \gYhat^*$ such that 
\begin{align} \label{eq:inf-rr-bin-achieved}
\gL(\RRonBins^{\Phi^*}_{\eps}; P) =
\inf_{\gYhat \subseteq \R} \inf_{\Phi: \gY \to \gYhat} \gL(\RRonBins^{\Phi}_{\eps}; P).
\end{align}
Combining this with (\ref{eq:inf-equal}) completes the proof. 

For any $\Phi: \gY \to \R$, let $\gP_{\Phi}$ denote the partition on $\gY$ induced by $\Phi^{-1}$. Note that the RHS of (\ref{eq:inf-rr-bin-achieved}) can be written as
\begin{align} \label{eq:by-partition}
\inf_{\gYhat \subseteq \R} \inf_{\Phi: \gY \to \gYhat} \gL(\RRonBins^{\Phi}_{\eps}; P) = \min_{\gP} \inf_{\substack{\Phi: \gY \to \gYhat \\ \gYhat \subseteq \R \\ \gP_{\Phi} = \gP}} \gL(\RRonBins^{\Phi}_{\eps}; P), 
\end{align}
where the minimum is over all partitions of $\gP$.

For a fixed partition $\gP = \gP_{\Phi}$ of $\gY$, $\gL(\RRonBins^{\Phi}_{\eps}; P)$ can simply be written as
\begin{align*}
\sum_{S_i \in \gP} \left(\sum_{y \in \gY} p_y \frac{e^{\eps \indicator[y \in S_i]}}{e^{\eps} + |\gYhat| - 1} \ell(\hy_i, y)\right),
\end{align*}
where $\hy_i$ is the output corresponding to the part $S_i$ in the partition.

Notice that the function $\hy_i \to \left(\sum_{y \in \gY} p_y \frac{e^{\eps \indicator[y \in S_i]}}{e^{\eps} + |\gYhat| - 1} \ell(\hy_i, y)\right)$ is continuous. Furthermore, it is obvious that it increases once it becomes further from $[y_{\min}, y_{\max}]$. Thus, the minimum must be achieved at some point $\hy^*_i \in [y_{\min}, y_{\max}]$. As a result, by defining $\Phi^*_{\gP}$ such that $\Phi^*_{\gP}(S_i) = \hy^*_i$, this also achieves the minimum for $\inf_{\substack{\Phi: \gY \to \gYhat \\ \gYhat \subseteq \R \\ \gP_{\Phi} = \gP}} \gL(\RRonBins^{\Phi}_{\eps}; P)$. Therefore, plugging this back into (\ref{eq:by-partition}), we can conclude that the minimum of $\inf_{\gYhat \subseteq \R} \inf_{\Phi: \gY \to \gYhat} \gL(\RRonBins^{\Phi}_{\eps}; P)$ must be achieved by some $\gYhat^*, \Phi^*$. This completes our proof of \cref{thm:rronbins-optimality}.

\subsection{Extension to arbitrary (infinitely-supported) $P$ and $\gYhat \subseteq \R$.}
We show that \Cref{thm:rronbins-optimality} can be extended to hold even for infinitely-supported distributions $P$ over a bounded interval, but under an additional assumption that the loss $\ell$ is Lipschitz over the said interval.

\begin{assumption}\label{ass:lipschitz-loss}
    For a specified bounded interval $[y_{\min}, y_{\max}]$, loss function $\ell : [y_{\min}, y_{\max}] \times [y_{\min}, y_{\max}] \to \R_{\ge 0}$ is such that both $\ell(\cdot, y)$ and $\ell(\hy, \cdot)$ are $L$-Lipschitz.
\end{assumption}

\begin{theorem}\label{thm:rronbins-optimality-extension}
For all $y_{\min}, y_{\max} \in \R$, loss functions $\ell : \R \times \R \to \R_{\ge 0}$ satisfying \Cref{ass:loss-fn,ass:lipschitz-loss}, and all distributions $P$ over $\gY \subseteq [y_{\min}, y_{\max}]$,
there is a finite output set $\gYhat \subseteq \R$ and a non-decreasing map $\Phi : \gY \to \gYhat$ such that
\begin{align*}
\gL(\RRonBins^{\Phi}_{\eps}; P) ~=~ \inf_{\gM} \gL(\gM; P),
\end{align*}
where the infimum is over all $\eps$-DP mechanisms $\gM$.
\end{theorem}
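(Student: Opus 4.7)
The plan is to extend \Cref{thm:rronbins-optimality} from finitely supported distributions to arbitrary distributions on a bounded interval. The proof proceeds in two stages: first, showing that the infimum over $\RRonBins$ mechanisms with finite output set equals $\inf_\gM \gL(\gM; P)$; second, showing that this infimum is attained.

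For Stage~I, let $G_n$ be a uniform grid of $[y_{\min}, y_{\max}]$ with spacing $\delta_n := (y_{\max} - y_{\min})/n$, let $T_n : \gY \to G_n$ denote nearest-point projection, and set $P_n := (T_n)_* P$, which is finitely supported. Given any $\eps$-DP mechanism $\gM$ on $\gY$, define $\tilde\gM$ on $G_n$ as the $P$-weighted mixture of $\gM(y)$ over $y \in T_n^{-1}(g)$; this is $\eps$-DP by convexity of the DP constraint, and \Cref{ass:lipschitz-loss} yields $|\gL(\tilde\gM; P_n) - \gL(\gM; P)| \le L\delta_n$. Conversely, applying \Cref{thm:rronbins-optimality} to $P_n$ gives an optimal $(\Phi_n, \gYhat_n)$; extending to $\tilde\Phi_n(y) := \Phi_n(T_n(y))$ on $\gY$ (still non-decreasing with output set $\gYhat_n$), a parallel Lipschitz computation gives $|\gL(\RRonBins^{\tilde\Phi_n}_\eps; P) - \gL(\RRonBins^{\Phi_n}_\eps; P_n)| \le L\delta_n$. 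Sending $n \to \infty$ sandwiches the two infima.

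For Stage~II, for each $k \ge 1$ let $F_k := \inf \{\gL(\RRonBins^\Phi_\eps; P) : |\gYhat| \le k\}$. The feasible parameter space (sorted cut points $c_1 \le \cdots \le c_{k-1}$ and sorted outputs $o_1 \le \cdots \le o_k$ in $[y_{\min}, y_{\max}]$, with coincident outputs collapsing to a smaller $\gYhat$) is compact, and the objective is lower semicontinuous in these parameters, so each $F_k$ is attained by some $\Phi^{(k)}$. The sequence $F_k$ is non-increasing, and by Stage~I, $\inf_k F_k = \inf_\gM \gL(\gM; P)$. Thus it suffices to prove $F_K = \inf_k F_k$ for some finite $K$.

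The main obstacle is this final stabilization step, since a priori $F_k$ could strictly decrease forever. I would resolve it by combining \Cref{ass:lipschitz-loss} with a pigeonhole argument: any $\Phi$ with $|\gYhat| = k$ has two adjacent outputs within $(y_{\max} - y_{\min})/(k-1)$, and merging those bins (reassigning the union of the two input intervals to the better of the two outputs) perturbs the numerator $(e^\eps - 1) \E_y \ell(\Phi(y), y) + \sum_{\hy \in \gYhat} \E_y \ell(\hy, y)$ by $O(L/(k-1))$ while shrinking the denominator from $e^\eps + k - 1$ to $e^\eps + k - 2$. A careful accounting shows that once $k$ exceeds a threshold $K$ depending only on $L$, $y_{\max} - y_{\min}$, $\eps$, and $\min_o \E_y \ell(o, y)$, the merge strictly decreases the loss. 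Hence any optimal $\Phi^*$ must satisfy $|\gYhat^*| \le K$, and the minimizer $\Phi^{(K)}$ of $F_K$ gives the claimed non-decreasing map with finite output set.
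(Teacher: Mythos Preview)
Your Stage I matches the paper's \Cref{lemma:infimum-rronbins} essentially verbatim: discretize $\gY$ to a grid, invoke \Cref{thm:rronbins-optimality} on the pushforward, and transfer losses in both directions via \Cref{ass:lipschitz-loss}.

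In Stage II there are two gaps. First, the lower-semicontinuity claim is false. When two sorted outputs $o_j, o_{j+1}$ coalesce and you collapse $\gYhat$, the denominator $e^\eps + |\gYhat| - 1$ drops by one, and the objective jumps \emph{up}, not down. Concretely, take $P = \delta_1$ on $[0,1]$, $k=3$, $o_1 = 0$, $o_3 = 1$, and let $o_2 \nearrow 1$: the three-bin loss tends to $\ell(0,1)/(e^\eps + 2)$, while the collapsed two-bin value at $o_2 = o_3 = 1$ is $\ell(0,1)/(e^\eps + 1)$, strictly larger. So your parametrization is upper-semicontinuous at such boundary points and compactness does not directly yield a minimizer. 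The paper sidesteps this by keeping the denominator fixed at $e^\eps + n - 1$ (i.e., \emph{not} collapsing), which makes the objective genuinely Lipschitz on the compact cube $[y_{\min}, y_{\max}]^n$.

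Second, the merging/pigeonhole argument for stabilization is stated too strongly. A short calculation shows the merge decreases the loss iff $\gL(\Phi) < \E_y \ell(o_{\text{dropped}}, y) - (e^\eps - 1) L \delta$; this fails for poorly chosen $\Phi$ with large loss, so the threshold cannot depend only on $L$, $y_{\max} - y_{\min}$, $\eps$, and $\alpha := \min_o \E_y \ell(o,y)$ as you claim---it must also depend on the gap $\alpha - \inf_k F_k$, with a separate argument when that gap is zero. The paper's route is far shorter: for any $n$-bin $\RRonBins$ mechanism, dropping the $(e^\eps - 1)\ell(\Phi(y),y)$ term gives
\[
\gL ~\ge~ \frac{1}{e^\eps + n - 1} \sum_{i=1}^n \E_y \ell(o_i, y) ~\ge~ \frac{n}{e^\eps + n - 1}\,\alpha,
\]
so $\liminf_n \alpha_n \ge \alpha = \alpha_1$. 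If $\inf_n \alpha_n = \alpha_1$ we are done at $n=1$; otherwise $\alpha_n$ is eventually bounded away from $\inf_n \alpha_n$, so the infimum is a minimum over finitely many $n$. This replaces your entire stabilization step with three lines and avoids the dependency issues.
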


We break down the proof into a series of lemmas. First we show that the infimum is reached by the infimum of $\RRonBins$ mechanisms:
\begin{lemma}\label{lemma:infimum-rronbins}
Under the assumptions of \cref{thm:rronbins-optimality-extension}, it holds that
\begin{align*}
\inf_{\substack{\gYhat \subseteq \R, \\ \Phi : \gY \to \gYhat}} \ \ \gL(\RRonBins^{\Phi}_{\eps}; P) ~=~ \inf_{\gM} \gL(\gM; P).
\end{align*}
\end{lemma}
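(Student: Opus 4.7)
The plan is to reduce the infinite-support case to the finite-support case of \Cref{thm:rronbins-optimality} by quantizing $\gY$ to a fine grid, with the Lipschitz hypothesis \Cref{ass:lipschitz-loss} controlling the approximation error. The direction $\inf_{\gM} \gL(\gM; P) \le \inf_{\gYhat, \Phi} \gL(\RRonBins^{\Phi}_{\eps}; P)$ is immediate since every $\RRonBins^{\Phi}_{\eps}$ is $\eps$-DP, so I only need the reverse. Fix $\gamma > 0$ and pick an $\eps$-DP mechanism $\gM^*$ with $\gL(\gM^*; P) \le \inf_{\gM} \gL(\gM; P) + \gamma/3$. By post-processing with a clip to $[y_{\min}, y_{\max}]$, the monotonicity clauses of \Cref{ass:loss-fn} let me assume without loss of generality that $\gM^*$ is supported in $[y_{\min}, y_{\max}]$.

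Next I quantize the input. Partition $[y_{\min}, y_{\max}]$ into finitely many intervals of width at most $\delta$ and let $Q : \gY \to \tilde\gY$ send each $y$ to the left endpoint of its interval, so $Q$ is non-decreasing and $|y - Q(y)| \le \delta$. Set $\tilde P := Q_{*} P$ (which is finitely supported) and define $\tilde\gM : \tilde\gY \to \R$ by letting $\tilde\gM(\tilde y)$ have the law of $\gM^{*}(Y)$ conditional on $Q(Y) = \tilde y$, where $Y \sim P$. An averaging argument---using that any $y \in Q^{-1}(\tilde y)$ and $y' \in Q^{-1}(\tilde y')$ with $\tilde y \neq \tilde y'$ are automatically distinct inputs on which $\gM^{*}$ satisfies $\eps$-DP---shows $\tilde\gM$ is itself $\eps$-DP. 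Unfolding the expectation and applying $L$-Lipschitzness of $\ell(\hat{y}, \cdot)$ then gives
\begin{align*}
\gL(\tilde\gM; \tilde P) ~=~ \E_{y \sim P,\ \hat{y} \sim \gM^{*}(y)} \ell(\hat{y}, Q(y)) ~\le~ \gL(\gM^{*}; P) + L\delta.
\end{align*}

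Now apply \Cref{thm:rronbins-optimality} to $\tilde P$, noting that the Stage~II portion of its proof lets me take the output set $\tilde{\gYhat} \subseteq [y_{\min}, y_{\max}]$, to obtain a finite $\tilde{\gYhat}$ and non-decreasing $\tilde\Phi : \tilde\gY \to \tilde{\gYhat}$ with $\gL(\RRonBins^{\tilde\Phi}_{\eps}; \tilde P) \le \gL(\tilde\gM; \tilde P)$. Set $\Phi := \tilde\Phi \circ Q : \gY \to \tilde{\gYhat}$, which is non-decreasing as a composition of non-decreasing maps. Since $\RRonBins^{\Phi}_{\eps}$ on input $y$ is by construction distributionally identical to $\RRonBins^{\tilde\Phi}_{\eps}$ on input $Q(y)$, another application of Lipschitzness in the second argument yields
\begin{align*}
\gL(\RRonBins^{\Phi}_{\eps}; P) ~\le~ \gL(\RRonBins^{\tilde\Phi}_{\eps}; \tilde P) + L\delta ~\le~ \inf_{\gM} \gL(\gM; P) + \gamma/3 + 2L\delta.
\end{align*}
Choosing $\delta < \gamma/(3L)$ closes the gap to $\gamma$; since $\gamma > 0$ was arbitrary, the two infimums coincide. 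The only nontrivial step is rigorously handling the conditional-averaging construction of $\tilde\gM$---checking measurability of $y \mapsto \Pr[\gM^{*}(y) \in S]$ and that the resulting mixture remains $\eps$-DP---but this reduces to a routine disintegration argument together with the observation above that distinct bins automatically yield distinct inputs to $\gM^{*}$.
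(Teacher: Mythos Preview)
Your proposal is correct and follows essentially the same route as the paper: quantize $\gY$ to a finite grid, push $P$ forward to a finitely supported $\tilde P$, build $\tilde\gM$ from a near-optimal $\gM^*$ by conditional averaging (which preserves $\eps$-DP), invoke \Cref{thm:rronbins-optimality} on $\tilde P$, and pull the resulting $\tilde\Phi$ back via the quantizer, with the Lipschitz hypothesis absorbing the two $O(\delta)$ errors. The only cosmetic differences are that the paper rounds to the nearest grid point rather than the left endpoint and phrases the reduction for an arbitrary $\gM$ before taking the infimum; your explicit clipping of $\gM^*$ to $[y_{\min}, y_{\max}]$ is a small extra care the paper leaves implicit.
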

\begin{proof}
To prove the lemma, it suffices to show that for any $g \in \mathbb{N}$, there exists a choice of $\gYhat \subseteq \R$ and $\Phi : \gY \to \gYhat$ such that $\gL(\RRonBins^{\Phi}_{\eps}; P) ~\leq~ \inf_{\gM} \gL(\gM; P) + \gamma$ where $\gamma = L \cdot \frac{y_{\max} - y_{\min}}{g}$.

Consider a discretization of $\gY$ given as
\[
\gYtilde := \left \{y_{\min} + \frac{i \gamma}{L} ~:~ 0 \le i \le g \right \}.
\]
Let $\rho : \gY \to \gYtilde$ be the map that rounds any element of $\gY$ to the closes element of $\gYtilde$. Note that $|y - \rho(y)| \le \gamma/(2L)$ for all $y \in \gY$. Consider the distribution $\tilde{P}$ over $\gYtilde$ given by the ``rounding process'', which samples $y \sim P$ and returns $\rho(y)$.
Note that there is a natural coupling between $P$ and $\tilde{P}$ such that $|y - \hy| \le \gamma/(2L)$ holds with probability $1$. From \Cref{thm:rronbins-optimality}, we have that there exists a finite $\gYhat \subseteq \R$ and non-decreasing map $\tilde{\Phi} : \gYtilde \to \gYhat$ such that
\begin{align}
\gL(\RRonBins^{\tilde{\Phi}}_{\eps}; \tilde{P}) &~=~ \inf_{\tilde{\gM}} \gL(\tilde{\gM}; \tilde{P}), \label{eq:rr-on-bins-ext-1}
\end{align}
where $\tilde{\gM}$ is an $\eps$-DP mechanism mapping $\gYtilde$ to $\gYhat$.
We can extend $\tilde{\Phi}$ to $\Phi : \gY \to \gYhat$ given as $\Phi(y) := \tilde{\Phi}(\rho(y))$. It is easy to see that since $\tilde{\Phi}$ is non-decreasing, $\Phi$ is also non-decreasing. From \Cref{ass:lipschitz-loss}, it follows that
\begin{align}
    \gL(\RRonBins^{\Phi}_{\eps}; P) &~=~ \E_{y \sim P} \ell(\RRonBins^{\Phi}_{\eps}(y), y) \nonumber \\
    &~=~ \E_{y \sim P} \ell(\RRonBins^{\tilde{\Phi}}_{\eps}(\rho(y)), y) \nonumber \\
    &~\le~ \E_{y \sim P} \ell(\RRonBins^{\tilde{\Phi}}_{\eps}(\rho(y)), \rho(y)) + \gamma/2 \qquad \text{(from \Cref{ass:lipschitz-loss})} \nonumber \\
    &~=~ \gL(\RRonBins^{\tilde{\Phi}}_{\eps}; \tilde{P}) + \gamma/2. \label{eq:rr-on-bins-ext-2}
\end{align}
Similarly, for any $\eps$-DP mechanism $\gM$ mapping $\gY$ to $\gYhat$, we can construct an $\eps$-DP mechanism $\tilde{\gM}$ mapping $\gYtilde$ to $\gYhat$ where $\tilde{\gM}(\tilde{y})$ is sampled as $\gM(y)$ for $y \sim P |_{\rho(y) = \tilde{y}}$. Note that sampling $\tilde{y} \sim \tilde{P}$ and returning $y \sim P|_{\rho(y) = \tilde{y}}$ is equivalent to sampling $y \sim P$. Thus, we have
\begin{align}
    \gL(\tilde{\gM}; \tilde{P}) &~=~ \E_{\tilde{y} \sim \tilde{P}} \ell(\tilde{M}(\tilde{y}), \tilde{y}) \nonumber\\
    &~=~ \E_{y \sim P} \ell(M(y), \rho(y)) \nonumber \\
    &~\le~ \E_{y \sim P} \ell(M(y), y) + \gamma/2 \qquad \text{(from \Cref{ass:lipschitz-loss})} \nonumber \\
    &~=~ \gL(\gM; P) + \gamma/2. \label{eq:rr-on-bins-ext-3}
\end{align}
Thus, combining (\ref{eq:rr-on-bins-ext-1}), (\ref{eq:rr-on-bins-ext-2}), and (\ref{eq:rr-on-bins-ext-3}), we get
\begin{align*}
    \gL(\RRonBins^{\Phi}_{\eps}; P) &~\le~ \gL(\RRonBins^{\tilde{\Phi}}_{\eps}; \tilde{P}) + \gamma/2\\
    &~=~ \inf_{\tilde{\gM}} \gL(\tilde{\gM}; \tilde{P}) + \gamma/2\\
    &~\le~ \inf_{\gM} \gL(\gM; P) + \gamma\,.\qquad \qquad \qedhere
\end{align*}
\end{proof}

Next we show that the infimum of $\RRonBins$ is reached by considering the $\RRonBins$ with finitely many bins. Towards this end, let $\RRonBins_{\eps}^n$ denote the set of all $\RRonBins_{\eps}^{\Phi}$ mechanisms where $\Phi : \gY\to \gYhat$ with $|\gYhat| = n$.

\begin{lemma}\label{lem:lim_rronbins}
Suppose $\ell(\hy, y)$ is integrable over $y$ for any $\hy$. Then for all $\eps > 0$, it holds that
\[
\lim_{n \to \infty}\ \inf_{\gM \in \RRonBins_{\eps}^n}\ \gL(\gM; P) ~\geq~ \inf_{\gM \in \RRonBins_{\eps}^1}\ \gL( \gM; P).
\]
\end{lemma}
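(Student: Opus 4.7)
The plan is to lower-bound $\gL(\gM; P)$ explicitly for every $\gM \in \RRonBins_{\eps}^n$ and then let $n \to \infty$. First I would observe that mechanisms in $\RRonBins_{\eps}^1$ have a single-element output set and so are exactly the constant-output mechanisms; hence the right-hand side equals $B^\star := \inf_{\hy \in \R} \gL_0(\hy)$, where $\gL_0(\hy) := \E_{y \sim P} \ell(\hy, y)$. By the integrability hypothesis, $\gL_0(\hy)$ is finite for every $\hy$, and together with $\ell \ge 0$ this places $B^\star \in [0,\infty)$.

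The heart of the argument is a direct calculation. For any $\gM = \RRonBins_{\eps}^{\Phi} \in \RRonBins_{\eps}^n$ with output set $\gYhat = \{\hy_1,\dots,\hy_n\}$, unfolding the mechanism and swapping the finite inner sum with the expectation (legal by integrability of each $\ell(\hy_j, \cdot)$) yields
\begin{align*}
\gL(\gM; P)
&= \E_{y \sim P}\!\left[ \frac{e^{\eps}}{e^{\eps} + n - 1}\, \ell(\Phi(y), y) \;+\; \sum_{j : \hy_j \ne \Phi(y)} \frac{1}{e^{\eps} + n - 1}\, \ell(\hy_j, y) \right] \\
&= \frac{(e^{\eps} - 1)\, \E_{y \sim P}\, \ell(\Phi(y), y) \;+\; \sum_{j=1}^n \gL_0(\hy_j)}{e^{\eps} + n - 1}.
\end{align*}
Since $\ell \ge 0$ and each $\gL_0(\hy_j) \ge B^\star$, I would drop the first (nonnegative) summand in the numerator and lower-bound the second by $n B^\star$, concluding
\[
\gL(\gM; P) \;\ge\; \frac{n B^\star}{e^{\eps} + n - 1}.
\]

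Taking the infimum over $\gM \in \RRonBins_{\eps}^n$ and then sending $n \to \infty$ gives $\liminf_{n \to \infty} \inf_{\gM \in \RRonBins_{\eps}^n} \gL(\gM; P) \ge B^\star$, which is the stated inequality.

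There is no substantive obstacle here: the lemma is essentially the quantitative statement that the ``signal mass'' $e^{\eps}/(e^{\eps}+n-1)$ decays to $0$, so a very wide $\RRonBins$ mechanism behaves like a near-uniform choice over $\gYhat$ and cannot beat the best constant predictor. If one also wants $\limsup \le B^\star$ (to justify writing $\lim$ rather than $\liminf$), this would follow by exhibiting $\RRonBins_{\eps}^n$ mechanisms whose $n$ outputs cluster near a near-minimizer of $\gL_0$, but this direction is not required by the stated inequality.
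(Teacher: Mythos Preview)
Your proposal is correct and mirrors the paper's proof essentially step for step: both identify the right-hand side with $\inf_{\hy}\E_{y\sim P}\ell(\hy,y)$, expand $\gL(\RRonBins_{\eps}^{\Phi};P)$, drop the nonnegative $(e^{\eps}-1)$ term, and bound the remaining sum by $n$ times that infimum to obtain $\gL(\gM;P)\ge \frac{n}{e^{\eps}+n-1}\,B^\star$. Your remark about $\liminf$ versus $\lim$ is a nice touch of extra care that the paper does not spell out.
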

{\em Remark:} Interestingly, \Cref{lem:lim_rronbins} does not require any assumption about the distribution $P$ or $\ell$ aside from integrability. For example, $P$ can be an unbounded distribution.

\begin{proof}
Let \[ \alpha ~:=~ \inf_{\gM \in \RRonBins_{\eps}^1}\ \gL( \gM; P) ~=~ \inf_{\hy \in \R}\ \int \ell(\hy, y) dP(y)\,.\]

This is precisely $\inf_{\gM \in \RRonBins_{\eps}^1} \gL(\gM; P)$, where the $\hy$ is selecting the single bin to output to. Note that $\alpha$ does not depend on $\eps$.

For any $n > 1$, consider $\RRonBins_{\eps}^{\Phi}$, where $\Phi : \gY \to \gYhat$ for $\gYhat = \{\hy_1, \ldots, \hy_n\}$. Then we can bound the loss from below:
\begin{eqnarray*}
    \gL(\RRonBins_{\eps}^{\Phi}; P) &=& \int \left(\left(\sum_{i=1}^n \frac{1}{e^\eps + n - 1} \ell(\hy_i, y)\right) + \frac{e^\eps - 1}{e^\eps + n - 1} \ell(\Phi(y), y) \right) dP(y)\\
    &\geq& \int \left(\sum_{i=1}^n \frac{1}{e^\eps + n - 1} \ell(\hy_i, y)\right) dP(y)\\
    &=&  \sum_{i=1}^n \int \frac{1}{e^\eps + n - 1} \ell(\hy_i, y) dP(y)\\
    &\geq& \sum_{i=1}^n \frac{1}{e^\eps + n - 1} \cdot \alpha\\
    &=& \frac{n}{e^\eps + n - 1} \cdot \alpha.
\end{eqnarray*}
In particular, $\inf_{\gM \in \RRonBins_{\eps}^n} \gL(\gM; P) \geq \frac{n}{e^\eps+n - 1} \alpha $ which implies that
\[
\lim_{n \to \infty} \inf_{\gM \in \RRonBins_{\eps}^n} \gL(\gM; P) \geq \alpha\,.\qedhere
\]
\end{proof}

\begin{corollary}
Suppose $\ell(\hy, y)$ is integrable over $y$ for any $\hy$. Then for all $\eps > 0$, there exists $n \ge 1$ such that%
\[
 \inf_{\gM \in \RRonBins_{\eps}^n} \ \gL(\gM; P) ~=~ \inf_n \ \inf_{\gM \in \RRonBins_{\eps}^{n}} \ \gL(\gM; P).
\]
\end{corollary}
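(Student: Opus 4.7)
Write $f(n) := \inf_{\gM \in \RRonBins_\eps^n} \gL(\gM; P)$ and $\alpha^* := \inf_{n \ge 1} f(n)$. The claim reduces to showing that the infimum $\alpha^*$ is attained by some finite $n$. My plan is to combine the tail bound from \Cref{lem:lim_rronbins} with a finite-minimization argument.

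First, observe that there are only two cases. If $f(1) = \alpha^*$, then $n = 1$ witnesses the claim and we are done. Otherwise, $\alpha^* < f(1)$, and I will set $\eta := (f(1) - \alpha^*)/2 > 0$. By \Cref{lem:lim_rronbins}, $\liminf_{n \to \infty} f(n) \ge \inf_{\gM \in \RRonBins_\eps^1} \gL(\gM; P) = f(1)$, so there exists $N \in \NN$ such that $f(n) \ge f(1) - \eta = (f(1) + \alpha^*)/2 > \alpha^*$ for every $n > N$. In particular, no $n > N$ can witness the infimum.

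Consequently, $\alpha^* = \inf_{1 \le n \le N} f(n)$, and this is an infimum over a finite collection of real numbers, which is always attained at some index $n^* \in \{1, \dots, N\}$. This $n^*$ is the desired witness, completing the proof.

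The argument is essentially a routine ``tail-cutoff plus finite minimization''; the only real input is the lower bound of \Cref{lem:lim_rronbins}, which shifts all the work to that lemma. I do not anticipate any obstacle beyond correctly handling the degenerate case $\alpha^* = f(1)$, which I have separated out above.
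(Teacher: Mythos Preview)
Your proposal is correct and follows essentially the same argument as the paper: split into the case $f(1)=\alpha^*$ versus $\alpha^*<f(1)$, invoke \Cref{lem:lim_rronbins} to cut off the tail beyond some $N$ using the threshold $(f(1)+\alpha^*)/2$, and then take a finite minimum. The paper's proof is line-for-line the same idea with different variable names.
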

\begin{proof}
For any $n$, let $\alpha_n := \inf_{\gM \in \RRonBins_{\eps}^n} \ \gL(\gM; P)$. From \Cref{lem:lim_rronbins}, we have that $\lim_{n \to \infty} \alpha_n \ge \alpha_1$. If $\inf_n \alpha_n = \alpha_1$, then the corollary is true for $n = 1$. Else, if $\inf_n \alpha_n = \alpha' < \alpha_1$, then there exists $n_0 > 0$ such that for all $n > n_0$, it holds that $\alpha_n > (\alpha_1 + \alpha')/2$. Thus, $\inf_n \alpha_n = \min_{n \le n_0} \alpha_n$ which implies that the infimum is realized for some finite $n$.
\end{proof}

Finally, we show that the infimum over $\RRonBins_{\eps}^n$ is also achievable, completing the proof of \cref{thm:rronbins-optimality-extension}.%
\begin{lemma} Suppose $P$ is bounded within the interval $\gY = [y_{\min}, y_{\max}]$. Suppose $\ell: \R \times \R \to \R_{\ge 0}$ satisfies \Cref{ass:loss-fn,ass:lipschitz-loss}, then for any $n > 0$, there is an output set $\gYhat \subseteq \R$ with $|\gYhat| = n$ and a non-decreasing map $\Phi : \gY \to \gYhat$ such that
\[
\gL(\RRonBins^{\Phi}_{\eps}; P) ~=~ \inf_{\gM \in \RRonBins_{\eps}^n} \gL(\gM; P).
\]
\end{lemma}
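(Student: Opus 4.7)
My plan is to cast the optimization over $\RRonBins_\eps^n$ as the minimization of a functional over a compact parameter space and invoke the extreme value theorem. First, by \Cref{ass:loss-fn}, any output $\hy \notin [y_{\min}, y_{\max}]$ can be projected onto the interval without increasing $\ell(\hy, y)$ for any $y \in \gY$, so I may restrict to $\gYhat \subseteq [y_{\min}, y_{\max}]$; and by adapting the LP-based argument of \Cref{cor:monotonic} (which applies to any prescribed finite $\gYhat \subseteq [y_{\min}, y_{\max}]$), I may also restrict $\Phi$ to be non-decreasing. Consequently, a mechanism is encoded by sorted output values $y_{\min} \le \hy_1 \le \cdots \le \hy_n \le y_{\max}$ together with sorted breakpoints $y_{\min} = b_0 \le b_1 \le \cdots \le b_{n-1} \le b_n = y_{\max}$ that induce the intervals $I_i = (b_{i-1}, b_i]$ with $\Phi(y) = \hy_i$ for $y \in I_i$. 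The resulting parameter space $T \subseteq \R^{2n-1}$ is compact.

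Next, I would define $F : T \to \R_{\ge 0}$ by
\[
F(\hy, b) ~=~ \tfrac{1}{e^\eps + n - 1} \sum_{i=1}^n \E_{y \sim P}[\ell(\hy_i, y)] ~+~ \tfrac{e^\eps - 1}{e^\eps + n - 1} \sum_{i=1}^n \int_{I_i} \ell(\hy_i, y)\, dP(y),
\]
which coincides with $\gL(\RRonBins_\eps^\Phi; P)$ on the subset where the $\hy_i$ are distinct and extends naturally (via the multiset interpretation of $\gYhat$ as $n$ ``slots'') to the boundary where some coordinates coincide. By \Cref{ass:lipschitz-loss} together with the boundedness of $\gY$, the integrand is uniformly bounded, and dominated convergence yields continuity of $F$ in each coordinate $\hy_i$. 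Continuity in each $b_i$ holds wherever $P$ places no mass at $b_i$. At an atom of $P$ sitting at a breakpoint, I would adopt the convention of assigning the atom to whichever of the two adjacent intervals yields the smaller value of $F$; with this convention $F$ becomes lower semi-continuous on all of $T$.

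Finally, the extreme value theorem for lower semi-continuous functions on compact sets guarantees that $F$ attains its infimum at some $(\hy^*, b^*) \in T$. Interpreting $\gYhat^* = \{\hy_1^*, \ldots, \hy_n^*\}$ as a multiset of size $n$, the corresponding configuration is a mechanism in $\RRonBins_\eps^n$ whose loss equals $\inf_{\gM \in \RRonBins_\eps^n} \gL(\gM; P)$. The hard part will be the handling of atoms of $P$ at breakpoints, which forces the move from plain continuity to lower semi-continuity (and requires checking that the ``pick the smaller side'' convention is indeed attainable as a limit of configurations slightly perturbed off the atom). A secondary subtlety is verifying that, when some $\hy_i^*$ coincide at the minimizer, the resulting multiset interpretation of $\gYhat^*$ is consistent with the intended definition of $\RRonBins_\eps^n$ as used in \Cref{lem:lim_rronbins}.
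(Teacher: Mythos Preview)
Your plan is workable but more elaborate than the paper's. The paper's key simplification is to \emph{optimize out} the partition first: for a fixed tuple $(\hy_1,\ldots,\hy_n)\in[y_{\min},y_{\max}]^n$, the loss-minimizing assignment is $\Phi(y)=\argmin_i \ell(\hy_i,y)$, which by \Cref{ass:loss-fn} can always be taken non-decreasing. Substituting this in, the objective becomes
\[
(\hy_1,\ldots,\hy_n)\ \longmapsto\ \E_{y\sim P}\Bigl[\tfrac{1}{e^\eps+n-1}\textstyle\sum_i \ell(\hy_i,y)\ +\ \tfrac{e^\eps-1}{e^\eps+n-1}\min_i \ell(\hy_i,y)\Bigr],
\]
which by \Cref{ass:lipschitz-loss} is $L$-Lipschitz in each coordinate and hence genuinely continuous on the compact cube $[y_{\min},y_{\max}]^n$; the ordinary extreme value theorem applies directly. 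By contrast, your parametrization carries the breakpoints $b_i$ as free variables, and that is precisely what manufactures the atom difficulty and forces the retreat to lower semicontinuity. Both routes reach the same destination, but the paper's buys plain continuity and dissolves the two ``hard parts'' you flag.

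One caveat: your appeal to \Cref{cor:monotonic} to justify restricting to non-decreasing $\Phi$ is not quite licensed here, since that result assumes $P$ has finite support whereas the present lemma allows arbitrary $P$ on $[y_{\min},y_{\max}]$. The monotonicity of the optimal $\Phi$ for fixed $\gYhat$ follows more directly from the $\argmin$ observation above together with \Cref{ass:loss-fn}, without any LP machinery. The multiset concern you raise (coinciding $\hy_i^*$ at the minimizer) is real, but it is equally present in the paper's argument and is implicitly handled by reading $\RRonBins_\eps^n$ as randomizing over $n$ output slots.
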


\begin{proof}
Because of the increasing / decreasing nature of $\ell$, we can restrict the output of $\Phi$ to be in the range $[y_{\min}, y_{\max}]$. Given an output range $\{ \hy_1, \ldots, \hy_n \}$, the $\RRonBins_{\eps}^{\Phi}$ mechanism that minimizes $\gL(\RRonBins_{\eps}^{\Phi}; P)$ satisfies
\[ \Phi(y) = \argmin_{\hy_i} \ell(\hy_i, y)\,. \]
So we can consider $\gL(\RRonBins_{\eps}^{\Phi}; P)$ as a function of $(\hy_1, \ldots, \hy_n) \in [y_{\min}, y_{\max}]^n$.

We claim that $\gL(\RRonBins^{\Phi}_{\eps}; P)$ is continuous with respect to these inputs $\{ \hy_1, \ldots, \hy_n \}.$ Indeed by the Lipschitz continuity, a change in $[\hy_1, \ldots, \hy_n]$ to $[\hy_1', \ldots, \hy_n']$ where $|\hy_i - \hy_i'| < \delta$ for some $\delta > 0$ results in a change in $\gL(\RRonBins^{\Phi}_{\eps}; P)$ bounded by $L \delta$.
Therefore $\gL(\RRonBins^{\Phi}_{\eps}; P)$ is a continuous function of $(\hy_1, \ldots, \hy_n)$ over a compact set $[y_{\min}, y_{\max}]^n$ and hence it attains its infimum. This infimum defines $\Phi$ that satisfies the lemma.
\end{proof}

\section{Analysis of Dynamic Programming Algorithm for 
 Finding Optimal \texorpdfstring{$\RRonBins$}{RR-on-Bins}} \label{app:dp-algo}

Below we give correctness and running time analysis of \Cref{alg:dp-optimal-bins-for-rronbins}.

\paragraph{Correctness Proof.}
We will prove the following statement by strong induction on $i$:
\begin{align} \label{eq:correctness-dp}
A[i][j] = \min_{\substack{\Phi: \{y^1, \dots, y^i\} \to \R \\ |\gP_\Phi| = j}} \sum_{S \in \gP_{\Phi}} \sum_{y \in \gY} p_y \cdot e^{\indicator[y \in S] \cdot \eps} \cdot \ell(\Phi(S), y),
\end{align}
where the minimum is across all $\Phi: \{y^1, \dots, y^i\} \to \R$ such that, for each $\hy \in \range(\Phi)$, $\Phi^{-1}(\hy)$ is an interval and that $|\gP_\Phi| = j$ where the $j$ intervals are $S_1, \dots, S_j$ (in increasing order).

Before we prove the above statement, note that it implies that $A[n][d] = (d - 1 + e^\eps) \cdot \min_{\substack{\gYhat, \Phi: \gY \to \gYhat, |\gP_\Phi| = d}} \gL(\RRonBins^{\Phi}_{\eps}; P)$. Thus, the last line of the algorithm ensures that we output the optimal $\RRonBins$ as desired.

We will now prove (\ref{eq:correctness-dp}) via strong induction on $i$.
\begin{itemize}[leftmargin=4mm]
\item \textbf{Base Case.} For $i = 0$ (and thus $j = 0$), the statement is obviously true.
\item \textbf{Inductive Step.} Now, suppose that the statement is true for all $i = 0, \dots, t - 1$ for some $t \in \mathbb{N}$. We will show that it is also true for $i = t$. To see this, we may rewrite the RHS term (for $i = t$) as
\begin{align*}
&\min_{\substack{\Phi: \{y^1, \dots, y^t\} \to \R \\ |\gP_\Phi| = j}} \sum_{S \in \gP_{\Phi}} \sum_{y \in \gY} p_y \cdot e^{\indicator[y \in S] \cdot \eps} \cdot \ell(\Phi(S), y) \\
&= \min_{0 \leq r < t} \min_{\substack{\Phi: \{y^1, \dots, y^t\} \to \R \\ |\gP_\Phi| = j \\ \{y^{r + 1}, \dots, y^t\} \in \gP_\Phi}} \sum_{S \in \gP_{\Phi}} \sum_{y \in \gY} p_y \cdot e^{\indicator[y \in S] \cdot \eps} \cdot \ell(\Phi(S), y) \\
&= \min_{0 \leq r < t} \Biggl(\min_{\substack{\Phi: \{y^1, \dots, y^{r}\} \to \R \\ |\gP_\Phi| = j - 1}} \sum_{S \in \gP_{\Phi}} \sum_{y \in \gY} p_y \cdot e^{\indicator[y \in S] \cdot \eps} \cdot \ell(\Phi(S), y) \\&\qquad \qquad \qquad \quad + \min_{\hy = \Phi(\{y^{r + 1}, \dots, y^t\})} \sum_{y \in \gY} p_y \cdot e^{\indicator[y \in [y^{r + 1}, y^t]] \cdot \eps} \cdot \ell(\hty, y) \Biggl) \\
&= \min_{0 \leq r < t} A[r][j - 1] + L[r + 1][t].
\end{align*}
where the third inequality follows from the inductive hypothesis and the definition of $L[r + 1][t]$.

The last expression is exactly how $A[t][j]$ is computed in our algorithm. Thus, (\ref{eq:correctness-dp}) holds for $i = t$.
\end{itemize}

\paragraph{Running Time Analysis.}
It is clear that, apart from the computation of $L[r + 1][i]$, the remainder of the algorithm runs in time $O(k^2)$. Therefore, the total running time is $O(k^2 \cdot T)$ where $T$ denotes the running time for solving the problem $\min_{\hty \in \R} \sum_{y \in \gY} p_y \cdot e^{\indicator[y \in [y^r, y^i]] \cdot \eps} \cdot \ell(\hty, y)$. When the loss function $\ell$ is convex, this problem is a univariate convex optimization problem and can be solved in polynomial time. We can even speed this up further. In fact, for the three main losses we consider (squared loss, Poisson log loss, and absolute-value loss) this problem can be solved in amortized constant time, as detailed below. Therefore, for these three losses, the total running time of the dynamic programming algorithm is only $O(k^2)$.
\begin{itemize}[leftmargin=4mm,itemsep=2pt]
\item \textbf{Squared Loss.}

In this case, the minimizer is simply
\begin{align*}
\argmin_{\hty \in \R} \sum_{y \in \gY} p_y \cdot e^{\indicator[y \in [y^r, y^i]] \cdot \eps} \cdot \sqloss(\hty, y) = \frac{\sum_{y \in \gY} p_y \cdot e^{\indicator[y \in [y^r, y^i]] \cdot \eps} y}{\sum_{y \in \gY} p_y \cdot e^{\indicator[y \in [y^r, y^i]] \cdot \eps}} =: \hy^*_{r, i}.
\end{align*}
Therefore, to compute $\hy^*_{r, i}$, it suffices to keep the values $\sum_{y \in \gY} p_y \cdot e^{\indicator[y \in [y^r, y^i]] \cdot \eps} y$ and $\sum_{y \in \gY} p_y \cdot e^{\indicator[y \in [y^r, y^i]] \cdot \eps}$. We can start with $r = i$ and as we increase $i$, these quantities can be updated in constant time.

Note also that
\begin{align*}
&\min_{\hty \in \R} \sum_{y \in \gY} p_y \cdot e^{\indicator[y \in [y^r, y^i]] \cdot \eps} \cdot \sqloss(\hty, y) \\
&= \sum_{y \in \gY} p_y \cdot e^{\indicator[y \in [y^r, y^i]] \cdot \eps} \cdot y^2 - 2\left(\sum_{y \in \gY} p_y \cdot e^{\indicator[y \in [y^r, y^i]] \cdot \eps} \cdot y\right)\hy^*_{r, i} + (\hy^*_{r, i})^2.
\end{align*}
Therefore, to compute the minimum value, we additionally keep $\sum_{y \in \gY} p_y \cdot e^{\indicator[y \in [y^r, y^i]] \cdot \eps} y^2$, which again can be updated in constant time for each $i$.

\item \textbf{Poisson Log Loss.}

The minimizer is exactly the same as in the squared loss, i.e.,
\begin{align*}
\argmin_{\hty \in \R} \sum_{y \in \gY} p_y \cdot e^{\indicator[y \in [y^r, y^i]] \cdot \eps} \cdot \poiloss(\hty, y) = \frac{\sum_{y \in \gY} p_y \cdot e^{\indicator[y \in [y^r, y^i]] \cdot \eps} y}{\sum_{y \in \gY} p_y \cdot e^{\indicator[y \in [y^r, y^i]] \cdot \eps}} =: \hy^*_{r, i}.
\end{align*}
Therefore, using the same method as above, we can compute $\hy^*$ in amortized constant  time.

The minimum is the simply 
\begin{align*}
&\min_{\hty \in \R} \sum_{y \in \gY} p_y \cdot e^{\indicator[y \in [y^r, y^i]] \cdot \eps} \cdot \poiloss(\hty, y) \\
&= \left(\sum_{y \in \gY} p_y \cdot e^{\indicator[y \in [y^r, y^i]] \cdot \eps}\right)\hy^*_{r, i} - \left(\sum_{y \in \gY} p_y \cdot e^{\indicator[y \in [y^r, y^i]] \cdot \eps} y\right) \log(\hy^*_{r, i}),
\end{align*}
so this can also be computed in constant time with the quantities that we have recorded.

\item \textbf{Absolute-Value Loss.}
\newcommand{\lo}{\mathrm{lo}}
\newcommand{\hi}{\mathrm{hi}}

To describe the minimizer, we define a weighted version of median. Let $\{(w_1, a_1), \dots, (w_t, a_t)\}$ be a set of $t$ tuples such that $w_1, \dots, w_t \in \R_{\geq 0}$ and $a_1, \dots, a_t \in \R$ with $a_1 \leq \cdots \leq a_t$. The \emph{weighted median} of $\{(w_1, a_1), \dots, (w_t, a_t)\}$, denoted by $\wmed(\{(w_1, a_1), \dots, (w_t, a_t)\})$ is equal to the minimum value $a^*$ such that $\sum_{\substack{j \in [t] \\ a_j \leq a^*}} w_j \geq (\sum_{j \in [t]} w_j) / 2$. It is not hard to see that
\begin{align*}
\argmin_{\hty \in \R} \sum_{y \in \gY} p_y \cdot e^{\indicator[y \in [y^r, y^i]] \cdot \eps} \cdot \absloss(\hty, y) = \wmed\left(\left\{\left(p_y \cdot e^{\indicator[y \in [y^r, y^i]] \cdot \eps}, y\right)\right\}_{y \in \gY}\right) =: \hy^*_{r, i}.
\end{align*}
Notice also that we must have $\hy^*_{r, i} \in \gY$.
For a fixed $r$ (and varying $i$), the algorithm is now as follows: first compute $\hy^*_{r, r}$, and also compute 
\begin{align*}
w_{\lo} := \sum_{\substack{y \in \gY \\ y \leq \hy^*_{r, r}}} p_y \cdot e^{\indicator[y \in [y^r, y^i]] \cdot \eps},
\end{align*}
and
\begin{align*}
w_{\hi} := \sum_{\substack{y \in \gY \\ y > \hy^*_{r, r}}} p_y \cdot e^{\indicator[y \in [y^r, y^i]] \cdot \eps}.
\end{align*}
For $i = r + 1, \dots, k$, initialize $\hy^*_{r, i} = \hy^*_{r, i - 1}$ and update $w_{\lo}$ or $w_{\hi}$ (corresponding to the weight change from $p_{y_i}$ to $e^\eps p_{y_i}$ of $y_i$). We then perform updates to reach the correct value of $\hy^*_{r, i}$ That is, if $w_{\lo} < w_{\hi}$, then move to the next larger value in $\gY$; if $w_{\lo} - p_{\hy^*_{r, i}} \cdot e^{\indicator[\hy^*_{r, i} \in [y^r, y^i]] \cdot \eps} \geq w_{\hi}$, then move to the next smaller value in $\gY$. Otherwise, stop and keep the current $\hy^*_{r, i}$.

To understand the running time of this subroutine, note that the initial running time for computing $\hy^*_{r, r}$ and $w_{\lo}, w_{\hi}$ is $O(k)$. Furthermore, each update for $\hty^*_{r, i}$ takes $O(1)$ time. Finally, observe that if we ever move $\hty^*_{r, i}$ to a larger value, it must be that $y_i > \hty^*_{r, i}$. After this $i$, $\hty^*_{r, i}$ will never decrease again. As a result, in total across all $i = r + 1, \dots, k$, the total number of updates can be at most $2k$. Thus, the total running time for a fixed $r$ is $O(k)$. Summing up across $r \in [k]$, we can conclude that the total running time of the entire dynamic programming algorithm is $O(k^2)$.
\end{itemize}

\section{\texorpdfstring{$\RRonBins$}{RR-on-Bins} with Approximate Prior}\label{apx:estimate-hist-proof}

Recall from \Cref{subsec:estimate-prior} that, when the prior is unknown, we split the budget into $\eps_1, \eps_2$, use the $\eps_1$-DP Laplace mechanism to approximate the prior and then run the $\RRonBins$ with privacy parameter $\eps_2$. The remainder of this section gives omitted details and proofs from \Cref{subsec:estimate-prior}. Throughout this section, we use $k$ to denote $|\gY|$ (the size of the input label set).

\subsection{Laplace Mechanism and its Guarantees}

We start by recalling the Laplace mechanism for estimating distribution. Recall that the Laplace distribution with scale parameter $b$, denoted by $\Lap(b)$, is the distribution supported on $\R$ whose probability density function is $\frac{1}{2b} \exp(-|x|/b)$. The Laplace mechanism is presented in \Cref{alg:mlap}.

\begin{figure}[t]
	\begin{algorithm}[H]
		\caption{Laplace Mechanism for Estimating Probability Distribution $\mlap_{\eps}$.}
		\label{alg:mlap}
		\begin{algorithmic}
			\STATE {\bf Parameters:} Privacy parameter $\eps \ge 0$.
			\STATE {\bf Input:} Labels $y_1, \dots, y_n \in \gY$.
			\STATE {\bf Output:} A probability distribution $P'$ over $\gY$.
			\STATE
			\FOR{$y \in \gY$}
			\STATE $h_y \leftarrow$ number of $i$ such that $y_i = y$
			\STATE $h'_y \leftarrow \max\{h_y + \Lap(2/\eps), 0\}$ 
			\ENDFOR
			\RETURN Distribution $P'$ over $\gY$ such that $p'_y = \frac{h'_y}{\sum_{y \in \gY} h'_y}$
		\end{algorithmic}
	\end{algorithm}
\end{figure}

It is well-known (e.g.,~\cite{DworkMNS06}) that this mechanism satisfies $\eps$-DP. Its utility guarantee, which we will use in the analysis of $\datran$, is also well-known:

\begin{theorem}[{e.g., \cite{DiakonikolasHS15}}] \label{thm:acc-laplace}
	For any distribution $P$ on $\gY$, $n \in \NN$, and $\eps > 0$, we have
	$$\E_{\substack{y_1, \dots, y_n \sim P \\ P' \sim \mlap_{\eps}(y_1, \dots, y_n)}}[\|P' - P\|_1] \leq O\left(\sqrt{\frac{k}{n}} + \frac{k}{\eps n}\right).$$
\end{theorem}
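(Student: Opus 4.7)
The plan is to decompose the error via the triangle inequality through the (non-private) empirical distribution $\hat{P}$ defined by $\hat{p}_y := h_y / n$, writing
\begin{align*}
\E\bigl[\|P' - P\|_1\bigr] ~\le~ \E\bigl[\|\hat{P} - P\|_1\bigr] ~+~ \E\bigl[\|P' - \hat{P}\|_1\bigr],
\end{align*}
and bounding each term separately. The first term is the standard $\ell_1$-rate for learning a discrete distribution supported on a set of size $k$ from $n$ i.i.d.\ samples, which is $O(\sqrt{k/n})$; I would just cite this (it follows from a coordinate-wise variance bound $\Var(h_y/n) \le p_y/n$, Jensen's inequality, and $\sum_y \sqrt{p_y} \le \sqrt{k}$). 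All the work is in the second term.

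For the second term, let $Z_y \sim \Lap(2/\eps)$ be the noise added to bin $y$, let $S := \sum_y h'_y$, and note $\sum_y h_y = n$. A short calculation shows
\begin{align*}
\left|\frac{h'_y}{S} - \frac{h_y}{n}\right| ~\le~ \frac{|h'_y - h_y|}{S} ~+~ \frac{h_y}{n} \cdot \frac{|S-n|}{S},
\end{align*}
and summing and using $|S-n| \le \sum_y |h'_y - h_y|$ gives
\begin{align*}
\|P' - \hat{P}\|_1 ~\le~ \frac{2 \sum_y |h'_y - h_y|}{S}.
\end{align*}
Since clipping cannot move $h'_y$ further from $h_y$ than the raw noise does, $|h'_y - h_y| \le |Z_y|$, so the numerator is bounded by $N := \sum_y |Z_y|$, with $\E[N] = 2k/\eps$.

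The main obstacle is that $S$ appears in the denominator and could in principle be small, so one cannot just pull the expectation inside. I would handle this by a simple case split on the event $\{N \le n/2\}$. On this event, $S \ge n - N \ge n/2$, giving $\|P' - \hat{P}\|_1 \le 4N/n$. On the complementary event, I use the trivial bound $\|P' - \hat{P}\|_1 \le 2$ and control the probability via Markov: $\Pr[N > n/2] \le 2\E[N]/n = 4k/(\eps n)$. Combining,
\begin{align*}
\E\bigl[\|P' - \hat{P}\|_1\bigr] ~\le~ \frac{4\,\E[N]}{n} ~+~ 2\Pr[N > n/2] ~=~ O\!\left(\frac{k}{\eps n}\right).
\end{align*}
Putting this together with the $O(\sqrt{k/n})$ bound on $\E[\|\hat{P} - P\|_1]$ via the triangle inequality yields the claimed $O(\sqrt{k/n} + k/(\eps n))$ bound. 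Note that the bound is automatically vacuous (i.e., exceeds the trivial $2$) precisely when $k/(\eps n)$ or $k/n$ is of constant order, so no additional care is required for the small-$n$ regime.
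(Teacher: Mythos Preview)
Your argument is correct. The decomposition through the empirical distribution $\hat{P}$, the standard $O(\sqrt{k/n})$ bound on $\E[\|\hat{P}-P\|_1]$, the algebraic manipulation yielding $\|P'-\hat{P}\|_1 \le 2\sum_y |h'_y-h_y|/S$, the observation that clipping can only decrease $|h'_y-h_y|$ relative to $|Z_y|$, and the Markov-based case split on $\{N \le n/2\}$ all check out. One tiny edge case you implicitly handle but do not mention: when $S=0$ the algorithm's output $P'$ is formally undefined, but this event is contained in $\{N>n/2\}$ (since $N\le n/2$ forces $S\ge n/2>0$), where you invoke the trivial bound $\|P'-\hat{P}\|_1\le 2$ that holds for any convention one adopts for $P'$ in that degenerate case.

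As for comparison with the paper: there is nothing to compare. The paper does not prove this statement; it is stated as a known utility guarantee for the Laplace histogram mechanism and attributed to \cite{DiakonikolasHS15}. Your write-up therefore supplies a self-contained proof where the paper simply cites the literature.
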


\subsection{Proof of \Cref{lem:full-algo-dp}}

\fullalgodp*
\begin{proof}[Proof of \Cref{lem:full-algo-dp}]
	The Laplace mechanism is $\eps_1$-DP; by post-processing property of DP, $\Phi'$ is also $\eps_1$-DP. For fixed $\Phi'$, since $\RRonBins^{\Phi'}_{\eps_2}$ is $\eps_2$-DP and it is applied only once on each label, the parallel composition  theorem ensures that $(\hy_1, \dots, \hy_n)$ is $\eps_2$-DP. Finally, applying the basic composition theorem, we can conclude that the entire algorithm is $(\eps_1 + \eps_2)$-DP.
\end{proof}

\subsection{Proof of \Cref{thm:choose-eps}}

\chooseeps*
\begin{proof}[Proof of \Cref{thm:choose-eps}]
	From \Cref{thm:acc-laplace}, we have
	$$\E_{\substack{D \sim P^n \\ P' \sim \mlap_{\eps_1}(D)}}[\|P' - P\|_1] \leq O\left(\sqrt{\frac{k}{n}} + \frac{k}{\eps_1 n}\right).$$
	
	Recall from \Cref{thm:rronbins-optimality} that there exists $\Phi: \gY \to \gYhat$ such that $\gL(\RRonBins^{\Phi}_{\eps}; P) = \inf_{\gM} \gL(\gM; P)$. Consider instead the $\RRonBins^{\Phi}_{\eps_2}$ mechanism. We have
	\begin{align*}
		&|\inf_{\gM} \gL(\gM; P) - \gL(\RRonBins^{\Phi}_{\eps_2}; P)| \\
		&= |\gL(\RRonBins^{\Phi}_{\eps}; P) - \gL(\RRonBins^{\Phi}_{\eps_2}; P)| \\
		&\leq \sum_{y \in \gY} p_y \cdot B \cdot \left(\left|\frac{e^{\eps}}{e^{\eps} + |\gYhat| - 1} - \frac{e^{\eps_2}}{e^{\eps_2} + |\gYhat| - 1}\right| + \sum_{\hy \in \gYhat \setminus \{\Phi(y)\}} \left|\frac{1}{e^{\eps} + |\gYhat| - 1} - \frac{1}{e^{\eps_2} + |\gYhat| - 1}\right|\right) \\
		&= B \cdot \frac{2(|\gYhat| - 1)(e^{\eps} - e^{\eps_2})}{(e^{\eps} + |\gYhat| - 1)(e^{\eps_2} + |\gYhat| - 1)} \\
		&\leq 2B \cdot (1 - e^{\eps_2 - \eps}) \\
		&\leq 2B(\eps - \eps_2) \\
		&= 2B\eps_1.
	\end{align*}
	
	Recall that $\RRonBins^{\Phi'}_{\eps_2}$ is an $\eps_2$-DP optimal mechanism for prior $P'$. That is, we have
	\begin{align*}
		\gL(\RRonBins^{\Phi'}_{\eps_2}; P') \leq \gL(\RRonBins^{\Phi}_{\eps_2}; P').
	\end{align*}
	
	Finally, we also have%
	\begin{align*}
		|\gL(\RRonBins^{\Phi}_{\eps_2}; P) - \gL(\RRonBins^{\Phi}_{\eps_2}; P')| &~\leq~ B \cdot \|P' - P\|_1.\\
		|\gL(\RRonBins^{\Phi'}_{\eps_2}; P) - \gL(\RRonBins^{\Phi'}_{\eps_2}; P')| &~\leq~ B \cdot \|P' - P\|_1.
	\end{align*}
	
	Combining the above five inequalities, we arrive at
	\begin{align*}
		\E_{\substack{y_1, \dots, y_n \sim P \\ P', \Phi', \gYhat'}}[\gL(\RRonBins^{\Phi'}_{\eps_2}; P)] - \inf_{\gM} \gL(\gM; P) \leq O\left(B \cdot \left(\eps_1 + \sqrt{\frac{k}{n}} + \frac{k}{\eps_1 n}\right)\right).
	\end{align*}
	Setting $\eps_1 = \sqrt{k/n}$ then yields the desired bound\footnote{Note that this requires $n > k / \eps^2$ for the setting of $\eps_2 = \eps - \eps_1$ to be valid.}.
\end{proof}

\section{DP Mechanisms Definitions}\label{sec:dp_mech_defs}

In this section, we recall the definition of various DP notions that we use throughout the paper.

\begin{definition}[Global Sensitivity]
Let $f$ be a function taking as input a dataset and returning as output a vector in $\mathbb{R}^d$. Then, the \emph{global sensitivity} $\Delta(f)$ of $f$ is defined as the maximum, over all pairs ($X, X')$ of adjacent datasets, of $||f(X) - f(X')||_1$. 
\end{definition}

The (discrete) Laplace distribution with scale parameter $b > 0$ is denoted by $\DLap(b)$. Its probability mass function is given by $p(y) \propto \exp(- |y| / b)$ for any $y \in \mathbb{Z}$.

\begin{definition}[Discrete Laplace Mechanism] \label{def:dlap}
Let $f$ be a function taking as input a dataset $X$ and returning as output a vector in $\mathbb{Z}^d$.
The \emph{discrete Laplace mechanism} applied to $f$ on input $X$ returns $f(X) + (Y_1, \dots, Y_d)$ where each $Y_i$ is sampled i.i.d. from $\DLap(\Delta(f)/\eps)$. The output of the mechanism is $\eps$-DP.
\end{definition}

Next, recall that the (continuous) Laplace distribution $\Lap(b)$ with scale parameter $b > 0$ has probability density function given by $h(y) \propto \exp(-|y|/b)$ for any $y \in \mathbb{R}$. 
\begin{definition}[Continuous Laplace Mechanism, \cite{DworkMNS06}]\label{def:lap}
Let $f$ be a function taking as input a dataset $X$ and returning as output a vector in $\mathbb{R}^d$.
The \emph{continuous Laplace mechanism} applied to $f$ on input $X$ returns $f(X)+(Y_1, \dots, Y_d)$ where each $Y_i$ is sampled i.i.d. from $\Lap(\Delta(f)/\eps)$. The output of the mechanism is $\eps$-DP.
\end{definition}

We next define the discrete and continuous versions of the staircase mechanism \citep{geng2014optimal}.

\begin{definition}[Discrete Staircase Distribution]
\label{def:dstaircase}
Fix $\Delta \geq 2$. The \emph{discrete staircase distribution} is parameterized by an integer $1 \le r \le \Delta$ and has probability mass function given by: 
\begin{align}
\label{eq:discrete_staircase_pdf}
p_{r}(i) =
\begin{cases}
a(r) & \text{ for } 0 \le i < r, \\
e^{-\eps} a(r) & \text{ for } r \le i < \Delta \\ 
e^{-k \eps}  p_r(i - k \Delta) & \text{ for } k \Delta \le i < (k+1)\Delta \text{ and } k \in \mathbb{N}\\ 
p_r(-i) \text{ for } i < 0,
\end{cases}
\end{align}
where
\begin{equation*}
    a(r) = := \frac{1-b}{2r + 2b(\Delta-r) - (1-b)}.
\end{equation*}

Let $f$ be a function taking as input a dataset $X$ and returning as output a scalar in $\mathbb{Z}$. The \emph{discrete staircase mechanism} applied to $f$ on input $X$ returns $f(X) + Y$ where $Y$ is sampled from the discrete staircase distribution given in~(\ref{eq:discrete_staircase_pdf}).
\end{definition}

\begin{definition}[Continuous Staircase Distribution]\label{def:staircase}
The \emph{continuous staircase distribution} is parameterized by $\gamma \in (0,1)$ and has probability density function given by:
\begin{align}
\label{eq:continuous_staircase_pdf}
h_{\gamma}(x) =
\begin{cases}
a(\gamma) & \text{ for } x \in [0, \gamma \Delta) \\
e^{-\eps} a(\gamma) & \text{ for } x \in [\gamma \Delta, \Delta) \\ 
e^{-k \eps} h_{\gamma}(x-k\Delta) & \text{ for } x \in [k \Delta, (k+1)\Delta) \text{ and } k \in \mathbb{N}\\ 
h_{\gamma}(-x) \text{ for } x < 0,
\end{cases}
\end{align}
where
\begin{equation*}
    a(\gamma) = := \frac{1-e^{-\eps}}{2 \Delta(\gamma + e^{-\eps}(1-\gamma)}.
\end{equation*}

Let $f$ be a function taking as input a dataset $X$ and returning as output a scalar in $\mathbb{R}$. The \emph{continuous staircase mechanism} applied to $f$ on input $X$ returns $f(X)+Y$ where $Y$ is sampled from the continuous staircase distribution given in (\ref{eq:continuous_staircase_pdf}).
\end{definition}

\begin{definition}[Exponential Mechanism, \cite{mcsherry2007mechanism}]
Let $q(\cdot, \cdot)$ be a scoring function such that $q(X, r)$ is a real number equal to the score to be assigned to output $r$ when the input dataset is $X$. The \emph{exponential mechanism} returns a sample from the distribution that puts mass $\propto \exp(\eps q(X, r))$ on each possible output $r$. It is $(2 \eps \Delta(q))$-DP, where $\Delta(q)$ is defined as the maximum global sensitivity of $q(\cdot, r)$ over all possible values of $r$.
\end{definition}

\begin{definition}[Randomized Response, \cite{warner1965randomized}]
Let $\eps \geq 0$, and $q$ be a positive integer. The \emph{randomized response} mechanism with parameters $\eps$ and $K$ (denoted by $\text{\RRShort}_{\eps, q}$) takes as input $y \in \{1, \dots, K\}$ and returns a random sample $\ty$ drawn from the following probability distribution:
\begin{align}
\label{eq:rr-labeldp}
\Pr[\ty = \hat{y}] =
\begin{cases}
\frac{e^{\eps}}{e^{\eps} + q - 1} & \text{ for } \hat{y} = y \\
\frac{1}{e^{\eps} + q - 1} & \text{ otherwise}.
\end{cases}
\end{align}
The output of $\text{\RRShort}_{\eps, q}$ is $\eps$-DP.
\end{definition}

\section{Additional Experiment Details}
\label{sec:exp-details}

We evaluate the proposed $\RRonBins$ mechanism on three datasets, and compare with the Laplace mechanism~\citep{DworkMNS06}, the staircase mechanism~\citep{geng2014optimal} and the exponential mechanism~\citep{mcsherry2007mechanism}. For real valued labels (the Criteo Sponsored Search Conversion dataset), we use the continuous Laplace mechanism (\Cref{def:lap}) and the continuous staircase mechanism (\Cref{def:staircase}), and for integer valued labels (the US Census dataset and the App Ads Conversion Count dataset), we use the discrete Laplace mechanism (\Cref{def:dlap}) and the discrete staircase mechanism (\Cref{def:dstaircase}).

\subsection{Criteo Sponsored Search Conversion}

The Criteo Sponsored Search Conversion dataset is publicly available from  \url{https://ailab.criteo.com/criteo-sponsored-search-conversion-log-dataset/}. To predict the conversion value (\texttt{SalesAmountInEuro}), we use the following attributes as inputs:
\begin{itemize}
    \item Numerical attributes: \verb|Time_delay_for_conversion|, \verb|nb_clicks_1week|, \verb|product_price|.
    \item Categorical attributes: \verb|product_age_group|, \verb|device_type|, \verb|audience_id|,
    \verb|product_gender|, \verb|product_brand|, \verb|product_category_1| $\sim$ 
    \verb|product_category_7|, \verb|product_country|, \verb|product_id|,
    \verb|product_title|, \verb|partner_id|, \verb|user_id|.
\end{itemize}
All categorical features in this dataset have been hashed. We build a vocabulary for each feature by counting all the unique values. All the values with less than 5 occurrences are mapped to a single out-of-vocabulary item.

We randomly partition the dataset into 80\%--20\% train--test splits. For each evaluation configuration, we report the mean and std over 10 random runs. In each run, the dataset is also partitioned with a different random seed.

Our deep neural network consists of a feature extraction module and 3 fully connected layers. Specifically, each categorical attribute is mapped to a 8-dimensional feature vector using the pre-built vocabulary for each attribute. The mapped feature vectors are concatenated together with the numerical attributes to form a feature vector. Then 3 fully connected layers with the output dimension 128, 64, and 1 are used to map the feature vector to the output prediction. The ReLU activation is applied after each fully connected layer, except for the output layer.

We train the model by minimizing the mean squared error (MSE) with L2 regularization $10^{-4}$, using the RMSProp optimizer. We use learning rate 0.001 with cosine decay~\citep{loshchilov2016sgdr}, batch size 8192, and train for 50 epochs. Those hyperparameters are chosen based on a setup with minor label noise (generated via Laplace mechanism for $\eps=6$), and then fixed throughout all runs.

For the $\RRonBins$ mechanism, we use the recommended $\eps_1=\sqrt{|\mathcal{Y}|/n}$ in \Cref{thm:choose-eps} to query a private prior, and with the remaining privacy budget, optimize the mean squared loss via dynamic programming. When running \Cref{alg:dp-optimal-bins-for-rronbins}, it would be quite expensive to use $\mathcal{Y}$ as the set of all unique (real valued) training labels. So we simply discretize the labels by rounding them down to integer values, and use $\mathcal{Y}=\{0,1,\ldots,400\}$. The integer labels are then mapped via \Cref{alg:rronbins}.

\subsection{US Census}
The 1940 US Census can be downloaded from \url{https://www.archives.gov/research/census/1940}. We predict the time that the respondent worked during the previous year (the \texttt{WKSWORK1} field, measured in  number of weeks), based on the following input fields: the gender (\texttt{SEX}), the age (\texttt{AGE}), the marital status (\texttt{MARST}), the number of children ever born to each woman (\texttt{CHBORN}), the school attendance (\texttt{SCHOOL}), the employment status (\texttt{EMPSTAT}), the primary occupation (\texttt{OCC}), and the type of industry in which the person performed an occupation (\texttt{IND}). We use only 50,582,693 examples with non-zero \texttt{WKSWORK1} field, and randomly partition the dataset into 80\%/20\% train/test splits. For each evaluation configuration, we report the mean and std over 10 random runs. In each run, the dataset is also partitioned with a different random seed.

Our deep neural network consists of a feature extraction module and 3 fully connected layers. The feature extraction module can be described via the following pseudocode, where the vocabulary size is chosen according to the value range of each field in the 1940 US Census documentation, and the embedding dimension for all categorical features are fixed at 8.

\begin{verbatim}
Features = Concat([
    Embedding{vocab_size=2}(SEX - 1),
    AGE / 30.0,
    Embedding{vocab_size=6}(MARST - 1),
    Embedding{vocab_size=100}(CHBORN),
    Embedding{vocab_size=2}(SCHOOL - 1),
    Embedding{vocab_size=4}(EMSTAT),
    Embedding{vocab_size=1000}(OCC),
    Embedding{vocab_size=1000}(IND),
])
\end{verbatim}

The features vectors are mapped to the output via 3 fully connected layers with the output dimension 128, 64, and 1. The ReLU activation is applied after each fully connected layer, except for the output layer.

We train the model by minimizing the MSE with L2 regularization $10^{-4}$, using the RMSProp optimizer.
We use learning rate 0.001 with cosine decay~\citep{loshchilov2016sgdr}, batch size 8192, and train for 200 epochs. For the $\RRonBins$ mechanism, we use the recommended $\eps_1=\sqrt{|\mathcal{Y}|/n}$ in \Cref{thm:choose-eps} to query a private prior, and with the remaining privacy budget, optimize the mean squared loss via dynamic programming. 

\subsection{App Ads Conversion Count Prediction}
\label{app:details-pevent}

The app install ads prediction dataset is collected from a commercial mobile app store. The examples in this dataset are ad clicks, and each label counts post-click events (aka conversions) happening in the app after the user installs the app. 

The neural network models used here is similar to the models used in other experiments: embedding layers are used to map categorical input attributes to dense feature vectors, and then the concatenated feature vectors are passed through several fully connected layers to generate the prediction. We use the Adam optimizer~\citep{kingma2014adam} and the Poisson regression loss~\citep{cameron2013regression} for training. For the $\RRonBins$ mechanism, we use the recommended $\eps_1=\sqrt{|\mathcal{Y}|/n}$ in \Cref{thm:choose-eps} to query a private prior, and with the remaining privacy budget, optimize the mean squared error via dynamic programming. Following the convention in event count prediction in ads prediction problems, we train the loss with the Poisson log loss. We report the relative Poisson log loss on the test set with respect to the non-private baseline.

\section{Label Clipping}
The Laplace mechanism and staircase mechanism we compared in this paper could both push the randomized labels out of the original label value ranges. This issue is especially severe for small $\eps$'s, as the range of randomized labels could be orders of magnitude wider than the original label value range. Since the original label value range is known (required for deciding the noise scale of each DP mechanism), we could post process the randomized labels by clipping to this range.

\begin{figure}
    \centering
    \begin{subfigure}[b]{0.50\textwidth}
         \centering
         \includegraphics[width=\textwidth]{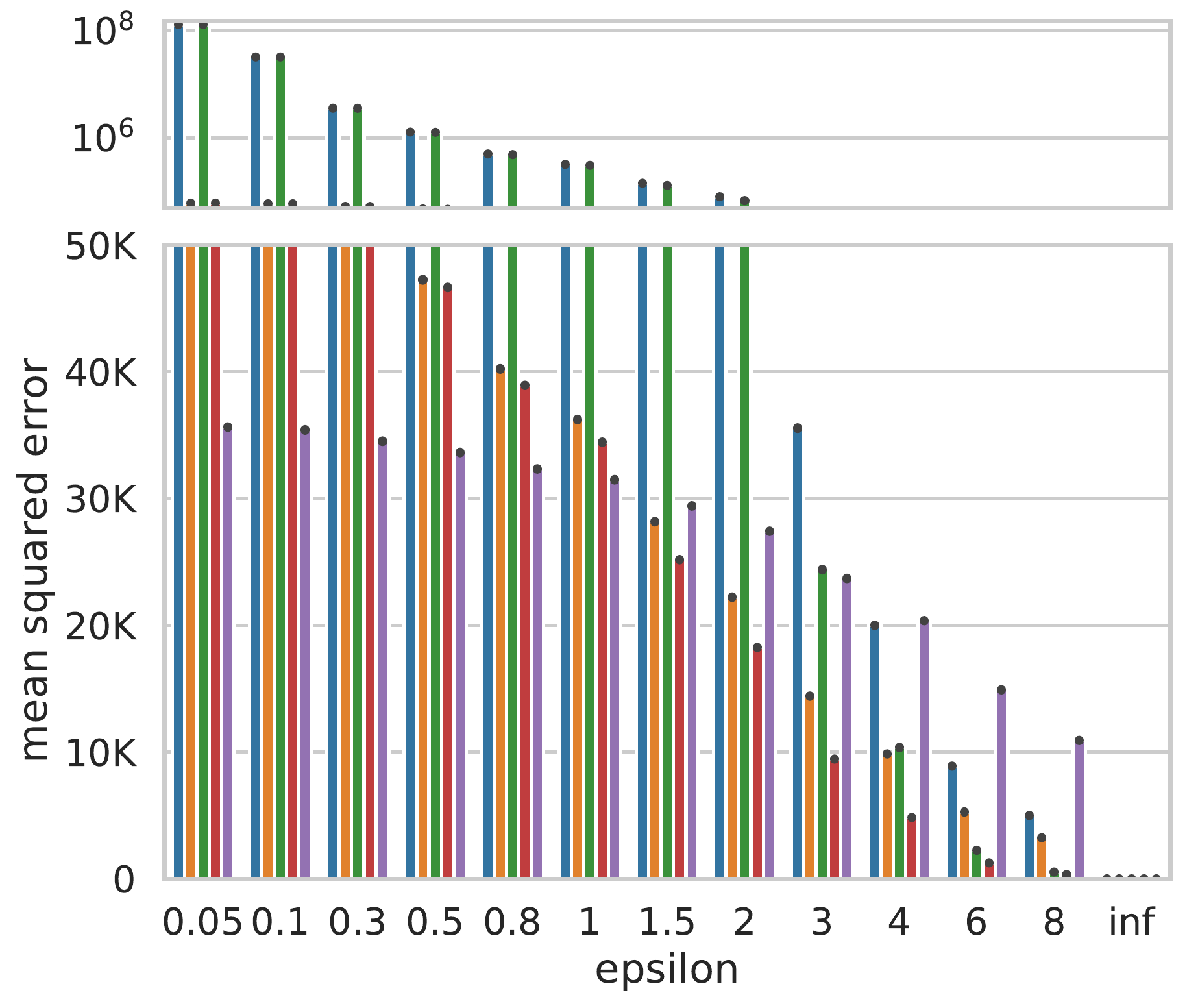}
         \caption{Mechanism}
     \end{subfigure}
     \hfill
    \begin{subfigure}[b]{0.48\textwidth}
         \centering
         \includegraphics[width=\textwidth]{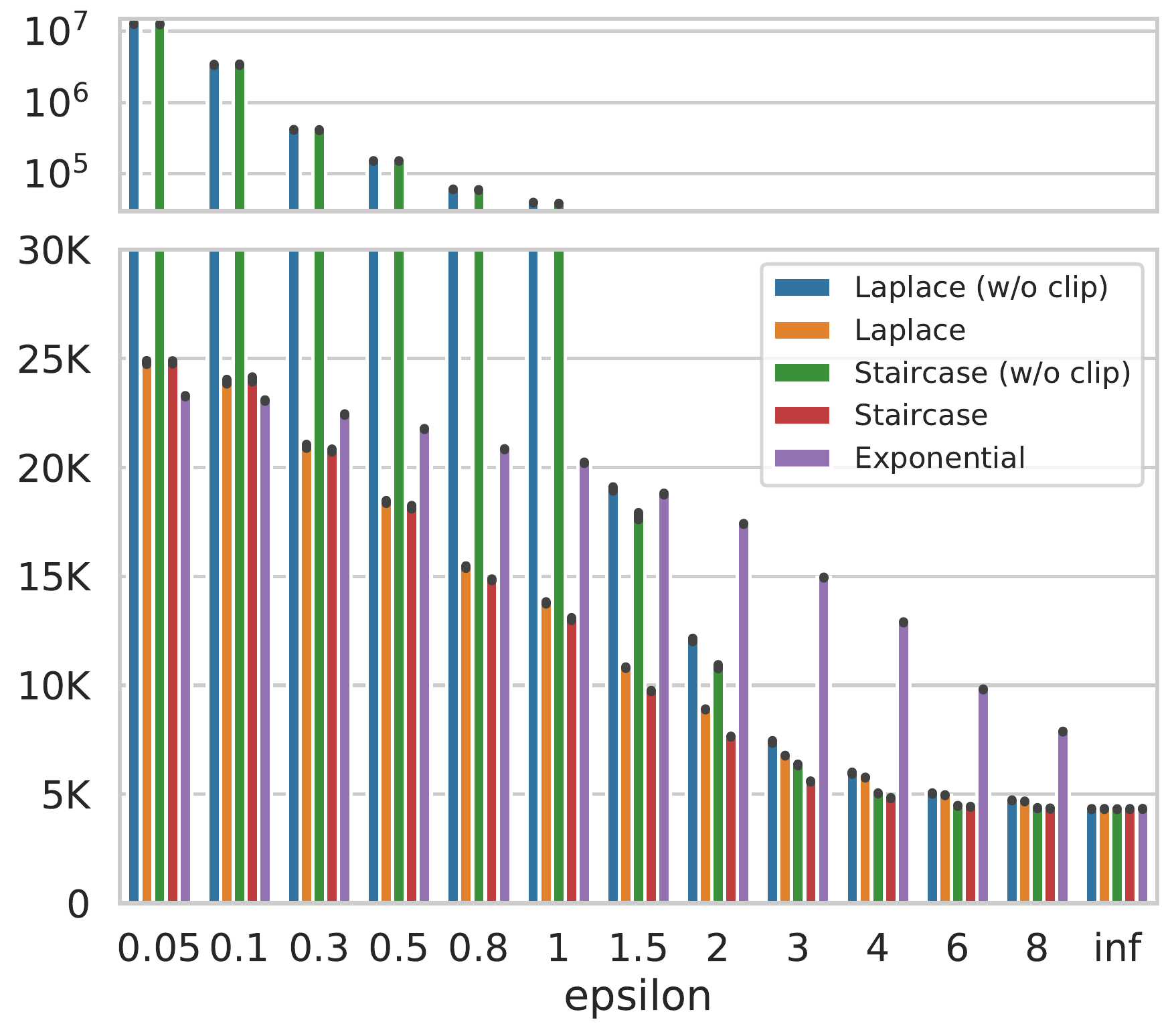}
         \caption{Generalization}
     \end{subfigure}
    \caption{MSE on the Criteo dataset, with or without postprocess clipping. (a) measures the error introduced by each DP randomization mechanism on the \emph{training} labels. (b) measures the \emph{test} error of the models trained on the corresponding private labels.}
    \label{fig:criteo-mse-clip}
\end{figure}

We compare the effects of clipping for both Laplace mechanism and staircase mechanism on the Criteo dataset in \Cref{fig:criteo-mse-clip}. In both case, this simple mitigation helps reduce the error significantly, especially for smaller epsilons. So in all the experiments of this paper, we always apply clipping to the randomized labels.

\Cref{fig:criteo-mse-clip} also shows the exponential mechanism~\citep{mcsherry2007mechanism}, which is equivalent to restricting the Laplace noises to the values that would not push the randomized labels out of the original value range. In our case, we implement it via rejection sampling on the standard Laplace mechanism. This algorithm avoid the boundary artifacts of clipping, but to guarantee the same level of privacy, the noise needs to scale with $2/\eps$, instead of $1/\eps$ as in the vanilla Laplace mechanism. As a result, while it could be helpful in the low epsilon regime, in moderate to high $\eps$ regime, it is noticeably worse than all other methods, including the vanilla Laplace mechanism due to the $2\times$ noise scaling with $1/\eps$.

\section{Comparison with \texttt{RRWithPrior}}

In this paper, we extended the formulation of \citet{ghazi2021deep} from classification to regression losses. Here we provide a brief comparison between our algorithm ($\RRonBins$) and the \texttt{RRWithPrior} algorithm from \citet{ghazi2021deep} on the US Census dataset. Instead of using multi-stage training, we fit \texttt{RRWithPrior} in our feature-oblivious label DP framework, and use the same privately queried global prior as the input to both $\RRonBins$ and \texttt{RRWithPrior}. For \texttt{RRWithPrior}, we simply treat each of the integer label values in US Census as an independent category during the randomization stage. Once the private labels are obtained, the training stage is identical for both algorithms. The results are shown in \Cref{tab:vs-rr-with-prior}. Note the results are for reference purposes, as this is not a fair comparison. Because \texttt{RRWithPrior} was designed for classification problems~\citep{ghazi2021deep}, which ignore the similarity metrics between different labels, while $\RRonBins$ takes that into account.  

\begin{table}[]
\centering
\begin{tabular}{c : S[table-format = 5.2] S[table-format = 5.2] : S[table-format = 5.2] S[table-format = 5.2]@{\hspace{15pt}}}
\toprule
\multirow[b]{2}{*}{\thead{Privacy\\ Budget}}&\multicolumn{2}{c}{\thead[b]{MSE (Mechanism)}} & \multicolumn{2}{c}{\thead[b]{MSE (Generalization)}}\\
\cmidrule(lr){2-3}\cmidrule(lr){4-5}
& {\texttt{RRWithPrior}} & {$\RRonBins$} & {\texttt{RRWithPrior}} & {$\RRonBins$} \\
\midrule
0.5 & 274.11 & 179.88 & 273.97 & 183.28\\
1.0 & 274.11 & 159.49 & 273.97 & 172.64\\
2.0 & 274.11 & 107.89 & 273.97 & 152.03\\
4.0 & 138.00 & 36.43 & 145.89 & 136.59\\
8.0 & 11.58 & 2.54 & 134.26 & 134.35\\
\hdashline
$\infty$ & 0.00 & 0.00 & 134.27 & 134.27\\
\bottomrule
\end{tabular}
\caption{MSE on the US Census dataset, comparing $\RRonBins$ with \texttt{RRWithPrior}. The first block (Mechanism) measures the 
	error introduced by the DP randomization mechanisms on the training labels. The second
	block (Generalization) measures the test error of the models trained on the corresponding private labels. Note \texttt{RRWithPrior} was designed for classification loss~\citep{ghazi2021deep}, here we just ignore the numerical similarity metric and treat each integer label as a separate category.}
\label{tab:vs-rr-with-prior}
\end{table}

\end{document}